\documentclass{article}
\usepackage{colortbl}  % for \rowcolor
\usepackage{xcolor}    % for \textcolor
\usepackage{microtype}
\usepackage{graphicx}
\usepackage{subcaption}
\usepackage{booktabs} % for professional tables
\usepackage{amsthm}
\newtheorem*{theorem*}{Theorem}
\newtheorem*{proposition*}{Proposition}
\usepackage{subcaption}
\usepackage{bbm}
\usepackage{pifont}
\usepackage{enumitem}
\usepackage{fontawesome5}  
\usepackage{hyperref}      
\usepackage[accepted]{icml2026}

\usepackage{amsmath}
\usepackage{amssymb}
\usepackage{mathtools}
\usepackage{amsthm}

\usepackage[capitalize,noabbrev]{cleveref}

\theoremstyle{plain}
\newtheorem{theorem}{Theorem}[section]
\newtheorem{proposition}[theorem]{Proposition}

\theoremstyle{definition}

\newtheorem{assumption}[theorem]{Assumption}
\theoremstyle{remark}
\newtheorem{remark}[theorem]{Remark}

\usepackage[textsize=tiny]{todonotes}

\icmltitlerunning{GRACE: Gated Relational Alignment for Vision-Language Models}

\begin{document}

\twocolumn[
  % \icmltitle{GRACE: Gated Relational Alignment via  Confidence-based Distillation for Efficient Vision-Language Models}

  \icmltitle{Gated Relational Alignment via  Confidence-based Distillation for Efficient VLMs}

  % It is OKAY to include author information, even for blind submissions: the
  % style file will automatically remove it for you unless you've provided
  % the [accepted] option to the icml2026 package.

  % List of affiliations: The first argument should be a (short) identifier you
  % will use later to specify author affiliations Academic affiliations
  % should list Department, University, City, Region, Country Industry
  % affiliations should list Company, City, Region, Country

  % You can specify symbols, otherwise they are numbered in order. Ideally, you
  % should not use this facility. Affiliations will be numbered in order of
  % appearance and this is the preferred way.
  \icmlsetsymbol{equal}{*}

    \begin{icmlauthorlist}
        \icmlauthor{Yanlong Chen}{eth,ntu}
        \icmlauthor{Amirhossein Habibian}{qualcomm}
        \icmlauthor{Luca Benini}{eth,bologna}
        \icmlauthor{Yawei Li}{eth,ntu}
    \end{icmlauthorlist}
    
    \icmlaffiliation{eth}{Department of Information Technology and Electrical Engineering, ETH Zurich, Zurich, Switzerland}
    \icmlaffiliation{qualcomm}{Qualcomm AI Research, Amsterdam, the Netherlands}
    \icmlaffiliation{bologna}{Department of Electrical, Electronic and Information Engineering, University of Bologna, Bologna, Italy}
    \icmlaffiliation{ntu}{School of Electrical and Electronic Engineering, Nanyang Technological University, Singapore}
    
    \icmlcorrespondingauthor{Yawei Li}{li.yawei.ai@gmail.com}
    
    \icmlkeywords{Computer Vision, Vision-Language Models, Knowledge Distillation, Model Compression, Foundation Model}
    \vskip 0.3in
    ]

% this must go after the closing bracket ] following \twocolumn[ ...

% This command actually creates the footnote in the first column listing the
% affiliations and the copyright notice. The command takes one argument, which
% is text to display at the start of the footnote. The \icmlEqualContribution
% command is standard text for equal contribution. Remove it (just {}) if you
% do not need this facility.

% Use ONE of the following lines. DO NOT remove the command.
% If you have no special notice, KEEP empty braces:
\printAffiliationsAndNotice{}  % no special notice (required even if empty)
% Or, if applicable, use the standard equal contribution text:
% \printAffiliationsAndNotice{\icmlEqualContribution}

\begin{abstract}
    Vision-Language Models (VLMs) deliver strong multimodal performance, but are costly to deploy, with post-training quantization often causing significant accuracy degradation. Despite its potential, quantization-aware training (QAT) for VLMs remains underexplored. We propose GRACE, a framework that unifies knowledge distillation and QAT under the Information Bottleneck principle, where quantization constrains information capacity and distillation determines what to preserve. To achieve optimal distillation, we adopt a student-teacher formulation, where the teacher is treated as a proxy for task-relevant information. We introduce confidence-gated decoupled distillation to filter unreliable supervision, relational-centered kernel alignment to transfer visual token structures, and an adaptive controller to balance fidelity with capacity constraints. Extensive evaluations of the LLaVA and Qwen benchmarks show that our INT4 models consistently outperform the BF16 baselines (e.g., LLaVA-1.5-7B: 70.1 vs. 66.8 on SQA; Qwen2-VL-2B: 76.9 vs. 72.6 on MMBench), closely matching teacher performance. With real INT4 kernels, we achieve 3$\times$ throughput and 54\% memory reduction. Code and data are available at: \href{https://github.com/ForeverBlue816/GRACE}{\faGithub\ ForeverBlue816/GRACE}.
\end{abstract}

\section{Introduction}

Vision-language models (VLMs) have become a central paradigm for multimodal intelligence, demonstrating strong capabilities across visual question answering, complex scene understanding, image-text reasoning, vision-language-action systems, and broader multimodal visual applications~\cite{bai2023qwen,liu2023visual,liu2024improved,kawaharazuka2025vision,liang2025multi}. However, these capabilities come with substantial computational cost. Modern VLMs typically contain billions of parameters and require large memory bandwidth, making them difficult to deploy on resource-constrained platforms. This motivates low-bit compression methods that can reduce inference cost while preserving the reasoning and perception ability of full-precision models.

\begin{figure}[!t]
    \centering
    \vspace{-0.5em}
    \includegraphics[width=\linewidth]{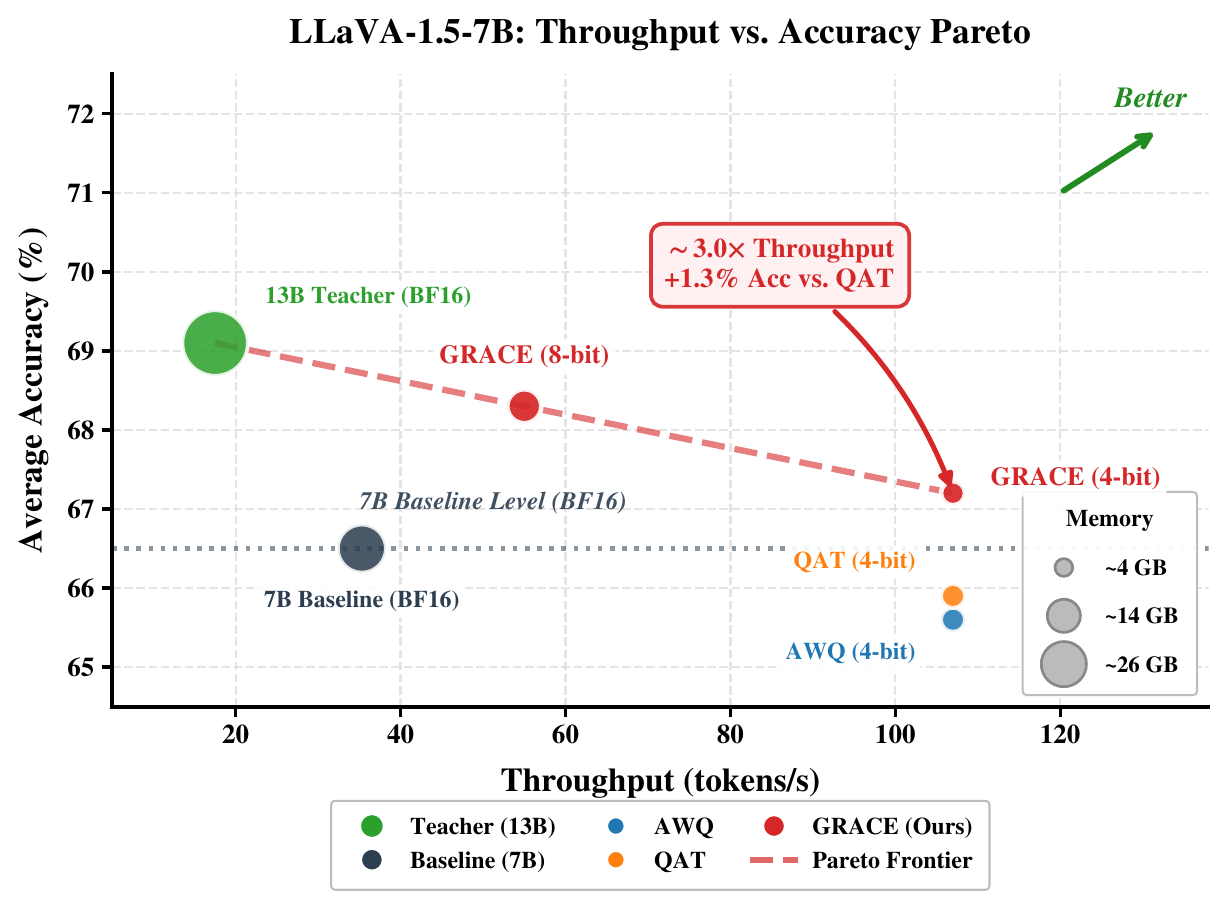}
    \vspace{-0.8em}
    \caption{Throughput-accuracy Pareto analysis on LLaVA-1.5-7B. Bubble size indicates GPU memory footprint. \textbf{GRACE} achieves higher throughput than BF16 baselines while maintaining stronger accuracy than existing quantization methods.}
    \label{fig:pareto_bubble}
    \vspace{-0.8em}
\end{figure}

Quantization is one of the most effective tools for efficient deployment. Post-training quantization (PTQ) is attractive because it requires little or no retraining~\cite{frantar2022gptq,shao2023omniquant,lin2024awq}, but aggressive INT4 quantization often causes severe degradation in VLMs due to heterogeneous multimodal distributions and cross-modal sensitivity~\cite{wang2024q,li2025mbq}. Quantization-aware training (QAT) provides a stronger alternative by exposing the model to low-precision constraints during optimization. Recent studies in language and vision models suggest that accurate quantization requires not only preserving task performance but also explicitly managing quantization-induced errors~\cite{liu2024llm,liuparetoq,tian2025qgait}. Nevertheless, QAT for VLMs remains underexplored, since VLMs combine visual representations, multimodal projection, and language decoding, each with different sensitivity to low-bit perturbations~\cite{sun2024p4q,jin2025efficient}.

Meanwhile, knowledge distillation has emerged as a powerful tool for compressing VLMs~\cite{cao2025move,lee2025masking}. A natural idea is to combine KD with QAT, using a strong teacher to guide the low-bit student. However, this combination is non-trivial. First, teacher predictions are not uniformly reliable: high-entropy or biased predictions may introduce noisy supervision. Second, a quantized student has limited representational capacity and cannot faithfully absorb all teacher information. This issue is especially important for VLMs, where predictions can be affected by prior preferences, class-level confusion, and prompt-induced uncertainty~\cite{tian2026nerp}. Therefore, effective QAT-based VLM compression requires a mechanism that identifies which teacher information is worth preserving under a strict bit budget.

We interpret this problem through the Information Bottleneck (IB) principle~\cite{tishby2000information,tishby2015deep}. Quantization imposes a hard capacity constraint by reducing representational precision, while distillation provides dense supervision about task-relevant information. From this perspective, VLM compression becomes a capacity allocation problem: the quantized student must retain high-value teacher knowledge while discarding noisy or redundant information. Standard QAT relies mainly on task losses, which are often too sparse to guide this allocation, especially for multimodal tasks where supervision is distributed across visual tokens, text tokens, and answer logits. We therefore use the teacher as a proxy for task relevance and dynamically regulate distillation according to confidence and information preservation.

Building upon this insight, we propose \textbf{GRACE} (\textbf{G}ated \textbf{R}elational \textbf{A}lignment via \textbf{C}onfidence-based Distillation for \textbf{E}fficient VLMs), a unified framework for QAT-based VLM compression. GRACE consists of three complementary components: (i)~\textit{confidence-gated decoupled knowledge distillation}, which suppresses noisy supervision from uncertain teacher predictions; (ii)~\textit{relational centered kernel alignment}, which transfers visual-token relational structure rather than point-wise features; and (iii)~an \textit{adaptive information-bottleneck controller}, which dynamically balances teacher guidance with the student's quantized capacity. For low-bit training, GRACE further employs group-wise learned step-size quantization.

Extensive experiments on LLaVA-1.5 and Qwen2-VL demonstrate the effectiveness of GRACE. Our distilled LLaVA-1.5-7B achieves 69.0\% average accuracy, improving the 7B baseline by \textbf{3.8\%} and nearly matching the 13B teacher. More importantly, as shown in Figure~\ref{fig:pareto_bubble}, our INT4 Qwen2-VL-2B not only recovers but surpasses the full-precision baseline across all benchmarks, such as \textbf{79.1 vs.\ 73.7} on ScienceQA. GRACE also delivers significant inference speedup and memory reduction when deployed with real INT4 kernels.

Our contributions are summarized as follows:
\begin{itemize}[leftmargin=1.2em, itemsep=0.2em, topsep=0.2em]
    \item We formulate QAT-based VLM compression from an Information Bottleneck perspective, connecting low-bit quantization, capacity allocation, and teacher-guided knowledge preservation.
    \item We propose GRACE, a unified framework that combines confidence-gated distillation, relational CKA, adaptive IB control, and group-wise learned step-size quantization for efficient VLM compression.
    \item We show that GRACE enables INT4-quantized VLMs to surpass BF16 baselines while achieving substantial inference speedup and memory reduction.
\end{itemize}

\paragraph{Conflict of Interest Disclosure.}
All authors are affiliated with academic institutions (ETH Zurich and University of Bologna) and declare no financial conflicts of interest; the models evaluated in this work (LLaVA-1.5 and Qwen2-VL) are publicly released open-source models.

\begin{figure}[!ht]
    \centering
    \vspace{-0.4em}
    \includegraphics[width=\linewidth]{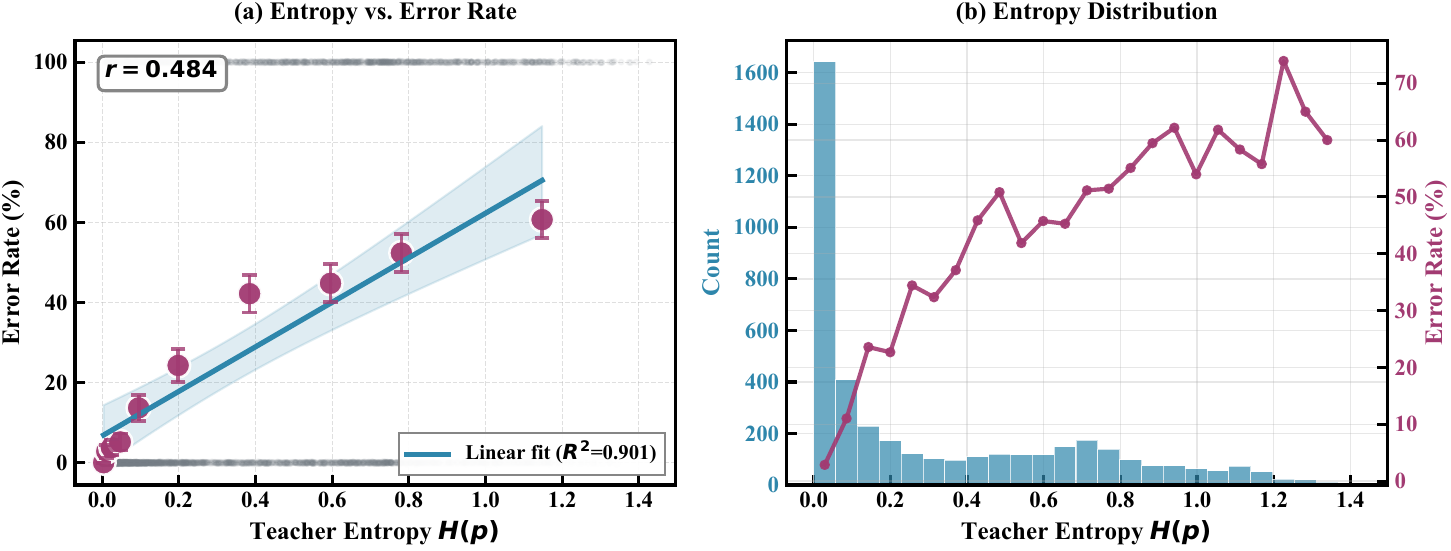}
    \vspace{-0.8em}
    \caption{Correlation between teacher entropy and error rate on ScienceQA (LLaVA-1.5 13B). Higher entropy consistently corresponds to higher prediction error, supporting entropy as a confidence signal for gated distillation.}
    \label{fig:entropy_error}
    \vspace{-0.8em}
\end{figure}

\begin{figure}[!ht]
    \centering
    \vspace{-0.4em}
    \includegraphics[width=\linewidth]{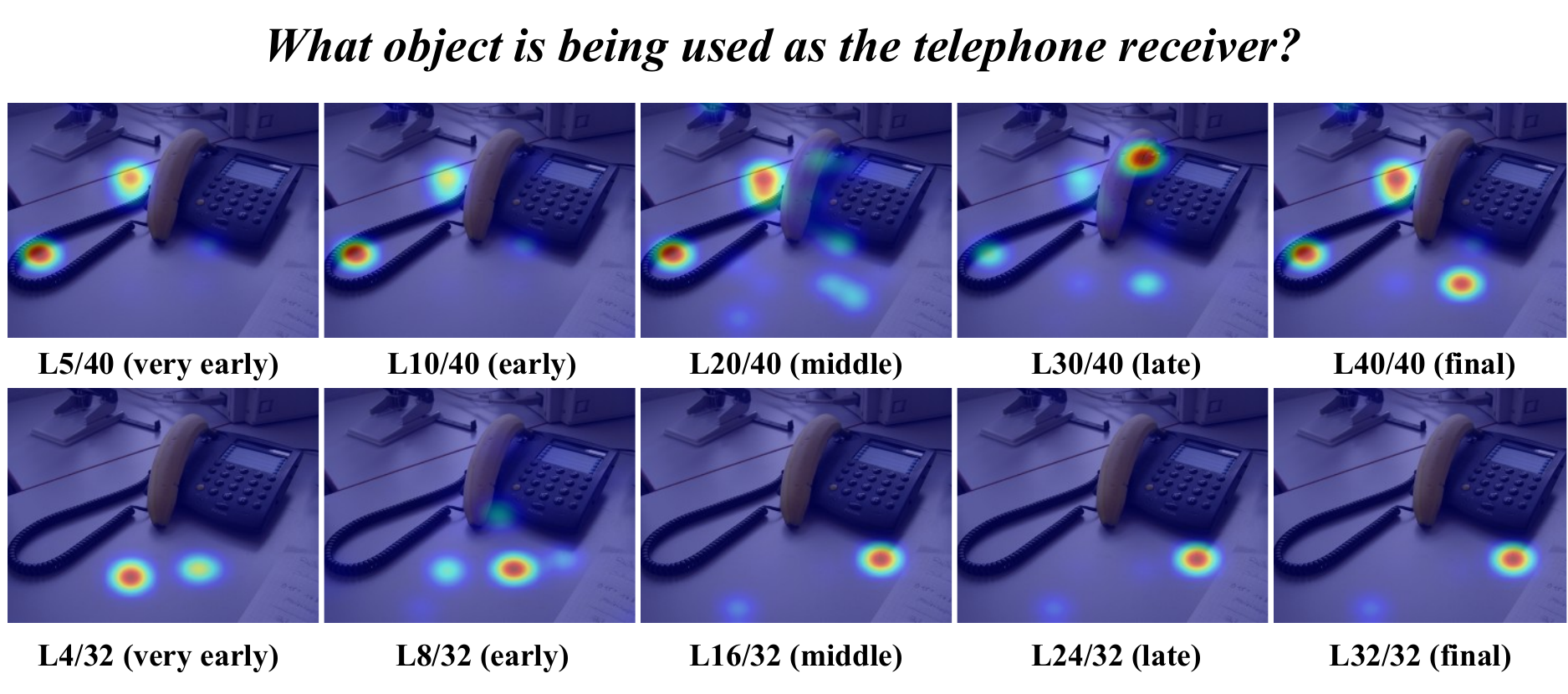}
    \vspace{-0.8em}
    \caption{Multi-layer attention visualization of LLaVA-1.5 13B (top) and 7B (bottom). Given the question ``What object is being used as the telephone receiver?'', the 13B model progressively localizes the banana, while the 7B model shows scattered attention.}
    \label{fig:multi_layer}
    \vspace{-0.8em}
\end{figure}

\section{Motivation}
\label{sec:motivation}

Before presenting our method, we conduct empirical analyzes to identify key challenges in distilling VLMs.

%=============================================================================
\subsection{Teacher Confidence and Supervision Quality}
\label{sec:motivation_confidence}
Standard knowledge distillation treats all teacher predictions equally, 
implicitly assuming uniform supervision quality across tokens. However, 
we hypothesize that teacher predictions vary significantly in reliability: 
uncertain outputs may introduce noise rather than valuable knowledge. 
To test this, we examine the relationship between teacher confidence and 
prediction accuracy in the ScienceQA data set using LLaVA‑1.5 13B as the 
teacher. We measure uncertainty by the entropy of the output distribution 
\(H(P_T) = -\sum_v P_T(v)\log P_T(v)\). As shown in Figure~\ref{fig:entropy_error}, 
we observe a significant positive correlation between teacher entropy and 
error rate at the sample level (Pearson \(r = 0.484\)). When aggregating 
samples into deciles of entropy, linear regression on binned error rates 
yields \(R^2 = 0.901\), which confirms a strong monotonic relationship. This 
empirical observation is consistent with Fano's inequality, which theoretically 
links higher entropy to an elevated lower bound on error probability~\cite{verdu1994generalizing}, motivating our \textbf{confidence-gated 
distillation} mechanism (Section~\ref{sec:cgdkd}).

%=============================================================================
\begin{figure*}[!ht]
    \centering
    \includegraphics[width=\textwidth]{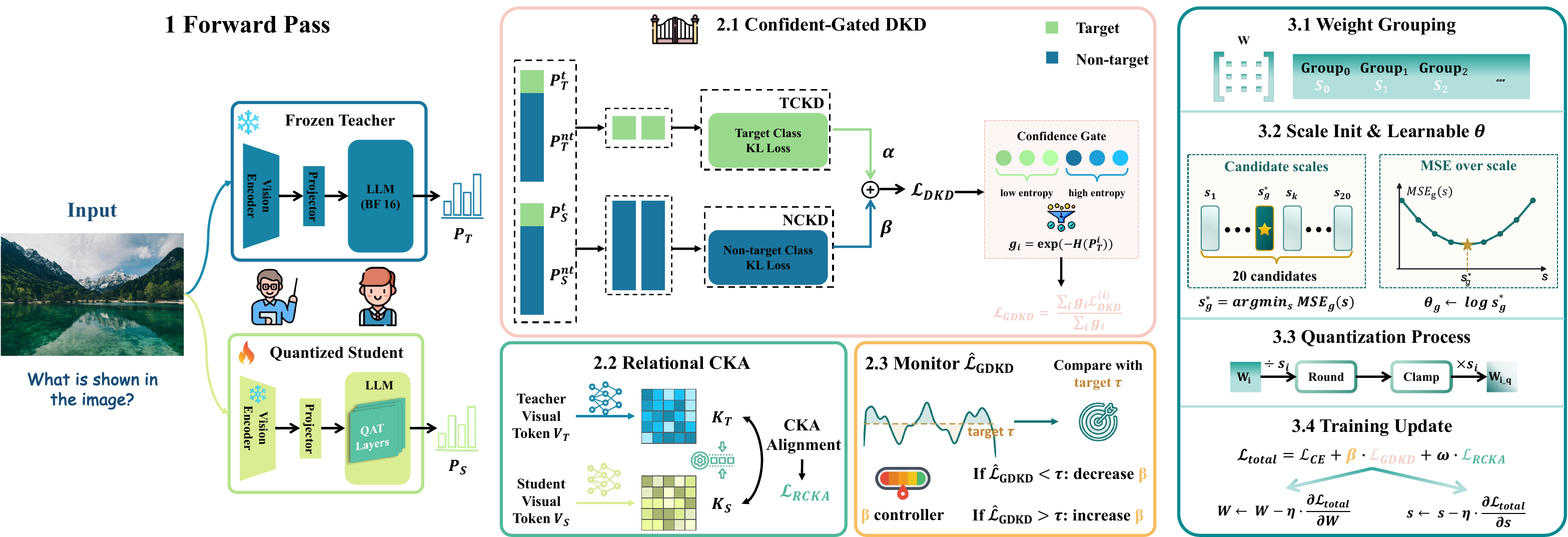}
    \caption{Overview of the GRACE framework. A frozen teacher and a quantization-aware student jointly process each input. The student receives three complementary supervisory signals: (i) \textbf{Confidence-Gated DKD}, which decomposes distillation into target-class and non-target-class components and weights token-level supervision by teacher confidence; (ii) \textbf{Relational CKA}, which aligns centered kernel matrices $K_T$ and $K_S$ of visual tokens at the penultimate LLM layer while excluding text tokens; and (iii) an \textbf{Adaptive IB Controller}, which monitors the EMA-smoothed gated distillation loss $\widehat{\mathcal{L}}_{\mathrm{GDKD}}$ and adaptively adjusts the distillation strength $\beta$. For QAT, the student weights are partitioned into groups with independent learnable scales. Each scale is initialized by MSE-based grid search, parameterized as a log-scale $\theta_g=\log s_g$, and used in round-and-clamp fake quantization. During training, both weights $W$ and quantization scales $s$ are jointly optimized under $\mathcal{L}_{\mathrm{total}}=\mathcal{L}_{\mathrm{CE}}+\beta \cdot \mathcal{L}_{\mathrm{GDKD}}+\omega \cdot \mathcal{L}_{\mathrm{RCKA}}$.}
    \label{fig:pipeline}
\end{figure*}

\subsection{Visual Attention Mismatch Across Model Scales}
\label{sec:motivation_attention}
Although logit-based distillation transfers output predictions, larger teacher models may possess superior visual reasoning capabilities not captured at the output level. To investigate, we visualize attention patterns of LLaVA-1.5 13B and 7B models across layer depths. Given an image where a banana is used as a telephone receiver, we ask ``What object is being used as the telephone receiver?''

As shown in Figure~\ref{fig:multi_layer}, a striking mismatch emerges: the teacher progressively refines its attention, successfully localizing the banana in later layers. In contrast, the student displays scattered attention across all layers, failing to identify the region relevant to the task. This reveals that the teacher's superior visual reasoning comes from its ability to develop semantically meaningful attention patterns,a capability that logit-based distillation alone cannot transfer, motivating our \textbf{Relational Centered Kernel Alignment} loss (Section~\ref{sec:rcka}).

\section{Method}
\label{sec:method}
\subsection{Overview}

Quantization restricts the capacity of the model to a fixed bit budget, forcing the network to prioritize which information to retain. We present GRACE, a framework that formulates this capacity allocation problem through the lens of the Information Bottleneck principle: the quantized student must retain task-relevant knowledge from the teacher while discarding redundant information that cannot be faithfully represented under bit-width constraints. As illustrated in Figure~\ref{fig:pipeline}, a frozen teacher and a trained quantized student process the same input to produce output distributions $P_T$ and $P_S$, respectively. The student is trained using a combination of cross-entropy loss, confidence-gated decoupled distillation loss, and relational alignment loss, with the distillation weight dynamically adjusted by an adaptive IB controller. We provide a detailed description of each component in the following.

%=============================================================================
\subsection{Confidence-Gated Decoupled Knowledge Distillation}
\label{sec:cgdkd}

Standard knowledge distillation~\cite{hinton2015distilling} minimizes the KL divergence between teacher and student output distributions. However, this approach treats all teacher predictions equally, ignoring the varying reliability of teacher supervision across different tokens. We address this limitation through two mechanisms: decoupled knowledge distillation and confidence-based gating.

\textbf{Decoupled Knowledge Distillation.}
Following~\cite{zhao2022decoupled}, we decompose the distillation loss into two components that capture distinct aspects of the teacher's knowledge. Let $P_T$ and $P_S$ denote the probability distributions of the teacher and student on the vocabulary, and let $y$ be the ground-truth label. We define the following two components of the distillation loss:

\begin{itemize}[leftmargin=1.2em, itemsep=0.2em, topsep=0.2em]
\vspace{-2mm}
    \item \textbf{Target Class Knowledge Distillation (TCKD):} This component captures the teacher's confidence in the correct answer by comparing binary distributions over the target versus non-target classes:
    \begin{equation}
        \mathcal{L}_{\text{TCKD}} = D_{\text{KL}}\left( [P_T^t, 1-P_T^t] \,\|\, [P_S^t, 1-P_S^t] \right)
    \end{equation}
    where $P_T^t = P_T(y)$ and $P_S^t = P_S(y)$ are the probabilities assigned to the target class.

    \item \textbf{Non-target Class Knowledge Distillation (NCKD):} This term transfers the ``dark knowledge'' embedded in the teacher's distribution over incorrect classes:
    \begin{equation}
        \mathcal{L}_{\text{NCKD}} = D_{\text{KL}}\left( \hat{P}_T^{\text{nt}} \,\|\, \hat{P}_S^{\text{nt}} \right)
    \end{equation}
    where $\hat{P}^{\text{nt}}$ denotes the renormalized distribution over non-target classes.
\vspace{-2mm}
\end{itemize}

The per-token \textit{Decoupled Knowledge Distillation} (DKD) loss combines these components:
\begin{equation}
    \mathcal{L}_{\text{DKD}}^{(i)} = \alpha \cdot \mathcal{L}_{\text{TCKD}}^{(i)} + \beta_{\text{dkd}} \cdot \mathcal{L}_{\text{NCKD}}^{(i)}
    \label{eq:dkd_per_token}
\end{equation}
where $i$ indexes tokens and $\alpha$, $\beta_{\text{dkd}}$ are weighting coefficients. TCKD captures the teacher's confidence in the ground-truth token, while NCKD transfers the relational structure among all other tokens in the vocabulary. Following~\cite{zhao2022decoupled}, we set $\beta_{\text{dkd}} > \alpha$ to emphasize the rich ``dark knowledge'' encoded in non-target distributions, which has been shown to be more informative for knowledge transfer than target-class probabilities alone.

\textbf{Confidence-Based Gating.}
As demonstrated in Section~\ref{sec:motivation_confidence}, teacher entropy exhibits a strong correlation with prediction errors, indicating that high-entropy outputs constitute unreliable supervision signals. Motivated by this observation, we introduce a confidence-based gating mechanism that adaptively modulates the distillation loss according to teacher certainty.

For each token $i$, we compute the entropy of the teacher's output distribution:
\begin{equation}
    H_i = H(P_T^{(i)}) = -\sum_v P_T^{(i)}(v) \log P_T^{(i)}(v)
    \label{eq:entropy}
\end{equation}
We normalize this entropy as $\tilde{h}_i = H_i / \log |V| \in [0,1]$, where $|V|$ denotes the vocabulary size. The confidence weight for token $i$ is then defined as:
\begin{equation}
    g_i = \exp\left( - \tilde{h}_i \right)
    \label{eq:confidence_gate}
\end{equation}
This exponential formulation assigns high weights to confident teacher predictions (low entropy) while suppressing noisy supervision from uncertain predictions (high entropy). The gated DKD loss aggregates per-token losses with confidence weighting:
\begin{equation}
    \mathcal{L}_{\text{GDKD}} = \frac{\sum_{i} g_i \cdot \mathcal{L}_{\text{DKD}}^{(i)}}{\sum_{i} g_i}
    \label{eq:gdkd}
\end{equation}
where the summation is over all valid tokens in the batch.

\textbf{Information-Theoretic Justification.}
The gated distillation loss admits a principled interpretation under the Information Bottleneck framework. By defining the importance-reweighted distribution:
\begin{equation}
    p_g(i) \triangleq \frac{g_i}{\sum_j g_j}
\end{equation}
we can express $\mathcal{L}_{\text{GDKD}} = \mathbb{E}_{p_g}[\mathcal{L}_{\text{DKD}}^{(i)}]$. This reveals that confidence gating implements a \emph{bottleneck on the supervision signal}, allocating distillation capacity towards tokens where the teacher posterior is sharp (low entropy), i.e., where the teacher provides higher information content.

\begin{theorem}[Effect of Confidence Gating]
\label{thm:gate_effect}
Let $w_i = g_i / \sum_j g_j$ denote the normalized weights. The gated loss satisfies the following:
\begin{equation}
    \mathcal{L}_{\text{GDKD}} = \bar{\mathcal{L}}_{\text{DKD}} + N \cdot \mathrm{Cov}\left(w_i, \mathcal{L}_{\text{DKD}}^{(i)}\right)
    \label{eq:gate_effect}
\end{equation}
where $\bar{\mathcal{L}}_{\text{DKD}} = \frac{1}{N}\sum_i \mathcal{L}_{\text{DKD}}^{(i)}$ is the unweighted average. Since $w_i$ decreases monotonically with $\tilde{h}_i$, positive correlation between entropy and loss implies $\mathrm{Cov}(w_i, \mathcal{L}_{\text{DKD}}^{(i)}) < 0$.
\end{theorem}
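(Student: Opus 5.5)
The identity follows from a short algebraic decomposition, using the empirical covariance over the $N$ valid tokens with the $1/N$ normalization. I would write $\ell_i \triangleq \mathcal{L}_{\text{DKD}}^{(i)}$ and define
\[
\mathrm{Cov}(w_i,\ell_i) = \frac{1}{N}\sum_i (w_i-\bar w)(\ell_i-\bar{\mathcal{L}}_{\text{DKD}}) .
\]
The first observation is that the normalized weights sum to one, so $\bar w = 1/N$. Expanding the covariance gives
\[
\mathrm{Cov}(w_i,\ell_i) = \frac{1}{N}\sum_i w_i\ell_i - \bar w\,\bar{\mathcal{L}}_{\text{DKD}} .
\]
By Eq.~\eqref{eq:gdkd}, $\sum_i w_i \ell_i = \mathcal{L}_{\text{GDKD}}$. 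Substituting $\bar w = 1/N$ yields
\[
N\,\mathrm{Cov}(w_i,\ell_i) = \mathcal{L}_{\text{GDKD}} - \bar{\mathcal{L}}_{\text{DKD}} ,
\]
which rearranges to Eq.~\eqref{eq:gate_effect}. This part is routine.

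For the sign claim, I would first record monotonicity. Since $g_i=\exp(-\tilde h_i)$ is strictly decreasing in $\tilde h_i$, and the normalization $\sum_j g_j$ is common to all tokens, $w_i = \phi(\tilde h_i)$ for a single strictly decreasing function $\phi$ applied within the batch.

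The main obstacle is making ``positive correlation between entropy and loss'' precise enough to force $\mathrm{Cov}(\phi(\tilde h_i),\ell_i)<0$. In general, linear correlation of $\tilde h$ with $\ell$ does not transfer through a nonlinear decreasing map. I would therefore state the hypothesis in a usable form and handle it in one of two ways.

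\textbf{Comonotone case (Chebyshev's sum inequality).} Assume the loss is a nondecreasing function of entropy, $\ell_i=\psi(\tilde h_i)$ with $\psi$ nondecreasing and not constant on the batch. Then $(w_i)$ and $(\ell_i)$ are oppositely ordered. I would use the symmetrized identity
\[
\mathrm{Cov}(w,\ell)=\frac{1}{2N^2}\sum_{i,j}(w_i-w_j)(\ell_i-\ell_j) .
\]
Every summand is nonpositive, and at least one is strictly negative, which gives the strict inequality.

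\textbf{Local linearization case.} Alternatively, I would note that $\tilde h\in[0,1]$, so $\exp(-\tilde h)$ is nearly affine with slope in $[-1,-e^{-1}]$. Writing $g_i = a - b\tilde h_i + r_i$ with $b>0$ and a small remainder $r_i$, I would get
\[
\mathrm{Cov}(w,\ell) = \frac{-b\,\mathrm{Cov}(\tilde h,\ell)+\mathrm{Cov}(r,\ell)}{\sum_j g_j} .
\]
This is negative whenever the entropy--loss covariance dominates the curvature remainder.

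I would present the first version as the rigorous statement and the second as a remark. I would then conclude that $\mathcal{L}_{\text{GDKD}}<\bar{\mathcal{L}}_{\text{DKD}}$: gating downweights high-entropy, high-loss tokens.
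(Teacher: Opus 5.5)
Your proof is correct and follows the paper's: the identity comes from expanding the empirical covariance with $\bar w = 1/N$, and the sign comes from an explicit comonotonicity hypothesis combined with the pairwise form $\mathrm{Cov}(w,\ell)=\frac{1}{2N^2}\sum_{i,j}(w_i-w_j)(\ell_i-\ell_j)$. Like you, the appendix replaces the informal ``positive correlation'' with an assumption, though it uses the slightly weaker pairwise condition $(\tilde h_i-\tilde h_j)(\ell_i-\ell_j)\ge 0$, which allows equal entropies with different losses, rather than your functional form $\ell_i=\psi(\tilde h_i)$; your linearization remark is an addition not in the paper.
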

The proof is provided in Appendix~\ref{appendix:proof_theorem1}.

\textbf{KL Divergence as Information Gap.}
Beyond filtering unreliable supervision, the KL-based distillation objective itself admits an information-theoretic interpretation. Let $Y_T$ denote a pseudo-label sampled from the teacher distribution, i.e., $Y_T \mid X = x \sim P_T(\cdot \mid x)$, and let $Z_S = f_S(X)$ represent the student's learned representation. Using the student's output $P_S$ as a variational decoder yields:

\begin{proposition}[Variational Lower Bound and KL Gap]
\label{prop:mi_gap}
\begin{equation}
    I(Z_S; Y_T) \geq I(X; Y_T) - \mathbb{E}\left[ D_{\text{KL}}\left(P_T \| P_S\right) \right]
    \label{eq:mi_gap}
\end{equation}
\end{proposition}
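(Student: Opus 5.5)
We prove this with the standard Barber--Agakov variational bound, using $P_S(\cdot\mid x)$ as the variational decoder $q(y\mid z)$. Two modelling conventions need to be fixed first. The student's output is a deterministic function of its representation, so $P_S(\cdot\mid X)=q(\cdot\mid Z_S)$ for some decoder $q$. The pseudo-label is drawn from the teacher given the input alone, so $Y_T \perp Z_S \mid X$, i.e. the Markov chain $Y_T - X - Z_S$ holds.

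\textbf{Step 1 (variational bound).} Write $I(Z_S;Y_T)=H(Y_T)-H(Y_T\mid Z_S)$. For any decoder $q$, nonnegativity of $\mathbb{E}_{Z_S}\bigl[D_{\text{KL}}(p(\cdot\mid Z_S)\,\|\,q(\cdot\mid Z_S))\bigr]$ gives
\[
H(Y_T\mid Z_S)\le -\mathbb{E}_{(Z_S,Y_T)}\bigl[\log q(Y_T\mid Z_S)\bigr].
\]
Hence
\[
I(Z_S;Y_T)\ge H(Y_T)+\mathbb{E}\bigl[\log P_S(Y_T\mid X)\bigr].
\]

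\textbf{Step 2 (rewrite the cross-entropy).} Condition on $X=x$ and use $Y_T\mid X=x\sim P_T(\cdot\mid x)$:
\[
-\mathbb{E}\bigl[\log P_S(Y_T\mid X)\bigr]=\mathbb{E}_X\Bigl[\,H\bigl(P_T(\cdot\mid X)\bigr)+D_{\text{KL}}\bigl(P_T(\cdot\mid X)\,\|\,P_S(\cdot\mid X)\bigr)\Bigr].
\]
The first term is exactly $H(Y_T\mid X)$.

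\textbf{Step 3 (combine).} Substituting Step 2 into Step 1,
\[
I(Z_S;Y_T)\ge H(Y_T)-H(Y_T\mid X)-\mathbb{E}\bigl[D_{\text{KL}}(P_T\|P_S)\bigr]=I(X;Y_T)-\mathbb{E}\bigl[D_{\text{KL}}(P_T\|P_S)\bigr],
\]
which is \eqref{eq:mi_gap}. As a sanity check, the data processing inequality on $Y_T - X - Z_S$ gives the matching upper bound $I(Z_S;Y_T)\le I(X;Y_T)$. The expected KL is therefore precisely the slack the student can lose, and the bound is tight when $P_S=P_T$.

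\textbf{Main obstacle.} The only delicate point is Step 1's requirement that $P_S(\cdot\mid X)$ be a valid decoder of $Z_S$, i.e. measurable with respect to $Z_S$. We secure this by taking $Z_S$ to be the student's final hidden state, so that $P_S=\mathrm{softmax}(W Z_S)$. If instead $Z_S$ were an intermediate representation, $P_S$ would still be a deterministic function of it through the remaining layers, so the argument goes through unchanged.

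For token-level sequences, the same argument applies per position with $X$ the full context. Averaging over positions yields the token-averaged version without further work.
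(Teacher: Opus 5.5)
Your proof is correct and follows the paper's argument essentially step for step. It uses the Barber--Agakov bound $H(Y_T\mid Z_S)\le -\mathbb{E}[\log q(Y_T\mid Z_S)]$ with $q=P_S$, under the same two conventions the paper states as assumptions ($Z_S=f_S(X)$ deterministic and $P_S(\cdot\mid X)=P_S(\cdot\mid Z_S)$), and then splits the cross-entropy into $H(Y_T\mid X)$ plus the expected KL. Your data-processing sanity check is an extra the paper does not need, but note that the expected KL only upper-bounds the loss $I(X;Y_T)-I(Z_S;Y_T)$ rather than equalling it, the difference being $\mathbb{E}_{Z_S}\bigl[D_{\text{KL}}(p(\cdot\mid Z_S)\,\|\,P_S(\cdot\mid Z_S))\bigr]$.
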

The proof is provided in Appendix~\ref{appendix:proof_proposition1}.

This result provides a clean interpretation: the KL divergence quantifies the \emph{information gap} between the maximum teacher information $I(X; Y_T)$ and the information captured by the student $I(Z_S; Y_T)$. Consequently, minimizing $\mathcal{L}_{\text{GDKD}}$ directly maximizes the mutual information between the student's representation and the teacher's knowledge.

%=============================================================================
\subsection{Relational Centered Kernel Alignment}
\label{sec:rcka}

As demonstrated in Section~\ref{sec:motivation_attention}, larger teacher models develop superior visual attention patterns that logit-level distillation alone cannot transfer. While relational knowledge distillation methods~\cite{park2019relational} have shown that transferring inter-sample structural relationships improves student learning, these approaches operate at the batch level and fail to capture the fine-grained \emph{intra-sample} relational structures among visual tokens that are critical for visual reasoning. We propose Relational Centered Kernel Alignment (RCKA) to explicitly transfer this token-level relational knowledge to the student.

\begin{figure}[!t]
    \centering
    \includegraphics[width=\linewidth]{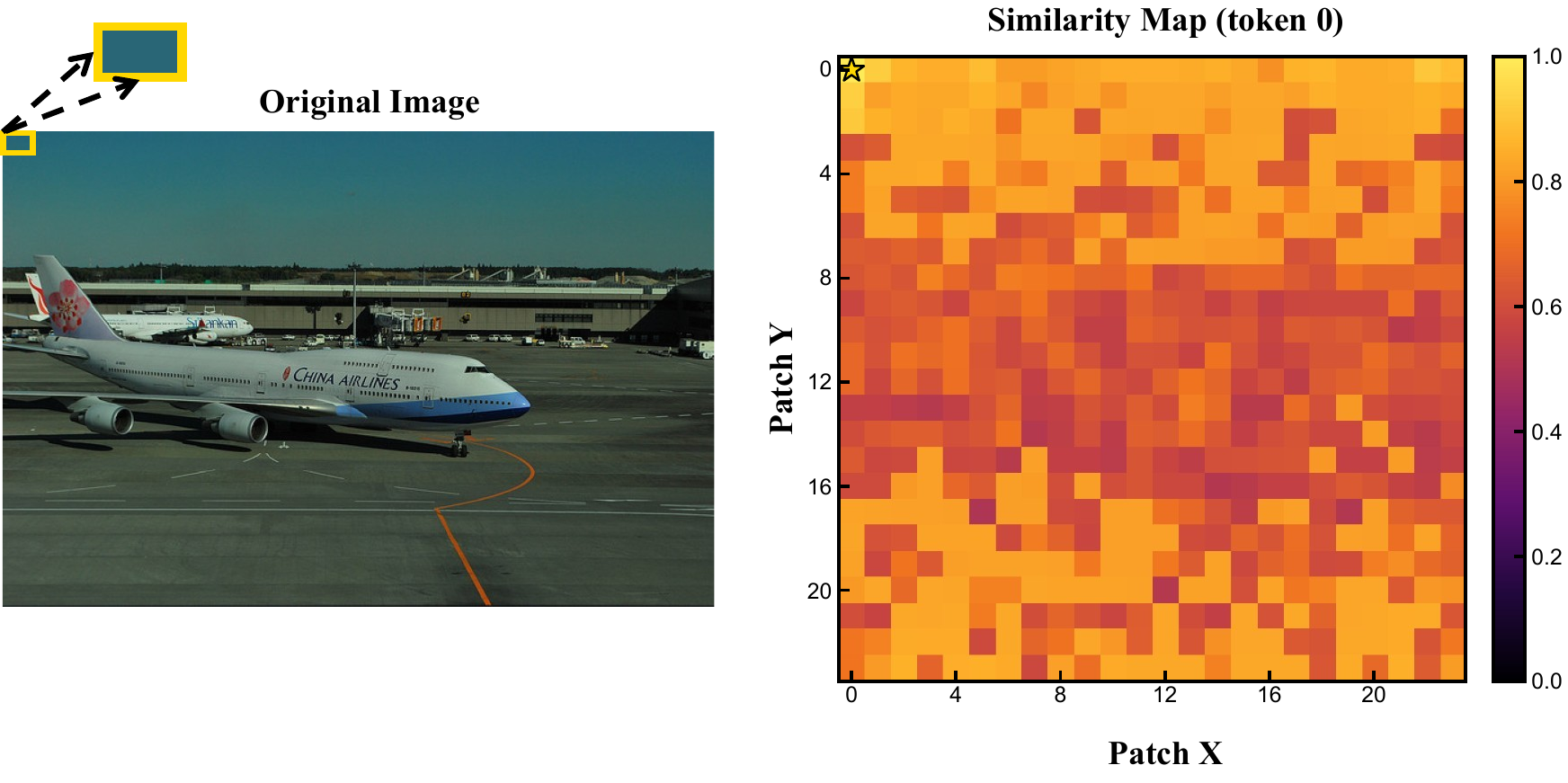}
    \caption{Visualization of pairwise visual token similarity from LLaVA-1.5 13B. The heatmap shows cosine similarity between token 0 (yellow box, located in the sky region) and all other tokens in the spatial grid, where Patch X and Patch Y denote the horizontal and vertical grid coordinates respectively.}
    \vspace{-2mm}
    \label{fig:rcka_vis}
    \vspace{-2mm}
\end{figure}

\textbf{Relational Structure in Visual Representations.}
We first investigate the internal structure of teacher representations. As illustrated in Figure~\ref{fig:rcka_vis}, we visualize pairwise similarities between visual tokens from the 13B teacher. By selecting a token from the sky region (yellow box) and computing its similarity with all other tokens, we observe that the sky token exhibits high similarity with other sky tokens while showing minimal similarity with the aircraft or ground regions. This demonstrates that the teacher's intermediate hidden states from the language model backbone encode rich relational structures, organizing visual tokens into semantically coherent regions.

\vspace{1mm}
\textbf{RCKA for Relational Knowledge Transfer.}
Motivated by these observations, we propose \textbf{Relational Centered Kernel Alignment (RCKA)} to transfer the teacher's relational visual knowledge to the student. While CKA~\cite{kornblith2019similarity} has been explored for knowledge distillation~\cite{saha2022distilling, chen2025rethinking}, our approach differs in several key aspects: (i)~existing methods perform \emph{layer-wise} alignment between corresponding layers, whereas RCKA targets \emph{intra-sample} relational structures among visual tokens; (ii)~prior relational KD methods like RKD~\cite{park2019relational} capture \emph{inter-sample} relationships at the batch level, while RCKA preserves fine-grained pairwise similarities that organize visual tokens into semantically coherent regions (e.g., sky tokens clustering together, as shown in Figure~\ref{fig:rcka_vis}); and (iii)~RCKA specifically addresses VLMs, where visual tokens processed through the LLM backbone develop rich relational structures critical for visual reasoning, which logit-level distillation cannot transfer.

Specifically, let $V_T \in \mathbb{R}^{n \times d_T}$ and $V_S \in \mathbb{R}^{n \times d_S}$ denote the visual token representations extracted from the penultimate layers of the teacher and student LLM backbones, respectively, where $n$ is the number of visual tokens. Text tokens are excluded entirely from the Gram matrix computation to ensure RCKA transfers visual perception capabilities without conflating visual and linguistic representations. We select the penultimate layer because prior work~\cite{kornblith2019similarity, chen2025rethinking} has demonstrated that intermediate layers capture richer semantic features than the final layer, which tends to be task-specific.

\textbf{Gram Matrix Computation.}
We compute normalized Gram matrices that capture pairwise similarities among visual tokens:
\begin{equation}
    K_T = \bar{V}_T \bar{V}_T^\top, \quad K_S = \bar{V}_S \bar{V}_S^\top
\end{equation}
where $\bar{V} = V / \|V\|_2$ denotes row-wise $\ell_2$ normalization. Each entry $(K)_{ij}$ measures the cosine similarity between visual tokens $i$ and $j$.

\textbf{Centered Kernel Alignment.}
To ensure invariance to the mean of representations, we apply centering to the Gram matrices:
\begin{equation}
    \tilde{K} = HKH, \quad \text{where} \quad H = I_n - \frac{1}{n}\mathbf{1}_n\mathbf{1}_n^\top
\end{equation}
Here, $I_n$ is the identity matrix and $\mathbf{1}_n$ is the all-ones vector.

To compare the relational structures encoded in $\tilde{K}_T$ and $\tilde{K}_S$, we adopt Centered Kernel Alignment (CKA)~\cite{kornblith2019similarity}, which builds upon the Hilbert-Schmidt Independence Criterion (HSIC). HSIC is a kernel-based statistical measure of dependence between two sets of variables: given centered kernel matrices, it computes their normalized inner product in the space of Hilbert-Schmidt operators as $\text{HSIC}(K_T, K_S) = \frac{1}{(n-1)^2} \text{Tr}(\tilde{K}_T \tilde{K}_S)$. Intuitively, HSIC quantifies how well the pairwise relationships in one representation predict those in another. If visual tokens that are similar in the teacher's representation are also similar in the student's, HSIC will be high.

CKA normalizes HSIC to obtain a similarity measure bounded in $[0, 1]$:
\begin{equation}
    \text{CKA}(K_T, K_S) = \frac{\text{HSIC}(K_T, K_S)}{\sqrt{\text{HSIC}(K_T, K_T) \cdot \text{HSIC}(K_S, K_S)}}
\end{equation}
This normalization is crucial for our setting: it renders CKA invariant to isotropic scaling of representations and, importantly, enables meaningful comparison even when the teacher and student have different hidden dimensions ($d_T \neq d_S$). Unlike point-wise alignment methods such as MSE that require matching dimensions, CKA operates on $n \times n$ Gram matrices and thus bypasses the need for projection layers.

The RCKA loss encourages the student to preserve the teacher's relational structure:
\begin{equation}
    \mathcal{L}_{\text{RCKA}} = 1 - \text{CKA}(K_T, K_S)
    \label{eq:rcka}
\end{equation}
By aligning relational structures rather than point-wise features, RCKA facilitates knowledge transfer even when $d_T \neq d_S$, enabling the student to acquire the geometric properties underlying the teacher's superior visual perception and progressive attention refinement.

\subsection{Group-wise Learned Step-size Quantization}
\label{sec:lsq}

To enable efficient low-bit inference, we employ QAT with learned step sizes~\cite{esser2019learned}. Unlike PTQ methods that fix quantization scales after calibration, our method treats the scales as learnable parameters and optimizes them jointly with model weights under the distillation objective.

\textbf{Group-wise Quantization.}
VLM weights exhibit heterogeneous distributions across layers and channels due to their multimodal structure~\cite{wang2024q,guo2025speed}. Per-tensor quantization is often too coarse to capture such variation, while channel-wise quantization still ignores fine-grained differences within each channel. Inspired by microscaling formats~\cite{rouhani2023microscaling}, we adopt group-wise quantization, which partitions weights into contiguous groups and assigns each group an independent scale.

For a weight matrix $W$, we flatten it and partition it into $G=d_{\mathrm{out}}d_{\mathrm{in}}/g$ groups, denoted as $W \rightarrow \{W_i\}_{i=0}^{G-1}$ with $W_i \in \mathbb{R}^{g}$. We use $g=128$ by default. Each group has a learnable log-scale $\theta_i$ with $s_i=\exp(\theta_i)$, which guarantees positive scales and stabilizes multiplicative updates. Quantization is applied only to the LLM decoder linear layers, while the vision-side and output-related modules remain in BF16 to preserve visual encoding and output distribution quality.

\textbf{Learned Step-size Quantization.}
Following LSQ~\cite{esser2019learned}, we parameterize scales in log space and initialize each group scale by minimizing fake-quantization reconstruction error. For group $W_i$, we sample $K=20$ candidate scales from $\mathcal{S}_i=[0.3,1.2]\cdot \max(|W_i|)/Q_p$. We define
\begin{equation}
\begin{aligned}
    q_i(s)
    &=
    \mathrm{clip}
    \left(
    \left\lfloor W_i / s \right\rceil,
    -Q_n, Q_p
    \right), \\
    s_i^{(0)}
    &=
    \arg\min_{s \in \mathcal{S}_i}
    \left\| W_i - s q_i(s) \right\|_2^2 .
\end{aligned}
\label{eq:lsq_init}
\end{equation}
We set $\theta_i^{(0)}=\log s_i^{(0)}$. This MSE-based initialization is particularly important for 4-bit QAT, where the quantization grid is coarse and max-based scales can leave large rounding or clipping errors. During training, each group is fake-quantized as $\widehat{W}_i=s_i q_i(s_i)$, where $\lfloor \cdot \rceil$ denotes round-to-nearest. We use the straight-through estimator for rounding, allowing both $W$ and $\theta$ to receive gradients from the downstream distillation objectives.

\textbf{Quantization as an Information Capacity Constraint.}
The Information Bottleneck principle~\cite{tishby2000information} seeks to preserve task-relevant information while limiting representation complexity. In our setting, low-bit quantization naturally imposes such a constraint by reducing each weight from 16-bit precision to $b$-bit precision. We therefore view quantized distillation as preserving teacher-relevant information under a hard capacity limit:
\begin{equation}
    \max I(Z_S;Y_T)
    \quad
    \mathrm{s.t.}
    \quad
    I(Z_S;X) \leq C_b .
    \label{eq:ib_constrained}
\end{equation}
Here, $Z_S$ denotes the student representation and $Y_T$ denotes teacher-provided task-relevant information. Since the quantizer physically enforces the capacity limit $C_b$, we do not add a separate penalty on $I(Z_S;X)$. Instead, we require the quantized student to retain sufficient teacher knowledge by constraining the distillation loss, written as $\min \mathcal{L}_{\mathrm{task}}$ subject to $\mathcal{L}_{\mathrm{distill}}\leq\tau$. The corresponding Lagrangian is
\begin{equation}
    \mathcal{L}_{\mathrm{task}}
    +
    \beta
    \left(
    \mathcal{L}_{\mathrm{distill}}-\tau
    \right),
\end{equation}
where $\beta$ is adapted online through projected dual ascent on the smoothed distillation loss, as described in Sec.~\ref{sec:cgdkd}. This formulation connects quantization and distillation: quantization limits the student's information capacity, while distillation guides how this limited capacity should be allocated.

\textbf{Joint Optimization.}
The model weights $W$ and log-scales $\theta$ are jointly optimized with
$\mathcal{L}_{\mathrm{total}}=\mathcal{L}_{\mathrm{CE}}+\beta\mathcal{L}_{\mathrm{GDKD}}+\omega\mathcal{L}_{\mathrm{RCKA}}$.
Their updates are
\begin{equation}
    W \leftarrow W - \eta_W \nabla_W \mathcal{L}_{\mathrm{total}},
    \qquad
    \theta \leftarrow \theta - \eta_\theta \nabla_\theta \mathcal{L}_{\mathrm{total}} .
\end{equation}
We place $\theta$ in a separate optimizer group with no weight decay and use $\eta_\theta = 10\eta_W$. This larger learning rate helps the scales quickly track the changing weight distribution during QAT, which is especially important in early training. Empirically, this follows standard LSQ practice~\cite{esser2019learned} and improves 4-bit stability. The joint update allows weights to migrate toward the quantization grid while the scales co-evolve to preserve task-relevant information.

%=============================================================================

\begin{table*}[!htbp]
\centering
\small
\caption{Comparison with knowledge distillation methods on LLaVA-1.5. All student models use Vicuna-7B as the language backbone. Best results among 7B models are \textbf{bolded}, second best are \underline{underlined}.}
\label{tab:main_kd}
\setlength{\tabcolsep}{5pt}
\begin{tabular}{@{}lcccccccc|c@{}}
\toprule
\textbf{Method} & \textbf{VQA\textsuperscript{v2}} & \textbf{GQA} & \textbf{TextVQA} & \textbf{VizWiz} & \textbf{POPE} & \textbf{SQA} & \textbf{MME} & \textbf{MMB} & \textbf{Avg.} \\
\midrule
LLaVA-1.5-13B (Teacher) & 80.0 & 63.5 & 61.3 & 53.6 & 85.9 & 71.6 & 1531.3 & 67.7 & 69.1 \\
\rowcolor{gray!15}
LLaVA-1.5-7B (Baseline) & 78.5 & 62.0 & 58.2 & 50.0 & 85.9 & 66.8 & 1510.7 & 64.3 & 66.5 \\
\midrule
MoVE-KD-v1.0 {\scriptsize\textcolor{gray}{[CVPR'25]}} & 79.5 & 63.2 & 58.3 & 52.3 & \underline{86.9} & 69.3 & 1524.5 & 66.3 & 68.0 {\scriptsize\textcolor{red}{$\uparrow$1.5\%}} \\
MoVE-KD-v1.1 {\scriptsize\textcolor{gray}{[CVPR'25]}} & \textbf{79.9} & \textbf{63.9} & 59.6 & 52.7 & 86.3 & 69.8 & 1509.1 & \textbf{67.4} & \underline{68.5} {\scriptsize\textcolor{red}{$\uparrow$2.0\%}} \\
HAWAII {\scriptsize\textcolor{gray}{[NeurIPS'25]}} & 79.1 & 62.8 & 58.7 & \textbf{53.9} & \textbf{87.3} & \underline{70.5} & \textbf{1540.2} & \underline{66.9} & \underline{68.5} {\scriptsize\textcolor{red}{$\uparrow$2.0\%}} \\
\rowcolor{blue!8}
\textbf{GRACE (Ours)} & \underline{79.7} & \underline{63.3} & \textbf{61.5} & \underline{52.9} & 86.2 & \textbf{71.7} & \underline{1525.8} & 66.6 & \textbf{69.0} {\textbf{\scriptsize\textcolor{red}{$\uparrow$2.5\%}}} \\
\bottomrule
\end{tabular}
\end{table*}

\begin{table*}[!htbp] 
\centering
\small
\caption{Comparison with quantization methods on LLaVA-1.5-7B. Best results among INT4 models are \textbf{bolded}, second best are \underline{underlined}.}
\label{tab:main_quant}
\begin{tabular}{@{}llccccccc|c@{}} 
\toprule 
\textbf{Bitwidth} & \textbf{Method} & \textbf{VQA\textsuperscript{v2}} & \textbf{GQA} & \textbf{TextVQA} & \textbf{VizWiz} & \textbf{POPE} & \textbf{SQA} & \textbf{MMB} & \textbf{Avg.} \\ 
\midrule 
BF16 & LLaVA-1.5-13B (Teacher) & 80.0 & 63.5 & 61.3 & 53.6 & 85.9 & 71.6 & 67.7 & 69.1 \\ 
\rowcolor{gray!15} 
BF16 & LLaVA-1.5-7B (Baseline) & 78.5 & 62.0 & 58.2 & 50.0 & 85.9 & 66.8 & 64.3 & 66.5 \\ 
\midrule 
\rowcolor{cyan!10} 
8-bit & \textbf{GRACE (Ours)} & 79.2 & 62.8 & 60.4 & 52.5 & 85.9 & 71.3 & 66.1 & 68.3 {\textbf{\scriptsize\textcolor{red}{$\uparrow$1.8\%}}} \\ 
\midrule 
4-bit & RTN & 76.4 & 61.2 & 57.6 & 48.5 & 84.7 & 65.3 & 62.6 & 65.2 {\scriptsize\textcolor{green!50!black}{$\downarrow$1.3\%}} \\ 
4-bit & AWQ {\scriptsize\textcolor{gray}{[MLSys'24]}} & 77.0 & 61.3 & 57.2 & 49.3 & \underline{85.1} & 66.2 & 63.1 & 65.6 {\scriptsize\textcolor{green!50!black}{$\downarrow$0.9\%}} \\ 
4-bit & QAT & \underline{77.3} & \underline{61.5} & \underline{57.3} & \underline{49.8} & 85.0 & \underline{66.5} & \underline{63.9} & \underline{65.9} {\scriptsize\textcolor{green!50!black}{$\downarrow$0.6\%}} \\
\rowcolor{orange!10} 
4-bit & \textbf{GRACE (Ours)} & \textbf{78.3} & \textbf{61.2} & \textbf{59.2} & \textbf{51.6} & \textbf{85.2} & \textbf{70.1} & \textbf{65.0} & \textbf{67.2} {\textbf{\scriptsize\textcolor{red}{$\uparrow$0.7\%}}} \\ 
\bottomrule 
\end{tabular}
\end{table*}
%=============================================================================
\subsection{Adaptive IB Controller}
\label{sec:ib_controller}

Having established that confidence-gated distillation maximizes $I(Z_S; Y_T)$ (Section~\ref{sec:cgdkd}) while quantization constrains $I(Z_S; X)$ (Section~\ref{sec:lsq}), we now derive a principled mechanism to balance these competing objectives. Since directly estimating mutual information is intractable for high-dimensional vocabularies, we employ $\mathcal{L}_{\text{GDKD}}$ as a surrogate, justified by Proposition~\ref{prop:mi_gap} and consistent with practical IB applications~\cite{alemi2016deep, fischer2020conditional}. This yields the following constrained optimization formulation:
\begin{equation}
    \min_{\theta} \mathcal{L}_{\text{CE}}(\theta) \quad \text{s.t.} \quad \mathcal{L}_{\text{GDKD}}(\theta) \leq \tau
    \label{eq:constrained}
\end{equation}
where $\tau$ serves as the \emph{information preservation budget}, controlling the minimum amount of teacher knowledge to be retained. The Lagrangian relaxation yields:
\begin{equation}
    \mathcal{L}(\theta, \beta) = \mathcal{L}_{\text{CE}}(\theta) + \beta \left( \mathcal{L}_{\text{GDKD}}(\theta) - \tau \right), \quad \beta \geq 0
    \label{eq:lagrangian}
\end{equation}
Rather than treating $\beta$ as a fixed hyperparameter, we update it via projected dual ascent with EMA smoothing:
\begin{equation}
    \beta \leftarrow \Pi_{[\beta_{\min}, \beta_{\max}]}\left( \beta + \eta \cdot (\widehat{\mathcal{L}}_{\text{GDKD}} - \tau) \right)
    \label{eq:beta_update}
\end{equation}
This update rule implements intuitive feedback: when $\widehat{\mathcal{L}}_{\text{GDKD}} > \tau$, the student struggles to retain teacher knowledge, prompting $\beta$ to increase; conversely, when the constraint is satisfied, $\beta$ decreases to prioritize task performance.

The complete GRACE training objective combines all components:
\begin{equation}
    \mathcal{L}_{\text{total}} = \mathcal{L}_{\text{CE}} + \beta \cdot \mathcal{L}_{\text{GDKD}} + \omega \cdot \mathcal{L}_{\text{RCKA}}
    \label{eq:total_loss}
\end{equation}
where $\beta$ adapts dynamically via Eq.~\eqref{eq:beta_update} and $\omega$ is a fixed weight for relational alignment. A detailed derivation connecting this formulation to IB theory is provided in Appendix~\ref{appendix:ib_derivation}.

\begin{table*}[!h]
\centering
\small
\setlength{\tabcolsep}{3pt}
\renewcommand{\arraystretch}{0.92}
\caption{Ablation study on distillation components. GDKD: Confidence-Gated Decoupled Knowledge Distillation; RCKA: Relational Centered Kernel Alignment; Adaptive IB: Adaptive Information Bottleneck Controller.}
\label{tab:ablation_distill}
\begin{tabular}{@{}ccc|ccc|c@{}}
\toprule
\textbf{GDKD} & \textbf{RCKA} & \textbf{Adaptive IB} & \textbf{MMBench} & \textbf{SEED-Bench} & \textbf{ScienceQA} & \textbf{Avg.} \\
\midrule
\multicolumn{3}{c|}{Qwen2-VL-7B (Teacher)} & 80.7 & 76.9 & 83.3 & 80.3 \\
\rowcolor{gray!15}
\multicolumn{3}{c|}{Qwen2-VL-2B (Baseline)} & 71.6 & 72.7 & 73.7 & 72.7 \\
\midrule
\ding{55} & \ding{55} & \ding{55} & 74.2 & 73.3 & 76.8 & 74.8 {\scriptsize\textcolor{red}{(+2.1\%)}} \\
\ding{51} & \ding{55} & \ding{55} & 76.1 & 74.2 & 79.4 & 76.6 {\scriptsize\textcolor{red}{(+3.9\%)}} \\
\ding{55} & \ding{51} & \ding{55} & 76.5 & 74.7 & 79.2 & 76.8 {\scriptsize\textcolor{red}{(+4.1\%)}} \\
\ding{55} & \ding{55} & \ding{51} & 75.3 & 73.7 & 77.6 & 75.5 {\scriptsize\textcolor{red}{(+2.8\%)}} \\
\ding{51} & \ding{51} & \ding{55} & 77.1 & 75.1 & 80.2 & 77.5 {\scriptsize\textcolor{red}{(+4.8\%)}} \\
\ding{51} & \ding{55} & \ding{51} & 76.8 & 74.6 & 79.9 & 77.1 {\scriptsize\textcolor{red}{(+4.4\%)}} \\
\ding{55} & \ding{51} & \ding{51} & 76.9 & 75.0 & 79.8 & 77.2 {\scriptsize\textcolor{red}{(+4.5\%)}} \\
\rowcolor{blue!8}
\ding{51} & \ding{51} & \ding{51} & \textbf{77.9} & \textbf{75.7} & \textbf{81.0} & \textbf{78.2} {\scriptsize\textcolor{red}{(+5.5\%)}} \\
\bottomrule
\end{tabular}
\vspace{-0.8em}
\end{table*}

\begin{table*}[!h]
\centering
\small
\setlength{\tabcolsep}{3pt}
\renewcommand{\arraystretch}{0.92}
\caption{Ablation study on quantization. We compare QAT alone, QAT with naive KD, and QAT with GRACE.}
\label{tab:ablation_quant}
\begin{tabular}{@{}ll|ccc|c@{}}
\toprule
\textbf{Precision} & \textbf{KD Method} & \textbf{MMBench} & \textbf{SEED-Bench} & \textbf{ScienceQA} & \textbf{Avg.} \\
\midrule
BF16 & Qwen2-VL-7B (Teacher) & 80.7 & 76.9 & 83.3 & 80.3 \\
\rowcolor{gray!15}
BF16 & Qwen2-VL-2B (Baseline) & 71.6 & 72.7 & 73.7 & 72.7 \\
\midrule
8-bit & QAT only & 71.2 & 72.3 & 73.0 & 72.2 {\scriptsize\textcolor{green!50!black}{($-$0.5\%)}} \\
8-bit & + Naive KD & 73.8 & 73.2 & 76.3 & 74.4 {\scriptsize\textcolor{red}{(+1.7\%)}} \\
\rowcolor{cyan!10}
8-bit & + \textbf{GRACE} & \textbf{77.4} & \textbf{75.5} & \textbf{80.8} & \textbf{77.9} {\scriptsize\textcolor{red}{(+5.2\%)}} \\
\midrule
4-bit & QAT only & 70.4 & 71.4 & 72.2 & 71.3 {\scriptsize\textcolor{green!50!black}{($-$1.4\%)}} \\
4-bit & + Naive KD & 73.0 & 72.1 & 75.2 & 73.4 {\scriptsize\textcolor{red}{(+0.7\%)}} \\
\rowcolor{orange!10}
4-bit & + \textbf{GRACE} & \textbf{76.9} & \textbf{75.6} & \textbf{79.1} & \textbf{77.2} {\scriptsize\textcolor{red}{(+4.5\%)}} \\
\bottomrule
\end{tabular}
\vspace{-0.8em}
\end{table*}

\section{Experiments}
\label{sec:experiments}

\textbf{Models and Configurations.}
We evaluate GRACE on two representative VLM families: LLaVA-1.5~\cite{liu2024improved}, using the 13B model as the teacher and the 7B variant as the student, and Qwen2-VL~\cite{wang2024qwen2}, leveraging the 7B model as the teacher and a 2B model as the student. For each family, we conducted experiments in three settings: full-precision distillation, INT8 quantization, and INT4 quantization. In all cases, the models used for distillation are initialized from their pretrained weights and then fine-tuned with our distillation objectives. Following standard practice in VLMs finetuning~\cite{liu2023visual}, we freeze the vision encoder and maintain it in BF16 precision throughout training. Quantization is applied exclusively to the weights of all linear layers within the LLM backbone, using symmetric group-wise quantization.

\textbf{Training and Evaluation Setup.} Training is conducted using the ShareGPT4V dataset~\cite{chen2024sharegpt4v}, which contains 1.3M high-quality image-text pairs generated by GPT-4V and an extended caption model. The models are trained on 8 NVIDIA H100, with 12 hours of training for LLaVA-1.5 and 8 hours for Qwen2-VL. We evaluate models on diverse benchmarks, which include comprehensive multimodal evaluation, visual reasoning, text recognition, visual question answering, visual perception, and hallucination evaluation,detailed experimental settings and data compositions are provided in Appendix~\ref{sec:exp_settings}. All inference experiments are conducted on a single NVIDIA A100 GPU.

\subsection{Main Results}
\label{sec:main_results}

We evaluate GRACE on: LLaVA-1.5 and Qwen2-VL. For LLaVA-1.5, we use the 13B model as teacher and 7B as student; for Qwen2-VL, we use 7B as teacher and 2B as student. The main text presents LLaVA-1.5 results; Qwen2-VL experiments are provided in Appendix~\ref{sec:additional_qwen} (Table~\ref{tab:qwen_quant}).

\noindent\textbf{Knowledge Distillation Performance.}
Table~\ref{tab:main_kd} compares GRACE with state-of-the-art knowledge distillation methods on LLaVA-1.5. GRACE achieves 69.0\% average accuracy, representing a \textbf{2.5\% improvement over the baseline} that closely approaches the 13B teacher (69.1\%). Notably, GRACE outperforms both MoVE-KD-v1.1~\cite{cao2025move} and HAWAII~\cite{wang2025hawaii} by 0.5\% absolute, despite these methods employing more complex architectures. MoVE-KD uses multi-teacher distillation with LoRA mixture-of-experts, while HAWAII incorporates multiple vision encoders. In particular, GRACE achieves substantial gains on TextVQA (+3.2\% over MoVE-KD, +2.8\% over HAWAII) and ScienceQA (+2.4\% over MoVE-KD, +1.2\% over HAWAII). These results validate that confidence-gated distillation combined with relational alignment can effectively transfer teacher knowledge without requiring complex multi-teacher frameworks.

\noindent\textbf{Quantization Performance.}
Table~\ref{tab:main_quant} evaluates GRACE under different bit-widths against PTQ and QAT baselines. At 8-bit precision, GRACE achieves 68.3\%, surpassing the full-precision baseline by 1.8\%. At 4-bit, PTQ methods exhibit significant degradation (RTN: $-$1.3\%, AWQ: $-$0.9\%), while vanilla QAT only partially recovers performance (65.9\%, $-$0.6\%). In contrast, \textbf{our 4-bit GRACE attains 67.2\%, exceeding the BF16 baseline by 0.7\%} and outperforming AWQ and QAT by 1.6\% and 1.3\% absolute, respectively. These results demonstrate that joint optimization of distillation and quantization enables representations inherently robust to aggressive low-bit compression.

\begin{figure*}[!t]
    \centering
    \vspace{-0.4em}
    \includegraphics[width=0.95\textwidth]{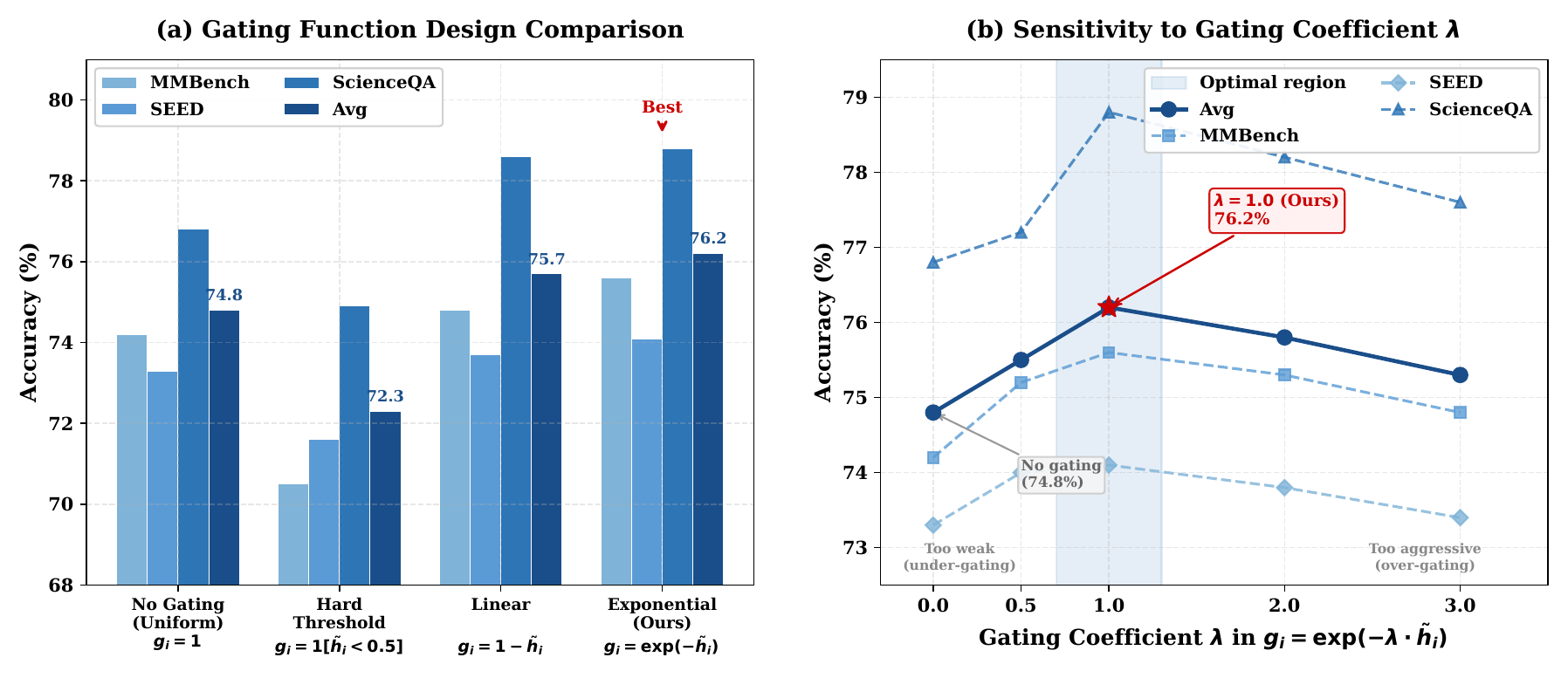}
    \vspace{-0.8em}
    \caption{Ablation study on confidence gating design under standard KL distillation on Qwen2-VL (7B$\rightarrow$2B, BF16). Here $\tilde{h}_i=H(P_T^{(i)})/\log |V|$ denotes normalized teacher entropy.}
    \label{fig:ablation_gating}
    \vspace{-0.8em}
\end{figure*}

\subsection{Ablation Studies}
\label{sec:ablation}

We conduct ablation studies to validate each component in GRACE. All experiments use Qwen2-VL (7B$\rightarrow$2B) evaluated on MMBench~\cite{liu2024mmbench}, SEED-Bench~\cite{li2023seed}, and ScienceQA~\cite{lu2022learn}.

\noindent\textbf{Effect of Distillation Components.}
Table~\ref{tab:ablation_distill} isolates the contribution of each distillation component. Starting from vanilla KD (74.8\%), adding either GDKD or RCKA yields clear gains, improving the average accuracy to 76.6\% and 76.8\%, respectively. The strong performance of GDKD confirms that teacher confidence is an effective signal for filtering unreliable token-level supervision, while the gain from RCKA shows that preserving visual-token relational structure is crucial for VLM distillation. The adaptive IB controller alone brings a moderate improvement to 75.5\%, as its role is mainly to regulate the strength of distillation rather than introduce new supervision. Combining all components achieves the best result of 78.2\%, demonstrating that confidence-aware logit transfer, relational structure alignment, and adaptive regularization provide complementary benefits.

\noindent\textbf{Effect of Quantization Components.}
Table~\ref{tab:ablation_quant} validates the necessity of jointly optimizing quantization and distillation. QAT alone falls below the BF16 baseline at both 8-bit ($-$0.5\%) and 4-bit ($-$1.4\%), indicating that learned scales and weight adaptation are insufficient to fully recover information lost under low-bit constraints. Adding naive KD partially mitigates this degradation, but its gains remain limited because all teacher predictions are treated equally, including uncertain or noisy ones. In contrast, QAT combined with GRACE achieves substantially larger improvements, reaching 77.9\% at 8-bit and 77.2\% at 4-bit. The smaller gap between 8-bit and 4-bit under GRACE suggests that confidence-gated and relational supervision help the student preserve task-critical information even under aggressive quantization.

\noindent\textbf{Effect of Gating Design.}
Figure~\ref{fig:ablation_gating} further compares different confidence gating functions under standard KL distillation. Without gating, the student reaches 74.8\%, showing that teacher supervision is useful but still contains noisy signals. Hard thresholding drops sharply to 72.3\%, even below the no-gating baseline, suggesting that binary filtering removes not only unreliable tokens but also partially informative ones. Linear gating improves over no gating by softly reducing the influence of high-entropy predictions. Exponential gating performs best when $\lambda=1.0$, reaching 76.2\%, because it provides a smoother confidence-dependent weighting while retaining useful supervision from moderately uncertain tokens. The results also show an inverted-U trend: $\lambda=0.5$ under-suppresses noisy teacher predictions, whereas $\lambda=3.0$ over-suppresses supervision and hurts knowledge transfer. This supports our design choice of entropy-based soft gating for robust distillation.

\section{Conclusion}
\label{sec:conclusion}

We presented GRACE, a unified framework for efficient VLM compression that formulates QAT-based low-bit training through an Information Bottleneck perspective. By using the teacher as a proxy for task-relevant information, GRACE guides the quantized student to allocate its limited capacity toward reliable and structurally meaningful knowledge. Through confidence-gated decoupled distillation, relational CKA, adaptive IB control, and group-wise learned step-size quantization, GRACE substantially improves both distillation and quantization performance. Experiments on LLaVA-1.5 and Qwen2-VL show that GRACE enables INT4 models to \textit{surpass} their BF16 baselines, demonstrating the effectiveness of principled information allocation over standard QAT and naive QAT-KD combinations.

Despite these promising results, GRACE still has several limitations. First, our current implementation focuses on weight-only quantization, while activation quantization remains unexplored and may introduce additional instability in multimodal reasoning. Second, the teacher-student setting requires access to a stronger teacher model during training, which increases offline training cost. Third, our experiments mainly cover image-text VLMs, leaving broader multimodal architectures, such as video-language and audio-language models, for future investigation. Future work will extend GRACE to weight-activation quantization, reduce the dependency on large teachers, and explore its applicability to more diverse multimodal systems.

\clearpage
\newpage
\section*{Acknowledgement}
This work was supported by a grant from the Swiss National
Supercomputing Centre (CSCS) under project IDs lp12 and lp160 on Alps. We acknowledge the CINECA award under the
ISCRA initiative (project IsB32) for the availability of HPC
resources.

\section*{Impact Statement}
This paper presents GRACE, a framework for efficient Vision-Language Model compression through joint knowledge distillation and quantization-aware training. Our work aims to advance the field of Machine Learning by enabling the deployment of high-quality VLMs on resource-constrained devices, thereby democratizing access to multimodal AI capabilities.

We acknowledge several potential societal implications of our work. On the positive side, efficient VLM compression can reduce computational resources and energy consumption required for inference, contributing to a more sustainable AI deployment. Making powerful vision-language models accessible on edge devices can benefit applications in education, accessibility tools, and healthcare in resource-limited settings.

However, as with any advancement in AI capabilities, there is a potential for misuse. More accessible VLMs could be employed to generate misleading content or for surveillance applications. We encourage the research community and practitioners to deploy compressed VLMs responsibly, with appropriate safeguards, and in compliance with ethical guidelines.

In general, we believe that the benefits of democratizing access to efficient multimodal AI outweigh the risks, provided that deployment is conducted with appropriate consideration for ethical implications.

\clearpage
\bibliography{example_paper}
\bibliographystyle{icml2026}

%%%%%%%%%%%%%%%%%%%%%%%%%%%%%%%%%%%%%%%%%%%%%%%%%%%%%%%%%%%%%%%%%%%%%%%%%%%%%%%
%%%%%%%%%%%%%%%%%%%%%%%%%%%%%%%%%%%%%%%%%%%%%%%%%%%%%%%%%%%%%%%%%%%%%%%%%%%%%%%
% APPENDIX
%%%%%%%%%%%%%%%%%%%%%%%%%%%%%%%%%%%%%%%%%%%%%%%%%%%%%%%%%%%%%%%%%%%%%%%%%%%%%%%
%%%%%%%%%%%%%%%%%%%%%%%%%%%%%%%%%%%%%%%%%%%%%%%%%%%%%%%%%%%%%%%%%%%%%%%%%%%%%%%
\newpage
\appendix
\onecolumn
\newpage
\appendix
\onecolumn

\section{Additional Experiments}
\label{sec:additional}
\begin{table*}[!htbp]
\centering
\caption{Comparison with quantization methods on Qwen2-VL-2B. Best results among 4-bit models are \textbf{bolded}, second best are \underline{underlined}.}
\label{tab:qwen_quant}
\resizebox{1\textwidth}{!}{%
\begin{tabular}{@{}llcccccccc|c@{}}
\toprule
\textbf{Bitwidth} & \textbf{Method} & \textbf{MMB} & \textbf{MMStar} & \textbf{MMMU} & \textbf{Hallusion} & \textbf{AI2D} & \textbf{OCR} & \textbf{SEED} & \textbf{SQA} & \textbf{Avg.} \\
\midrule
BF16 & Qwen2-VL-7B (Teacher) & 80.7 & 60.7 & 54.1 & 50.6 & 83.0 & 84.5 & 76.9 & 83.3 & 71.7 \\
\rowcolor{gray!15}
BF16 & Qwen2-VL-2B (Baseline) & 72.6 & 50.4 & 45.4 & 42.9 & 73.1 & 81.0 & 72.7 & 73.7 & 64.0 \\
\midrule
\rowcolor{blue!8}
BF16 & \textbf{GRACE (Ours)} & 77.9 & 55.5 & 52.0 & 48.3 & 80.0 & 83.1 & 75.7 & 81.0 & \textbf{69.2} {\scriptsize\textbf{\textcolor{red}{$\uparrow$5.2\%}}} \\
\midrule
\rowcolor{cyan!10}
8-bit & \textbf{GRACE (Ours)} & 77.4 & 55.2 & 51.8 & 47.9 & 79.7 & 82.9 & 75.5 & 80.8 & \textbf{68.9} {\scriptsize\textbf{\textcolor{red}{$\uparrow$4.9\%}}} \\
\midrule
4-bit & RTN & \underline{70.98} & 45.07 & 37.22 & \underline{42.74} & \underline{71.15} & 78.8 & 72.45 & 70.67 & 61.1 {\scriptsize\textcolor{green!50!black}{$\downarrow$2.9\%}} \\
4-bit & GPTQ {\scriptsize\textcolor{gray}{[ICLR'23]}} & 70.51 & 46.33 & 36.22 & 39.62 & 70.53 & 79.5 & 72.62 & 72.10 & 60.9 {\scriptsize\textcolor{green!50!black}{$\downarrow$3.1\%}} \\
4-bit & AWQ {\scriptsize\textcolor{gray}{[MLSys'24]}} & 68.89 & 44.80 & 37.33 & 39.55 & 70.08 & 78.9 & 71.83 & \underline{72.15} & 60.4 {\scriptsize\textcolor{green!50!black}{$\downarrow$3.6\%}} \\
4-bit & MBQ {\scriptsize\textcolor{gray}{[CVPR'25]}} & 70.55 & 44.53 & 38.22 & 39.89 & 70.21 & \underline{80.9} & 71.86 & 71.72 & 61.0 {\scriptsize\textcolor{green!50!black}{$\downarrow$3.0\%}} \\
4-bit & SPEED-Q {\scriptsize\textcolor{gray}{[arXiv'25]}} & 69.85 & \underline{50.87} & \underline{42.0} & 41.71 & 70.92 & 76.5 & \underline{74.40} & 72.01 & \underline{62.3} {\scriptsize\textcolor{green!50!black}{$\downarrow$1.7\%}} \\
\rowcolor{orange!10}
4-bit & \textbf{GRACE (Ours)} & \textbf{76.9} & \textbf{54.3} & \textbf{51.1} & \textbf{46.5} & \textbf{78.6} & \textbf{81.4} & \textbf{75.6} & \textbf{79.1} & \textbf{68.0} {\scriptsize\textbf{\textcolor{red}{$\uparrow$4.0\%}}} \\
\bottomrule
\end{tabular}%
}
\end{table*}

\subsection{Generalization to the Qwen2-VL} 
\label{sec:additional_qwen}
Table~\ref{tab:qwen_quant} demonstrates that GRACE generalizes beyond LLaVA 
to different VLM architectures. On Qwen2-VL (7B$\rightarrow$2B), full-precision 
distillation improves the student by \textbf{5.2\%}, showing strong capacity 
to bridge larger teacher-student gaps. Notably, the distilled student recovers 
96.5\% of the teacher's performance (69.2 vs. 71.7), demonstrating effective 
knowledge transfer even with a 3.5$\times$ parameter reduction.

For 4-bit quantization, PTQ methods (RTN, GPTQ, AWQ, MBQ) suffer 2.9--3.6\% 
degradation, with particularly severe drops on reasoning-intensive benchmarks: 
MMMU drops from 45.4 to as low as 36.2 under GPTQ. SPEED-Q~\cite{guo2025speed}, 
a recent QAT method specifically designed to address vision-language quantization 
sensitivity, still incurs a 1.7\% performance drop. In contrast, \textbf{GRACE 
achieves 68.0\% (+4.0\% over baseline)}, outperforming SPEED-Q by \textbf{5.7 
percentage points}. The improvement is consistent across all benchmarks, with 
the largest gains on MMMU (+9.1 over baseline) and MMStar (+3.4), both of which 
require complex multimodal reasoning. Furthermore, our 4-bit model nearly matches 
the 8-bit performance (68.0 vs. 68.9), indicating that GRACE effectively preserves 
critical information even under aggressive quantization. This highlights our 
advantage: rather than treating quantization as a module-specific problem, GRACE 
leverages teacher guidance to learn globally coherent, quantization-robust 
representations.

\begin{table}[!h]
\centering
\footnotesize
\setlength{\tabcolsep}{3pt}
\renewcommand{\arraystretch}{1.05}
\caption{\textbf{Comparison on seven VLM benchmarks for GRACE on Qwen3-VL} 
($8\text{B}\!\rightarrow\!2\text{B}$). The 8B model is the teacher (reference 
upper bound); GRACE variants are 2B students. Best among 2B models in 
\textbf{bold}; BF16/INT8/INT4 GRACE rows shaded in distinct bands. Green arrows 
denote the average gain over the BF16 baseline.}
\label{tab:qwen3_quant}
\resizebox{\textwidth}{!}{%
\begin{tabular}{@{}llccccccccc@{}}
\toprule
\textbf{Model} & \textbf{Prec.} & \textbf{HallB} & \textbf{MMB} & \textbf{SQA} 
& \textbf{AI2D} & \textbf{MMMU} & \textbf{SEED} & \textbf{MStar} & \textbf{Avg.} \\
\midrule
Qwen3-VL-8B \textit{(teacher)}   & BF16 & 61.1 & 84.5 & 85.0 & 85.7 & 69.6 & 77.5 & 70.9 & 76.3 \\
Qwen3-VL-2B \textit{(baseline)}  & BF16 & 51.4 & 78.4 & 81.4 & 76.9 & 53.4 & 71.2 & 58.3 & 67.3 \\
\midrule
\rowcolor{blue!8}
\textbf{Qwen3-VL-2B-GRACE} & BF16 & \textbf{66.9} & \textbf{86.4} & \textbf{86.2} & \textbf{81.3} & \textbf{72.1} & \textbf{76.7} & \textbf{67.3} & \textbf{76.7}\,\textcolor{red}{\scriptsize$\uparrow$9.4\%} \\
\rowcolor{teal!8}
\textbf{GRACE (Ours)}       & INT8 & 66.1 & 85.5 & 85.3 & 80.4 & 71.3 & 75.9 & 66.5 & 75.9\,\textcolor{red}{\scriptsize$\uparrow$8.6\%} \\
\rowcolor{orange!8}
\textbf{GRACE (Ours)}       & INT4 & 65.4 & 84.6 & 84.3 & 79.5 & 70.5 & 75.1 & 65.8 & 75.0\,\textcolor{red}{\scriptsize$\uparrow$7.7\%} \\
\bottomrule
\end{tabular}%
}
\end{table}

\subsection{Generalization to Qwen3-VL}
\label{sec:additional_qwen3}
Table~\ref{tab:qwen3_quant} extends GRACE to Qwen3-VL 
($8\text{B}\!\rightarrow\!2\text{B}$)\citep{bai2025qwen3}, the most current backbone, providing an 
up-to-date generalization check beyond the LLaVA-1.5 and Qwen2-VL results in the 
main paper. In full precision, GRACE lifts the Qwen3-VL-2B baseline by 
\textbf{+9.4\% average} (67.3$\rightarrow$76.7), with the largest gains on the 
most reasoning- and hallucination-sensitive benchmarks: \textbf{+18.7} on MMMU 
and \textbf{+15.5} on HallusionBench. Remarkably, the 2B student matches and on 
average slightly exceeds the 8B teacher (76.7 vs.\ 76.3) at roughly a quarter 
of the parameters, surpassing it outright on HallusionBench, MMBench, ScienceQA, 
and MMMU. This indicates that teacher guidance transfers not merely absolute 
accuracy but more calibrated multimodal representations.
Under quantization, GRACE remains robust. The INT8 model retains 99\% of the 
BF16 average (75.9 vs.\ 76.7) and the INT4 model retains \textbf{98\%} (75.0), 
with only a 0.9-point gap between the two precisions. Even at INT4, the student 
stays \textbf{+7.7\%} above the BF16 baseline and within 1.3 points of the 8B 
teacher, confirming that the joint QAT$+$KD objective preserves the distilled 
knowledge under aggressive low-bit compression rather than eroding it as a 
sequential distill-then-PTQ pipeline would.

\subsection{Post-training Quantization after GRACE Distillation}
\label{sec:grace_ptq}

We further investigate whether the robustness gained from GRACE distillation can be preserved by standard post-training quantization. Specifically, we apply GPTQ-Int4 and AWQ-Int4 to the GRACE-distilled BF16 model and compare them with our joint QAT+KD pipeline. All experiments are conducted on Qwen2-VL (7B$\rightarrow$2B).

\begin{table}[!h]
\centering
\setlength{\tabcolsep}{5pt}
\renewcommand{\arraystretch}{1.15}
\caption{\textbf{PTQ on a GRACE-distilled BF16 model vs.\ our joint QAT$+$KD pipeline} on Qwen2-VL ($7\text{B}\!\rightarrow\!2\text{B}$). Naive PTQ erodes the distillation gain, whereas joint optimization preserves nearly all of it at INT4. Best per column in \textbf{bold}; our method shaded.}
\label{tab:qwen_ptq_grace}
\begin{tabular}{@{}llcccc@{}}
\toprule
\textbf{Pipeline} & \textbf{Precision} & \textbf{MMMU} & \textbf{DocVQA} & \textbf{MathVista} & \textbf{Avg.} \\
\midrule
\rowcolor{gray!8}\multicolumn{6}{@{}l}{\textit{Standard pipeline (no distillation)}}\\
Baseline                                & BF16 & 45.4 & 88.3 & 44.4 & 59.4 \\
\,$\hookrightarrow$ GPTQ                & INT4 & 36.2 & 87.2 & 41.7 & 55.0 \\
\,$\hookrightarrow$ AWQ                 & INT4 & 37.3 & 87.0 & 39.9 & 54.7 \\
\midrule
\rowcolor{gray!8}\multicolumn{6}{@{}l}{\textit{GRACE pipeline (with knowledge distillation)}}\\
GRACE (KD only)                         & BF16 & \textbf{52.0} & \textbf{90.5} & \textbf{50.2} & \textbf{64.2} \\
\,$\hookrightarrow$ GPTQ                & INT4 & 46.5 & 89.2 & 46.9 & 60.9 \\
\,$\hookrightarrow$ AWQ                 & INT4 & 47.2 & 89.1 & 45.1 & 60.5 \\
\rowcolor{blue!8}
\textbf{GRACE (Joint QAT+KD, ours)}     & INT4 & 51.1 & 90.2 & 49.5 & 63.6 \\
\bottomrule
\end{tabular}
\end{table}

Table~\ref{tab:qwen_ptq_grace} shows that GRACE distillation improves PTQ robustness, but does not replace joint optimization. Applying GPTQ/AWQ to the BF16 baseline gives an average accuracy of about 54.9\%, while applying the same PTQ methods after GRACE BF16 improves the average to about 60.7\%, a gain of 5.8\%. This confirms that GRACE distillation makes the model more resilient to subsequent quantization. However, our joint GRACE INT4 pipeline still achieves 63.6\%, outperforming the best GRACE$\rightarrow$PTQ result by 2.7\%. Moreover, joint QAT+KD loses only 0.6\% from GRACE BF16, whereas the PTQ pipelines lose 3.3--3.7\%. These results indicate that distillation can improve post-hoc quantizability, but explicitly training under the quantization constraint is still necessary to allocate the limited bit budget toward task-relevant information.

\begin{figure}[!htbp]
    \centering
    \includegraphics[width=0.9\linewidth]{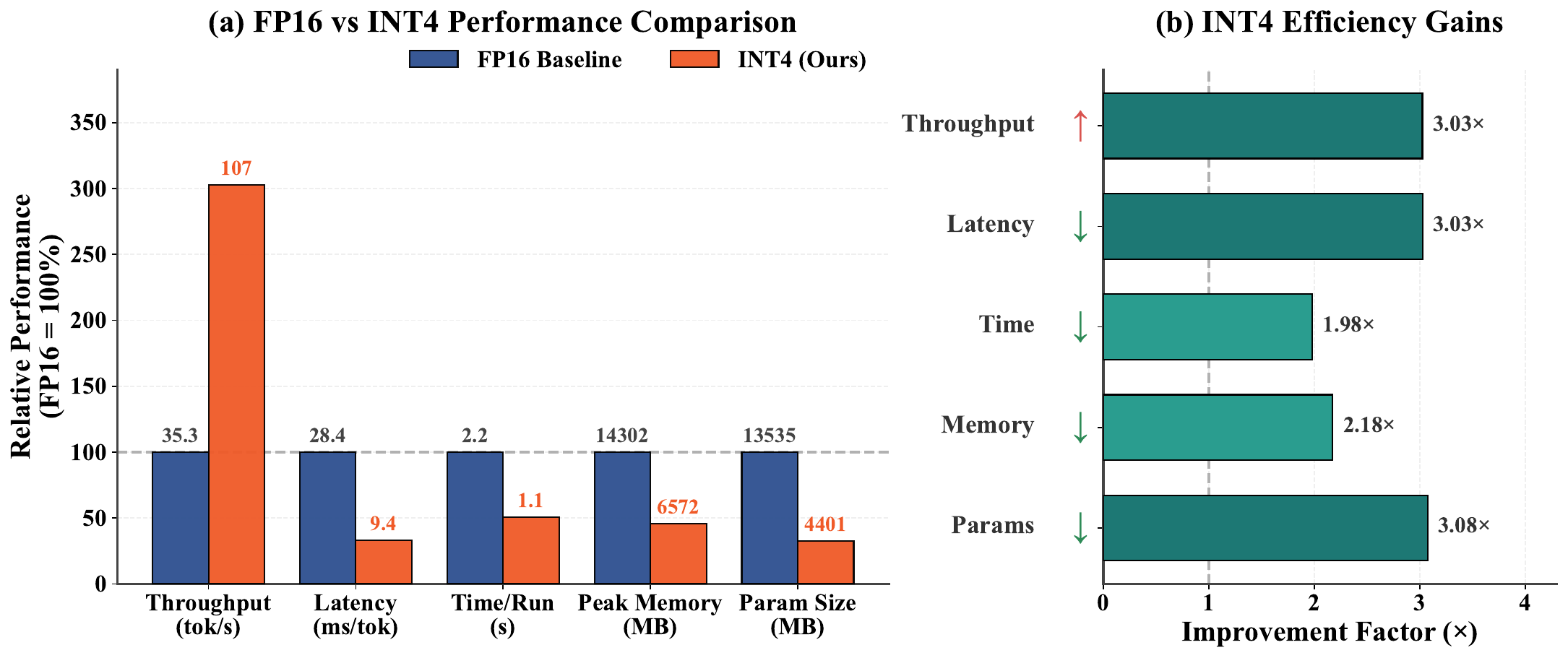}
    \caption{Deployment efficiency comparison between LLaVA-1.5 7B FP16 and GRACE 7B INT4 on A100 GPU. (a) Performance metrics normalized to FP16 baseline (100\%). (b) Improvement factors achieved by INT4 quantization.}
    \label{fig:deployment_efficiency}
\end{figure}
\subsection{Deployment Efficiency}
\label{sec:deployment_efficiency}
To evaluate practical deployment benefits, we benchmark GRACE 7B INT4 against LLaVA-1.5 7B FP16 on a single NVIDIA A100 GPU. For INT4 inference, we employ TinyChat~\cite{lin2024awq}, which provides optimized INT4 CUDA kernels including fused attention, fused MLP, and fused normalization layers. These kernels minimize memory bandwidth overhead and maximize arithmetic intensity. Unlike simulated quantization, which performs computation in higher precision, our deployment uses actual INT4 matrix multiplication with group-wise quantization (group size 128), reflecting true inference performance.

We measure throughput (tokens/s), latency (ms/token), peak GPU memory, and model parameter size. Each metric is averaged over multiple runs with warmup iterations to ensure statistical stability.

As shown in Figure~\ref{fig:deployment_efficiency}, GRACE 7B INT4 delivers substantial efficiency gains. For inference speed, our INT4 model achieves \textbf{3.03$\times$ higher throughput} (106.68 vs 35.26 tokens/s) and \textbf{3.03$\times$ lower latency} (9.37 vs 28.36 ms/token). This speedup stems from reduced memory bandwidth requirements and the efficiency of TinyChat's fused CUDA kernels. For memory efficiency, GRACE reduces \textbf{peak GPU memory by 2.18$\times$} (6,572 MB vs 14,302 MB) and \textbf{parameter storage by 3.08$\times$} (4,401 MB vs 13,535 MB). These reductions enable deployment on resource-constrained devices or support larger batch sizes, demonstrating that GRACE delivers significant practical benefits alongside preserved accuracy (Section~\ref{sec:main_results}).

\section{Theoretical Proofs}
\label{appendix:proofs}

This appendix provides detailed proofs for the theoretical results presented in Section~\ref{sec:method}.

%=============================================================================
\subsection{Fano's Inequality and Confidence-Gated Distillation}
\label{appendix:fano}

In Section~\ref{sec:motivation_confidence}, we empirically observed a strong 
correlation between teacher entropy and prediction error rate. Here we provide 
the theoretical foundation for this observation through Fano's inequality.

\begin{theorem}[Fano's Inequality]
Let $X$ be a discrete random variable taking values in $\mathcal{X}$ with 
$|\mathcal{X}| = K$, and let $\hat{X}$ be an estimate of $X$ based on an 
observation $Y$. Define the error probability $P_e = \Pr(\hat{X} \neq X)$. 
Then:
\begin{equation}
    H(X|Y) \leq H(P_e) + P_e \log(K-1)
\end{equation}
where $H(P_e) = -P_e\log P_e - (1-P_e)\log(1-P_e)$ is the binary entropy function.
\end{theorem}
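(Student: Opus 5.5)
We prove Fano's inequality by the standard route of introducing an error indicator and computing one joint conditional entropy in two ways. Let $E = \mathbbm{1}\{\hat{X} \neq X\}$, so $E$ is Bernoulli with $\Pr(E=1) = P_e$. We assume, as the statement intends, that $\hat{X}$ is a function of $Y$ (more generally, that $X \to Y \to \hat{X}$ is a Markov chain). The central object is $H(E, X \mid \hat{X})$, which we expand with the chain rule in the two possible orders.

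\textbf{First expansion.} Conditioning on $X$ first gives $H(E, X \mid \hat{X}) = H(X \mid \hat{X}) + H(E \mid X, \hat{X})$. Since $E$ is a deterministic function of $(X, \hat{X})$, the second term vanishes, so $H(E, X \mid \hat{X}) = H(X \mid \hat{X})$.

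\textbf{Second expansion.} Conditioning on $E$ first gives $H(E, X \mid \hat{X}) = H(E \mid \hat{X}) + H(X \mid E, \hat{X})$. We bound the two terms separately.
\begin{itemize}
\item Conditioning reduces entropy, so $H(E \mid \hat{X}) \leq H(E) = H(P_e)$.
\item For the other term, split on the value of $E$:
\[
H(X \mid E, \hat{X}) = (1-P_e)\, H(X \mid E=0, \hat{X}) + P_e\, H(X \mid E=1, \hat{X}).
\]
On $\{E=0\}$ we have $X = \hat{X}$, so the first conditional entropy is zero. On $\{E=1\}$, $X$ lies in $\mathcal{X}\setminus\{\hat{X}\}$, a set of at most $K-1$ values, so its conditional entropy is at most $\log(K-1)$.
\end{itemize}
Combining the two expansions yields $H(X \mid \hat{X}) \leq H(P_e) + P_e \log(K-1)$.

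\textbf{Transfer from $\hat{X}$ to $Y$.} It remains to show $H(X \mid Y) \leq H(X \mid \hat{X})$. This follows from the data-processing inequality $I(X;\hat{X}) \leq I(X;Y)$ applied to the chain $X \to Y \to \hat{X}$. When $\hat{X} = g(Y)$ it is even simpler: $H(X \mid Y) = H(X \mid Y, g(Y)) \leq H(X \mid g(Y))$.

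\textbf{Main obstacle.} This is the only step needing care: the statement does not say explicitly that $\hat{X}$ depends on $X$ only through $Y$. We would make that Markov assumption explicit. We would also note that if $\hat{X}$ may take values outside $\mathcal{X}$, then on $\{E=1\}$ the variable $X$ can range over all $K$ values. The bound then uses $\log K$ instead of $\log(K-1)$, so we restrict $\hat{X}$ to take values in $\mathcal{X}$ as in the standard formulation.
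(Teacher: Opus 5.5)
Your proof is correct. It is the standard textbook argument: expand $H(E,X\mid\hat{X})$ by the chain rule in both orders, bound $H(E\mid\hat{X})\le H(P_e)$ and $H(X\mid E,\hat{X})\le P_e\log(K-1)$, then pass from $\hat{X}$ to $Y$ by data processing. Every step holds.

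The paper gives no proof of this statement. It quotes Fano's inequality as a classical result and only uses the rearranged form $P_e \ge (H(X\mid Y)-1)/\log K$, so there is no competing route to compare against.

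The two hypotheses you make explicit are genuinely needed; the paper's wording leaves both implicit.

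\begin{itemize}
\item Markov assumption. Suppose $\hat{X}$ may use information about $X$ beyond $Y$, for example $\hat{X}=X$ with $Y$ constant. Then $P_e=0$ and the right-hand side is $0$, while $H(X\mid Y)=H(X)>0$ for any non-degenerate $X$.
\item $\hat{X}$ valued in $\mathcal{X}$. Take $K=2$, $X$ uniform, $Y$ constant, and $\hat{X}$ always an erasure symbol outside $\mathcal{X}$. Then $P_e=1$ and the right-hand side is $H(1)+\log 1=0$, while $H(X\mid Y)=\log 2$.
\end{itemize}
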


Rearranging this inequality yields a lower bound on the error probability:
\begin{equation}
    P_e \geq \frac{H(X|Y) - 1}{\log K}
\label{eq:fano_lower}
\end{equation}

\textbf{Connection to Knowledge Distillation.} In the context of knowledge 
distillation, we can interpret the teacher's output distribution $P_T$ as 
providing a noisy observation of the true label $X$. The entropy of the 
teacher's prediction, $H(P_T) = -\sum_v P_T(v)\log P_T(v)$, serves as a 
proxy for the conditional entropy $H(X|Y)$. According to Eq.~\ref{eq:fano_lower}, 
higher entropy in the teacher's output necessarily implies a higher lower 
bound on the probability of prediction error.

This theoretical result provides information-theoretic justification for our 
confidence-gated distillation mechanism: when the teacher exhibits high 
entropy (uncertainty), Fano's inequality guarantees that the error probability 
is bounded away from zero. Consequently, such predictions are inherently 
unreliable supervision signals, and down-weighting them during distillation 
prevents the student from learning from noisy or erroneous teacher outputs.

\textbf{Empirical Validation.} Our experiments in Section~\ref{sec:motivation_confidence} 
confirm this theoretical prediction. The strong correlation between teacher 
entropy and error rate (Pearson $r = 0.484$, binned $R^2 = 0.901$) demonstrates 
that the bound established by Fano's inequality manifests as a practical 
relationship in VLM distillation, validating the design of our confidence 
gating function $g(\cdot)$ in Eq.~\ref{eq:confidence_gate}.

\subsection{Proof of Theorem~\ref{thm:gate_effect}: Effect of Confidence Gating}
\label{appendix:proof_theorem1}

\begin{theorem*}[Restated]
Let $g_i = \exp(-\tilde{h}_i)$ and $w_i = g_i / \sum_{j=1}^N g_j$ be the normalized confidence weights satisfying $\sum_{i=1}^N w_i = 1$. Let $L_i \triangleq \mathcal{L}_{\text{DKD}}^{(i)}$ denote the per-token DKD loss and $\bar{L} \triangleq \frac{1}{N}\sum_{i=1}^N L_i$ its unweighted average. Define the sample covariance under the uniform distribution over token indices:
\begin{equation}
\mathrm{Cov}(w_i, L_i) \triangleq \frac{1}{N}\sum_{i=1}^N (w_i - \bar{w})(L_i - \bar{L}),
\qquad \bar{w} \triangleq \frac{1}{N}\sum_{i=1}^N w_i = \frac{1}{N}.
\end{equation}
Then the gated loss satisfies the exact identity
\begin{equation}
\mathcal{L}_{\text{GDKD}} = \bar{L} + N \cdot \mathrm{Cov}(w_i, L_i).
\label{eq:gate_effect_restate}
\end{equation}
Moreover, under the monotonicity assumption in Eq.~\eqref{eq:assump_monotone}, we have $\mathrm{Cov}(w_i, L_i) \le 0$ and thus $\mathcal{L}_{\text{GDKD}} \le \bar{L}$.
\end{theorem*}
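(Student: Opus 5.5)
The identity is a direct algebraic rewrite, so I will prove it first and treat the sign claim separately. By Eq.~\eqref{eq:gdkd}, $\mathcal{L}_{\text{GDKD}} = \sum_{i=1}^N w_i L_i$. I expand the covariance as defined in the restated theorem, using $\bar w = 1/N$:
\begin{equation*}
N\cdot \mathrm{Cov}(w_i,L_i) = \sum_{i=1}^N (w_i - \bar w)(L_i-\bar L) = \sum_i w_i L_i - \bar L\sum_i w_i - \bar w \sum_i L_i + N\bar w \bar L .
\end{equation*}
Substituting $\sum_i w_i = 1$, $\sum_i L_i = N\bar L$ and $N\bar w = 1$, the last three terms become $-\bar L - \bar L + \bar L = -\bar L$. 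Rearranging gives Eq.~\eqref{eq:gate_effect_restate} exactly, with no assumptions beyond $\sum_j g_j>0$. This holds automatically since $g_j = \exp(-\tilde h_j) > 0$.

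For the sign statement I first record that $w_i$ is a strictly decreasing function of $\tilde h_i$ within a fixed batch. The reason is that $g_i = \exp(-\tilde h_i)$ is strictly decreasing and the normalizer $\sum_j g_j$ is common to all $i$. The monotonicity assumption referenced as Eq.~\eqref{eq:assump_monotone} should say that $L_i$ is a nondecreasing function of $\tilde h_i$, or more weakly that $L_i$ and $\tilde h_i$ are similarly ordered. Under that reading, $w_i$ and $L_i$ are \emph{oppositely ordered}: for any pair $i,j$ we have $(w_i - w_j)(L_i - L_j) \le 0$.

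I then use the standard pairwise representation of the sample covariance:
\begin{equation*}
\mathrm{Cov}(w_i,L_i) = \frac{1}{2N^2}\sum_{i,j}(w_i-w_j)(L_i-L_j).
\end{equation*}
This is verified by expanding the double sum, and it is Chebyshev's sum inequality in disguise. Each summand is $\le 0$, so $\mathrm{Cov}(w_i,L_i)\le 0$, and combining with the identity gives $\mathcal{L}_{\text{GDKD}} \le \bar L$.

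The main obstacle is not the algebra but pinning down the hypothesis. The informal wording ``positive correlation between entropy and loss implies $\mathrm{Cov} < 0$'' is false in general. Positive correlation of $L$ with $\tilde h$ does not transfer to negative correlation with the \emph{nonlinear} function $w = c\,e^{-\tilde h}$ without a comonotonicity-type condition. I will therefore state the result under the pairwise ordering assumption above. I will also remark that strict inequality requires at least one pair with $w_i\ne w_j$ and $L_i\ne L_j$. Otherwise the covariance can vanish, for example when all tokens have equal entropy.
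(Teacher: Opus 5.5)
Your proof is correct and matches the paper's argument essentially step for step: the same expansion giving $N\cdot\mathrm{Cov}(w_i,L_i)=\sum_i w_iL_i-\bar L$, the same pairwise representation $\mathrm{Cov}(w_i,L_i)=\frac{1}{2N^2}\sum_{i,j}(w_i-w_j)(L_i-L_j)$ with each summand nonpositive, and the same strictness condition. The hypothesis you settled on, $(\tilde h_i-\tilde h_j)(L_i-L_j)\ge 0$ for all pairs, is exactly the paper's Entropy--Loss Monotonicity assumption, which the appendix uses in place of the main text's ``positive correlation'' wording, for the reason you give.
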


\begin{proof}
We proceed in four steps.

\paragraph{Step 1: Setup.}
By definition of confidence gating,
\begin{equation}
\mathcal{L}_{\text{GDKD}}
= \frac{\sum_{i=1}^N g_i L_i}{\sum_{i=1}^N g_i}
= \sum_{i=1}^N w_i L_i,
\qquad
w_i = \frac{g_i}{\sum_{j=1}^N g_j}, \quad \sum_{i=1}^N w_i = 1.
\label{eq:gdkd_weighted_sum}
\end{equation}

\paragraph{Step 2: Exact covariance decomposition.}
Using the definition of sample covariance and $\bar{w} = \frac{1}{N}$,
\begin{align}
\mathrm{Cov}(w_i, L_i)
&= \frac{1}{N}\sum_{i=1}^N (w_i - \bar{w})(L_i - \bar{L}) \nonumber\\
&= \frac{1}{N}\sum_{i=1}^N w_i L_i - \bar{w}\bar{L},
\label{eq:cov_expand}
\end{align}
where the cross terms vanish since $\sum_i (w_i-\bar{w})=0$ and $\sum_i (L_i-\bar{L})=0$.
Since $\bar{w}=\frac{1}{N}$, multiplying Eq.~\eqref{eq:cov_expand} by $N$ yields
\begin{equation}
N\cdot \mathrm{Cov}(w_i, L_i)
= \sum_{i=1}^N w_i L_i - \bar{L}.
\label{eq:cov_identity}
\end{equation}
Combining Eq.~\eqref{eq:cov_identity} with Eq.~\eqref{eq:gdkd_weighted_sum} gives the desired identity:
\begin{equation}
\boxed{\mathcal{L}_{\text{GDKD}}
= \sum_{i=1}^N w_i L_i
= \bar{L} + N\cdot \mathrm{Cov}(w_i, L_i)}
\end{equation}
which proves Eq.~\eqref{eq:gate_effect_restate}.

\paragraph{Step 3: Sufficient condition for $\mathrm{Cov}(w_i,L_i)\le 0$.}
We impose the following monotonicity assumption:

\begin{assumption}[Entropy--Loss Monotonicity]
\label{assump:monotone}
The per-token DKD loss is weakly non-decreasing with respect to teacher normalized entropy:
\begin{equation}
(\tilde{h}_i - \tilde{h}_j)(L_i - L_j) \ge 0,\qquad \forall\, i,j \in \{1,\dots,N\}.
\label{eq:assump_monotone}
\end{equation}
\end{assumption}

Since $g_i=\exp(-\tilde{h}_i)$ is strictly decreasing in $\tilde{h}_i$ and the normalization constant $\sum_j g_j$ is positive and independent of index ordering, we have the opposite ordering between $w_i$ and $\tilde{h}_i$:
\begin{equation}
\tilde{h}_i \ge \tilde{h}_j \implies g_i \le g_j \implies w_i \le w_j.
\label{eq:w_decreasing_entropy}
\end{equation}

To determine the sign of the covariance, we use the pairwise form:
\begin{equation}
\mathrm{Cov}(w_i, L_i)
= \frac{1}{2N^2}\sum_{i=1}^N\sum_{j=1}^N (w_i - w_j)(L_i - L_j).
\label{eq:cov_pairwise}
\end{equation}
This identity can be verified by expanding the double sum:
$\sum_{i,j}(w_i-w_j)(L_i-L_j) = 2N\sum_i w_iL_i - 2(\sum_i w_i)(\sum_i L_i)$.

Now consider any pair $(i,j)$:
\begin{itemize}
\item If $\tilde{h}_i \ge \tilde{h}_j$, then by Eq.~\eqref{eq:assump_monotone}, $L_i \ge L_j$, and by Eq.~\eqref{eq:w_decreasing_entropy}, $w_i \le w_j$. Thus $(w_i-w_j)(L_i-L_j)\le 0$.
\item If $\tilde{h}_i \le \tilde{h}_j$, the inequalities reverse, and we still have $(w_i-w_j)(L_i-L_j)\le 0$.
\end{itemize}
Hence every term in Eq.~\eqref{eq:cov_pairwise} is non-positive, implying
\begin{equation}
\boxed{\mathrm{Cov}(w_i, L_i) \le 0}
\end{equation}
Furthermore, if there exists at least one pair $(i,j)$ such that $\tilde{h}_i \ne \tilde{h}_j$ and $L_i \ne L_j$, then at least one term is strictly negative, and thus $\mathrm{Cov}(w_i, L_i) < 0$.

\paragraph{Step 4: Conclusion.}
Substituting $\mathrm{Cov}(w_i,L_i)\le 0$ into Eq.~\eqref{eq:gate_effect_restate} yields
\begin{equation}
\boxed{\mathcal{L}_{\text{GDKD}} = \bar{L} + N\cdot \mathrm{Cov}(w_i,L_i) \le \bar{L}}
\end{equation}
with strict inequality whenever $\mathrm{Cov}(w_i,L_i) < 0$.
This completes the proof.
\end{proof}

\noindent\textbf{Remark.} When $\tilde{h}_i$ and $\mathcal{L}_{\text{DKD}}^{(i)}$ are positively correlated (validated empirically in Figure~\ref{fig:entropy_error}), high-entropy tokens tend to have high loss. Since $w_i$ assigns lower weights to these tokens, the covariance $\mathrm{Cov}(w_i, \mathcal{L}_{\text{DKD}}^{(i)})$ is negative, and thus $\mathcal{L}_{\text{GDKD}} < \bar{\mathcal{L}}_{\text{DKD}}$. This confirms that gating reduces effective distillation pressure on unreliable predictions and concentrates gradients on confident supervision.

\paragraph{Interpretation.} The theorem provides an exact characterization of confidence gating without requiring any approximation. The gated loss differs from the uniform average by a term proportional to the covariance between normalized weights and per-token losses. When high-entropy tokens (which receive lower weights) also exhibit higher distillation loss, this covariance is negative, causing the gated loss to be smaller than the unweighted average. This confirms that entropy-based gating automatically identifies and down-weights supervision signals that are unreliable (high entropy) and difficult to match (high loss), thereby improving distillation quality.

%=============================================================================
\subsection{Proof of Proposition~\ref{prop:mi_gap}: Variational Lower Bound and KL Gap}
\label{appendix:proof_proposition1}

\paragraph{Assumptions and Justification.}
Our analysis relies on two structural assumptions that we state explicitly:

\begin{assumption}[Deterministic Encoder]
\label{assume:encoder}
The student representation $Z_S = f_S(X)$ is a deterministic function of the input.
\end{assumption}

\begin{assumption}[Sufficient Statistic Decoder]
\label{assume:decoder}
The student prediction depends on input only through its representation: $P_S(y \mid X) = P_S(y \mid Z_S)$.
\end{assumption}

\noindent\textbf{Justification.} These assumptions hold \emph{by construction} in standard VLM architectures. The encoder $f_S$ (comprising the vision encoder, projection layer, and LLM backbone) deterministically maps inputs to hidden states. The language model head then produces output distributions solely from these final-layer representations, satisfying Assumption~\ref{assume:decoder}. Together, they imply the Markov chain $Y_T \to X \to Z_S$, which is the data processing inequality's precondition. This structure is not a restrictive modeling choice but rather an accurate description of how modern transformer-based VLMs operate.

\begin{proposition*}[Restated]
Let $Y_T$ be a pseudo-label sampled from the teacher, i.e., $Y_T \mid X=x \sim P_T(\cdot\mid x)$.
Let $Z_S=f_S(X)$ denote the student's representation. Under Assumptions~\ref{assume:encoder} and~\ref{assume:decoder}, we have
\begin{equation}
I(Z_S;Y_T) \ge I(X;Y_T) - \mathbb{E}_{X}\!\left[ D_{\mathrm{KL}}\!\left(P_T(\cdot\mid X)\,\|\,P_S(\cdot\mid X)\right)\right].
\end{equation}
\end{proposition*}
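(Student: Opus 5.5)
The plan is to use the standard Barber--Agakov variational lower bound on mutual information, with the student head $P_S(\cdot\mid Z_S)$ as the variational decoder, and then rewrite the resulting cross-entropy as a teacher entropy term plus a KL term. Write $p(y\mid z)$ for the true conditional law of $Y_T$ given $Z_S=z$, induced by the joint law of $(X,Y_T)$ and the deterministic map $f_S$ (Assumption~\ref{assume:encoder}). Since $Y_T$ takes values in the finite vocabulary, all entropies below are finite.

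\textbf{Step 1 (variational bound).} Start from $I(Z_S;Y_T)=H(Y_T)-H(Y_T\mid Z_S)$. For any conditional distribution $q(\cdot\mid z)$, Gibbs' inequality gives
\begin{equation*}
H(Y_T\mid Z_S) = -\mathbb{E}\left[\log q(Y_T\mid Z_S)\right] - \mathbb{E}_{Z_S}\left[D_{\mathrm{KL}}\left(p(\cdot\mid Z_S)\,\|\,q(\cdot\mid Z_S)\right)\right] \le -\mathbb{E}\left[\log q(Y_T\mid Z_S)\right].
\end{equation*}
Take $q=P_S(\cdot\mid Z_S)$. This is a legitimate conditional given $Z_S$ precisely because of Assumption~\ref{assume:decoder}. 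The result is
\begin{equation*}
I(Z_S;Y_T)\ge H(Y_T)+\mathbb{E}\left[\log P_S(Y_T\mid Z_S)\right].
\end{equation*}

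\textbf{Step 2 (rewrite the cross-entropy).} By Assumption~\ref{assume:decoder}, $P_S(Y_T\mid Z_S)=P_S(Y_T\mid X)$. Condition on $X$ and use $Y_T\mid X\sim P_T(\cdot\mid X)$:
\begin{equation*}
\mathbb{E}\left[\log P_S(Y_T\mid X)\right]=\mathbb{E}_X\Big[\sum_y P_T(y\mid X)\log P_S(y\mid X)\Big].
\end{equation*}
The inner sum equals $-H(P_T(\cdot\mid X))-D_{\mathrm{KL}}(P_T(\cdot\mid X)\,\|\,P_S(\cdot\mid X))$. Also $\mathbb{E}_X[H(P_T(\cdot\mid X))]=H(Y_T\mid X)$. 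Substituting into Step 1 gives
\begin{equation*}
I(Z_S;Y_T)\ge H(Y_T)-H(Y_T\mid X)-\mathbb{E}_X\left[D_{\mathrm{KL}}(P_T\,\|\,P_S)\right].
\end{equation*}
Since $H(Y_T)-H(Y_T\mid X)=I(X;Y_T)$, this is exactly the claim of the proposition.

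The only step needing care is Step 1, specifically that plugging $P_S$ in as a decoder conditioned on $Z_S$ rather than on $X$ is legitimate. This is exactly what Assumption~\ref{assume:decoder} provides. If $Z_S$ is continuous, the conditional $p(\cdot\mid z)$ is understood as a regular conditional probability. No density for $Z_S$ is ever needed, because only the discrete variable $Y_T$ appears inside the logarithms.

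As a by-product, the slack in the bound is exactly $\mathbb{E}_{Z_S}[D_{\mathrm{KL}}(p(\cdot\mid Z_S)\,\|\,P_S(\cdot\mid Z_S))]+I(X;Y_T\mid Z_S)$. This supports the interpretation given after Proposition~\ref{prop:mi_gap}, and it is consistent with the data processing inequality $I(Z_S;Y_T)\le I(X;Y_T)$ that follows from the Markov chain $Y_T\to X\to Z_S$.
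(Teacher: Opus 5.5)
Your proof is correct and follows the paper's argument essentially step for step: the variational bound $H(Y_T\mid Z_S)\le -\mathbb{E}[\log P_S(Y_T\mid Z_S)]$ with the student head as decoder (justified by Assumption~\ref{assume:decoder}), then the split of the resulting cross-entropy into $H(P_T)+D_{\mathrm{KL}}(P_T\,\|\,P_S)$, then the identification $\mathbb{E}_X[H(P_T(\cdot\mid X))]=H(Y_T\mid X)$. Your side remark on the slack is wrong, though: the exact slack is $\mathbb{E}_{Z_S}[D_{\mathrm{KL}}(p(\cdot\mid Z_S)\,\|\,P_S(\cdot\mid Z_S))]$ alone, as in the paper's Remark~\ref{remark:tightness}, because $I(X;Y_T\mid Z_S)$ is already contained in $\mathbb{E}_X[D_{\mathrm{KL}}(P_T\,\|\,P_S)]=I(X;Y_T\mid Z_S)+\mathbb{E}_{Z_S}[D_{\mathrm{KL}}(p(\cdot\mid Z_S)\,\|\,P_S(\cdot\mid Z_S))]$, so adding it again would force $I(Z_S;Y_T)=I(X;Y_T)$ in general.
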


\begin{proof}
We proceed in six steps.

\paragraph{Step 1: Mutual information definition.}
By definition,
\begin{equation}
I(Z_S;Y_T) = H(Y_T) - H(Y_T\mid Z_S).
\label{eq:mi_def_app}
\end{equation}

\paragraph{Step 2: Variational upper bound on $H(Y_T\mid Z_S)$.}
For any conditional distribution $q(y\mid z)$,
\begin{align}
H(Y_T\mid Z_S)
&= \mathbb{E}_{Y_T,Z_S}\big[-\log p(Y_T\mid Z_S)\big] \nonumber\\
&\le \mathbb{E}_{Y_T,Z_S}\big[-\log q(Y_T\mid Z_S)\big],
\label{eq:var_ce_bound}
\end{align}
since $\mathbb{E}_{Z_S}\!\left[D_{\mathrm{KL}}\!\left(p(\cdot\mid Z_S)\,\|\,q(\cdot\mid Z_S)\right)\right]\ge 0$.

\paragraph{Step 3: Student decoder as variational approximation.}
We choose $q(\cdot\mid Z_S) = P_S(\cdot\mid Z_S)$. Using $Z_S = f_S(X)$ (Assumption~\ref{assume:encoder}) and $Y_T\mid X \sim P_T(\cdot\mid X)$,
\begin{align}
\mathbb{E}_{Y_T,Z_S}\big[-\log P_S(Y_T\mid Z_S)\big]
&= \mathbb{E}_{X}\, \mathbb{E}_{Y_T\mid X}\big[-\log P_S(Y_T\mid f_S(X))\big] \nonumber\\
&= \mathbb{E}_{X}\left[ H\!\left(P_T(\cdot\mid X),\, P_S(\cdot\mid f_S(X))\right)\right].
\label{eq:rewrite_ce}
\end{align}
By Assumption~\ref{assume:decoder}, $P_S(\cdot\mid X) = P_S(\cdot\mid f_S(X))$, so the right-hand side equals
$\mathbb{E}_X\!\left[H\!\left(P_T(\cdot\mid X), P_S(\cdot\mid X)\right)\right]$.
Combining with Eq.~\eqref{eq:var_ce_bound}:
\begin{equation}
\boxed{H(Y_T\mid Z_S) \le \mathbb{E}_{X}\left[ H\!\left(P_T(\cdot\mid X),\, P_S(\cdot\mid X)\right)\right]}
\label{eq:HY_given_Z_bound}
\end{equation}

\paragraph{Step 4: Cross-entropy decomposition.}
For each $x$,
\begin{equation}
H\!\left(P_T(\cdot\mid x),\, P_S(\cdot\mid x)\right)
= H\!\left(P_T(\cdot\mid x)\right) + D_{\mathrm{KL}}\!\left(P_T(\cdot\mid x)\,\|\,P_S(\cdot\mid x)\right).
\end{equation}
Taking expectation over $X$:
\begin{equation}
\mathbb{E}_{X}\!\left[ H(P_T, P_S)\right]
= \mathbb{E}_{X}\!\left[H(P_T)\right] + \mathbb{E}_{X}\!\left[D_{\mathrm{KL}}(P_T\|P_S)\right].
\label{eq:ce_decompose}
\end{equation}

\paragraph{Step 5: Identifying $H(Y_T\mid X)$.}
Since $Y_T\mid X \sim P_T(\cdot\mid X)$,
\begin{equation}
\mathbb{E}_{X}\!\left[H\!\left(P_T(\cdot\mid X)\right)\right] = H(Y_T\mid X).
\end{equation}
Substituting into Eq.~\eqref{eq:HY_given_Z_bound} via Eq.~\eqref{eq:ce_decompose}:
\begin{equation}
\boxed{H(Y_T\mid Z_S) \le H(Y_T\mid X) + \mathbb{E}_{X}\!\left[D_{\mathrm{KL}}\!\left(P_T(\cdot\mid X)\,\|\,P_S(\cdot\mid X)\right)\right]}
\label{eq:HY_bound_final}
\end{equation}

\paragraph{Step 6: Final bound.}
Substituting Eq.~\eqref{eq:HY_bound_final} into Eq.~\eqref{eq:mi_def_app}:
\begin{align}
I(Z_S;Y_T)
&\ge H(Y_T) - H(Y_T\mid X) - \mathbb{E}_{X}\!\left[D_{\mathrm{KL}}(P_T\|P_S)\right] \nonumber\\
&= I(X;Y_T) - \mathbb{E}_{X}\!\left[D_{\mathrm{KL}}(P_T\|P_S)\right].
\end{align}
Therefore,
\begin{equation}
\boxed{I(Z_S;Y_T) \ge I(X;Y_T) - \mathbb{E}_{X}\!\left[D_{\mathrm{KL}}(P_T\|P_S)\right]}
\label{eq:final_bound}
\end{equation}
which completes the proof.
\end{proof}

\begin{remark}[Tightness of the Bound]
\label{remark:tightness}
The gap between $I(Z_S; Y_T)$ and its lower bound equals $\mathbb{E}_{Z_S}[D_{\mathrm{KL}}(p(Y_T|Z_S) \| P_S(Y_T|Z_S))]$,the expected divergence between the true posterior $p(Y_T|Z_S)$ and the student's approximation $P_S(Y_T|Z_S)$. In well-trained models where the student closely approximates this posterior, the gap diminishes, making the bound practically tight. Empirically, we observe that converged GRACE models achieve near-zero KL divergence on high-confidence tokens, indicating the bound is approached in practice.
\end{remark}

\paragraph{Interpretation and Connection to GRACE.}
This proposition establishes a principled foundation for KL-based distillation:

\begin{enumerate}[leftmargin=*]
\item \textbf{Information-theoretic ceiling.} The term $I(X; Y_T)$ represents the maximum mutual information any student could achieve---the information the teacher captures about its own predictions.

\item \textbf{KL as information gap.} The expected KL divergence $\mathbb{E}[D_{\mathrm{KL}}(P_T \| P_S)]$ quantifies exactly how much information the student fails to preserve. Perfect matching ($P_S = P_T$) recovers the maximum.

\item \textbf{Confidence gating rationale.} Low-entropy (high-confidence) teacher predictions carry more recoverable information per token. By down-weighting high-entropy tokens, confidence gating allocates the quantized student's limited capacity to preserving the most informative supervision signals.

\item \textbf{Adaptive $\beta$ as constraint enforcement.} The IB controller directly operationalizes this bound: treating $\mathbb{E}[D_{\mathrm{KL}}]$ as a constraint ensures the student maintains sufficient information about $Y_T$ while the quantization bottleneck limits representation complexity.
\end{enumerate}
%=============================================================================
\subsection{Theoretical Foundation of the Adaptive IB Controller}
\label{appendix:ib_derivation}

This appendix provides a rigorous derivation of the adaptive controller and establishes its connection to the Information Bottleneck (IB) principle.

\subsubsection{From Information Bottleneck to Constrained Optimization}

\paragraph{Mapping IB to GRACE.}
The Information Bottleneck principle~\cite{tishby2000information} seeks representations that maximize task-relevant information while limiting complexity. In GRACE, each IB term admits a concrete realization that directly corresponds to our framework components:

\begin{itemize}[leftmargin=*]
    \item \textbf{Complexity constraint} $I(Z_S; X) \leq C$: Quantization \emph{physically} enforces this bound through architectural constraints. Reducing bit-width from 16-bit to $b$-bit limits each layer's information capacity to at most $n \cdot b$ bits, where $n$ is the number of parameters. Unlike soft regularization in standard IB formulations, this constitutes a \emph{hard} constraint that cannot be violated during inference—the discrete nature of quantized weights fundamentally limits the representational capacity.
    
    \item \textbf{Information preservation} $I(Z_S; Y_T)$: From Proposition~\ref{prop:mi_gap}, minimizing $\mathbb{E}[D_{\text{KL}}(P_T \| P_S)]$ maximizes a lower bound on $I(Z_S; Y_T)$. The confidence-gated distillation loss $\mathcal{L}_{\text{GDKD}}$ serves as a tractable surrogate for this KL divergence, with the gating mechanism focusing optimization on tokens where information transfer is most reliable.
\end{itemize}

This mapping reveals the complementary roles of GRACE's components: quantization handles the compression side of the IB trade-off through hard constraints, while distillation handles the information preservation side through loss minimization.

\paragraph{Placement of $\beta$ on the Distillation Term.}
In the standard IB Lagrangian formulation $\max I(Z;Y) - \beta I(Z;X)$, the multiplier $\beta$ penalizes the complexity term. In GRACE, $\beta$ instead weights the distillation term in Eq.~\ref{eq:lagrangian}. This placement follows naturally from the constrained formulation in Eq.~\ref{eq:ib_constrained}: since quantization already enforces $I(Z_S; X) \leq C_b$ as an architectural constraint, explicit penalization of complexity becomes unnecessary.

The optimization focus shifts to ensuring sufficient information preservation under fixed capacity. Directly maximizing $I(Z_S; Y_T)$ without bound, however, can lead to overfitting to teacher outputs at the expense of task performance. The reformulation in Eq.~\ref{eq:constrained} addresses this by minimizing task loss subject to a knowledge retention constraint. The resulting Lagrangian in Eq.~\ref{eq:lagrangian} places $\beta$ on the distillation term as the dual variable enforcing this constraint. This is not an inversion of standard IB but rather a \emph{complementary} formulation appropriate when the complexity constraint is architecturally enforced.

\subsubsection{Principled Adaptation via Dual Ascent}

\paragraph{Theoretical Grounding.}
The projected dual ascent update in Eq.~\ref{eq:beta_update} implements a principled optimization strategy with well-understood convergence properties. Under mild assumptions (Lipschitz gradients, bounded iterates), projected dual ascent converges to a stationary point of the Lagrangian~\cite{bertsimas2005optimal}. The EMA smoothing ($\widehat{\mathcal{L}}_{\text{GDKD}}$) reduces oscillations from stochastic gradients while preserving these convergence guarantees.

\paragraph{Advantages over Fixed Weighting.}
The constrained optimization perspective provides concrete benefits over heuristic loss weighting schemes:

\begin{enumerate}[leftmargin=*]
    \item \textbf{Interpretable target:} The threshold $\tau$ directly specifies the information preservation budget—the maximum allowable gap between student and teacher knowledge. This interpretability contrasts with arbitrary fixed weights whose optimal values vary across model architectures and datasets.
    
    \item \textbf{Automatic adaptation:} The dual ascent mechanism automatically adjusts $\beta$ throughout training. During early stages when the student is far from the teacher, $\widehat{\mathcal{L}}_{\text{GDKD}} > \tau$ triggers increased $\beta$, strengthening distillation pressure. As training progresses and knowledge transfer succeeds, $\beta$ decreases to allow focus on task-specific learning.
    
    \item \textbf{Robustness:} Rather than requiring separate hyperparameter searches for $\beta$ across different quantization settings (W4A16, W3A16, etc.), the controller adapts to each setting's capacity constraints. Lower bit-widths naturally require stronger distillation to compensate for reduced capacity, and the controller discovers this automatically.
\end{enumerate}

\paragraph{Dynamic Behavior During Training.}
The interplay between quantization constraints and distillation pressure creates distinctive training dynamics. In early training, quantization-induced capacity limitations cause significant teacher-student divergence, driving $\widehat{\mathcal{L}}_{\text{GDKD}}$ above $\tau$ and increasing $\beta$. As the model adapts its weights to the quantization constraints and learns to preserve critical teacher knowledge within the limited bit-budget, $\widehat{\mathcal{L}}_{\text{GDKD}}$ decreases. Once the constraint is satisfied, reduced $\beta$ allows the cross-entropy loss to guide fine-grained task adaptation. This automatic curriculum—strong distillation early, task focus later—emerges naturally from the optimization dynamics without explicit scheduling.

\subsubsection{Unified IB Perspective}

The IB framework provides a coherent lens for understanding how GRACE's components address different aspects of efficient VLM compression:

\begin{itemize}[leftmargin=*]
    \item \textbf{Confidence gating} implements selective information transfer. High-entropy teacher predictions contribute less reliable information about the target distribution; down-weighting these tokens focuses the student's limited capacity on preserving the most informative supervision signals, directly supporting the $\max I(Z_S; Y_T)$ objective.
    
    \item \textbf{Quantization} enforces the complexity constraint $I(Z_S; X) \leq C_b$ through architectural means. The discrete, low-precision weights fundamentally limit how much input information can be encoded, providing the "bottleneck" in the information-theoretic sense.
    
    \item \textbf{Adaptive $\beta$} balances the competing objectives dynamically. By treating knowledge retention as a constraint rather than a fixed penalty, the controller ensures sufficient teacher information is preserved ($\mathcal{L}_{\text{GDKD}} \leq \tau$) while maximizing task performance within the quantization-imposed capacity limits.
    
    \item \textbf{RCKA} complements logit-level distillation by transferring relational structure among visual tokens—information that supports the student's visual reasoning capabilities but is not captured by output distribution matching alone.
\end{itemize}

This unified perspective demonstrates that GRACE instantiates the IB trade-off through concrete, implementable mechanisms: quantization for compression, confidence-gated distillation for information preservation, and adaptive weighting for principled balancing. The framework transforms abstract information-theoretic objectives into a practical training procedure for efficient VLM compression.
%=============================================================================

\begin{figure}
    \centering
    \includegraphics[width=\linewidth]{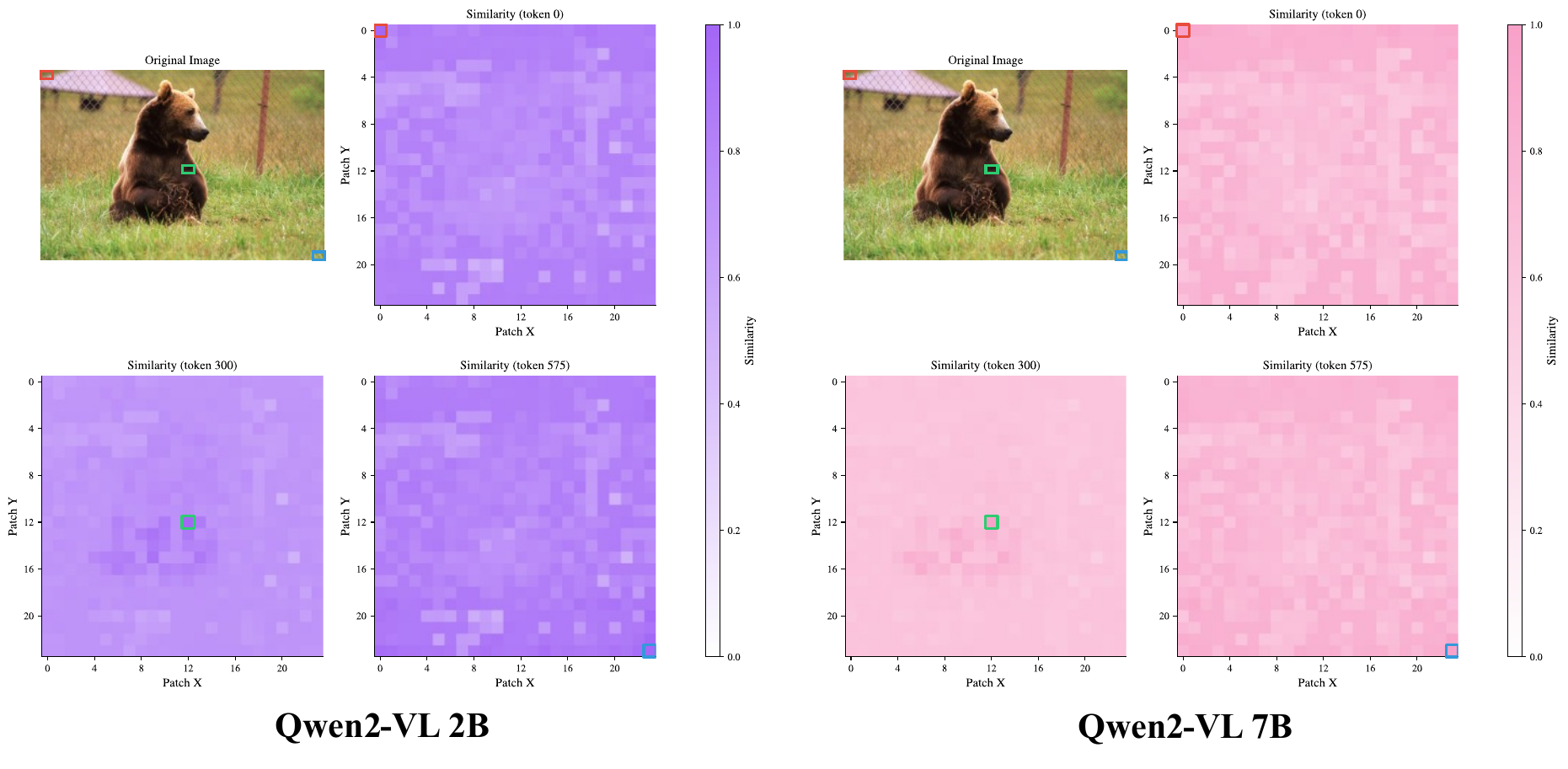}
    \caption{Visual token similarity comparison on a bear image. The 7B model shows clearer semantic boundaries, with the center token (green box) activating precisely on the bear region, while the 2B model produces diffuse similarity patterns.}
    \label{fig:rcka_bear}
\end{figure}

\begin{figure}
    \centering
    \includegraphics[width=\linewidth]{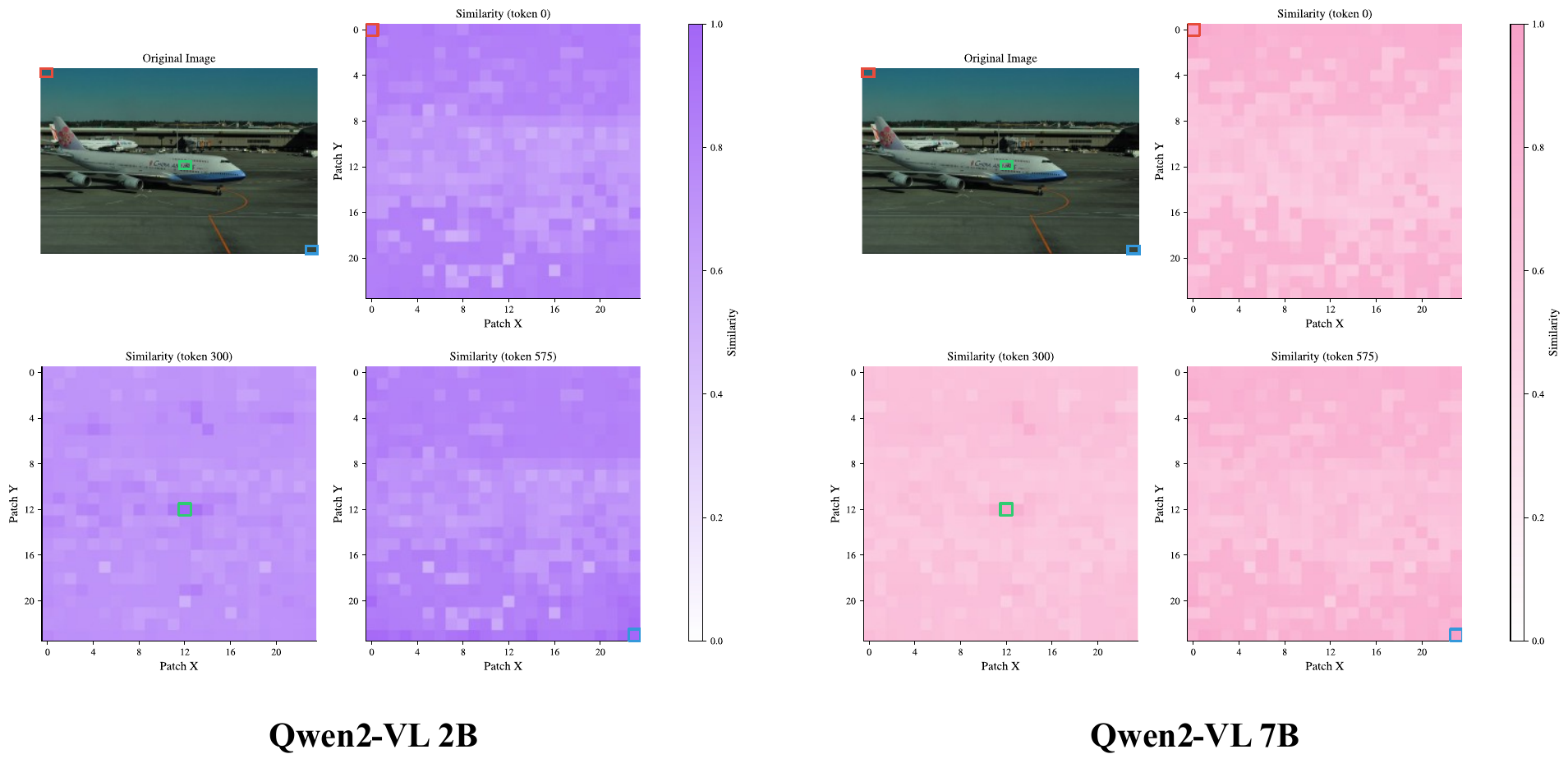}
    \caption{Visual token similarity comparison on an airplane image. The 7B model clearly separates sky, airplane, and ground regions, while the 2B model shows diffuse patterns that fail to capture semantic boundaries.}
    \label{fig:rcka_plane}
\end{figure}

\section{Qualitative Analysis}
\label{sec:qualitative}

\subsection{Visual Token Similarity Analysis}
\label{sec:rcka_vis_appendix}

We extend the RCKA analysis in Section~\ref{sec:rcka} by comparing the visual token similarity patterns between Qwen2-VL 2B and Qwen2-VL 7B models. As shown in Figures~\ref{fig:rcka_bear} and~\ref{fig:rcka_plane}, we visualize the pairwise cosine similarity between selected query tokens and all other visual tokens in the spatial grid. For each image, we select three representative tokens: token 0 (top-left corner, typically background), token 300 (center region, typically the main subject), and token 575 (bottom-right corner).

Figure~\ref{fig:rcka_bear} presents the similarity maps for a bear image. In the 2B model, the similarity patterns are diffuse and noisy across all query tokens. When querying the center token (located on the bear), the 2B model produces scattered activations that fail to cleanly segment the bear from the background. In contrast, the 7B model exhibits significantly sharper semantic boundaries: the center token shows high similarity concentrated precisely on the bear region, while tokens in the grass and fence areas form distinct, coherent clusters. This demonstrates that larger models develop more precise relational structures that group semantically related visual tokens.

Figure~\ref{fig:rcka_plane} shows a similar pattern for a China Airlines aircraft image. The 2B model struggles to distinguish between the airplane, sky, and ground regions, producing mixed similarity values across semantic boundaries. The 7B model, however, demonstrates clear semantic separation: sky tokens cluster together with high mutual similarity, the airplane fuselage forms a distinct group, and the ground/tarmac region is cleanly segmented. Notably, when querying the center token on the airplane, the 7B model's activation map closely follows the aircraft's shape, indicating that larger models encode fine-grained object boundaries in their relational structure.

These visualizations provide direct motivation for our RCKA loss: by explicitly aligning the student's relational structure with that of the teacher, we transfer the capacity for precise semantic grouping that smaller models inherently lack. The sharper similarity patterns in larger models reflect their superior ability to organize visual information into coherent semantic regions, which is essential for accurate visual reasoning and understanding.

\begin{figure}[!h]
    \centering
    \large
    \includegraphics[width=0.5\linewidth]{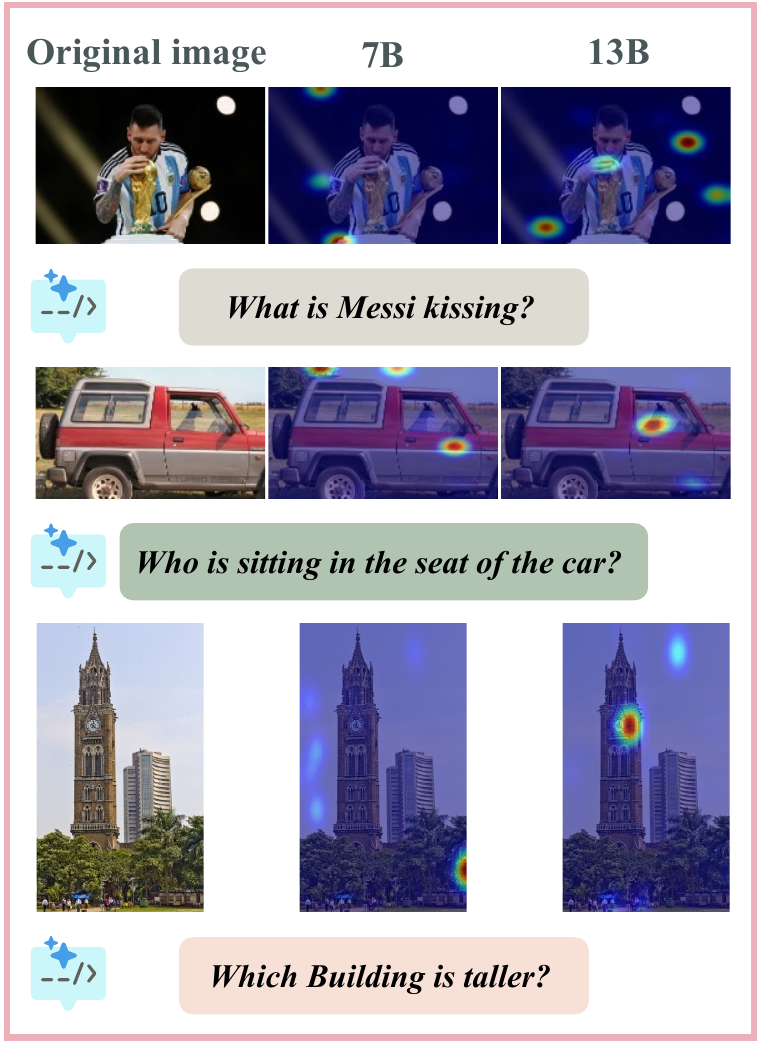}
    \caption{Visual attention heatmap comparison between LLaVA-1.5 7B and 13B. The heatmaps are extracted from the penultimate layer of the LLM backbone, showing where each model attends when answering the given question. The 13B model consistently produces more focused attention on task-relevant regions.}
    \label{fig:attention_heatmap}
\end{figure}

\subsection{Visual Attention Heatmap Analysis}
\label{sec:attention_heatmap}

To further understand the differences in visual processing between models of different scales, we visualize the attention patterns from the penultimate layer of the LLM backbone. Specifically, we extract the attention weights from the last generated token attending to all visual tokens, and reshape them into a 2D spatial grid that aligns with the original image. This visualization directly corresponds to the layer we use for RCKA computation (Section~\ref{sec:rcka}), providing insight into why relational knowledge transfer is beneficial.

Figure~\ref{fig:attention_heatmap} compares the attention heatmaps between LLaVA-1.5 7B and 13B models across three visual question answering examples. For each query, we show the original image alongside the attention heatmaps from both models, where warmer colors indicate higher attention weights.

The results reveal striking differences in attention precision between the two model scales. For the question ``What is Messi kissing?'', the 13B model concentrates its attention precisely on the World Cup trophy, while the 7B model produces scattered attention across multiple regions including the background. Similarly, for ``Who is sitting in the seat of the car?'', the 13B model focuses sharply on the driver's seat area, whereas the 7B model attends diffusely across the entire vehicle. Most notably, for the comparative question ``Which building is taller?'', the 13B model's attention clearly highlights both buildings being compared, enabling accurate height comparison, while the 7B model fails to attend to both relevant structures simultaneously.

These observations provide direct motivation for our RCKA loss design. The penultimate layer representations, from which we compute the relational similarity matrices, encode rich semantic attention patterns that are significantly more precise in larger models. By aligning the student's relational structure with that of the teacher through RCKA, we aim to transfer this capacity for focused, task-relevant visual attention to smaller models, enabling them to achieve visual reasoning performance that approaches their larger counterparts.

\subsection{Visualization of Relational CKA}
\label{sec:rcka_visualization}

To further understand the effect of relational centered kernel alignment (RCKA), we visualize multi-layer attention maps of LLaVA-1.5-7B under three settings: the original 7B baseline, standard KD with vanilla KL distillation, and GRACE. We compare attention maps across multiple decoder layers on three diverse examples, covering object-centric recognition, counting, and fine-grained visual grounding.

\begin{figure*}[!htbp]
    \centering
    \vspace{-0.4em}
    \includegraphics[width=0.98\textwidth]{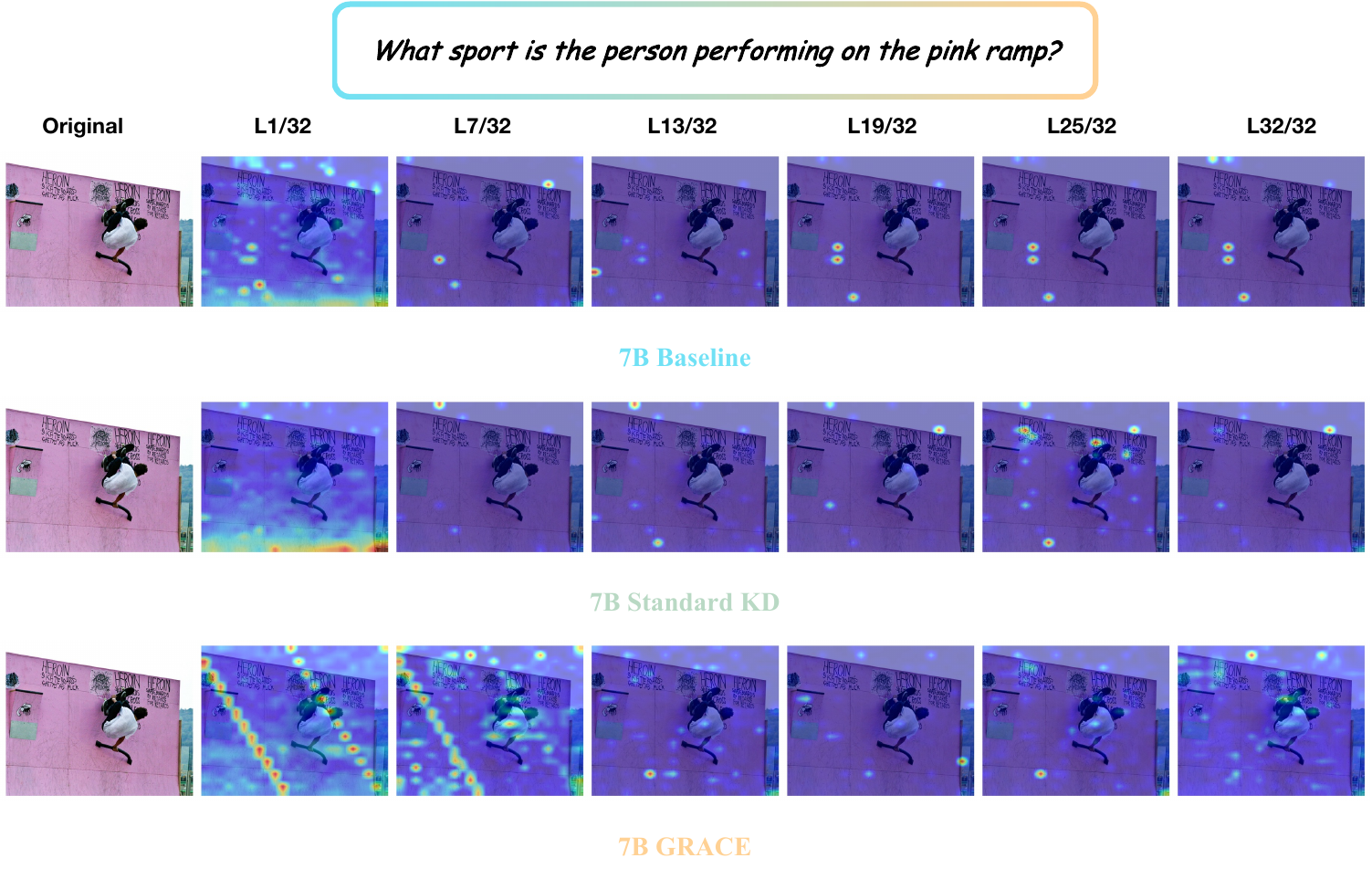}
    \vspace{-0.8em}
    \caption{Multi-layer attention visualization for the question ``What sport is the person performing on the pink ramp?'' Standard KD produces attention patterns similar to the baseline, while GRACE progressively focuses on the task-relevant skateboard region.}
    \label{fig:rcka_skateboard}
    \vspace{-0.8em}
\end{figure*}

\begin{figure*}[!t]
    \centering
    \vspace{-0.4em}
    \includegraphics[width=0.98\textwidth]{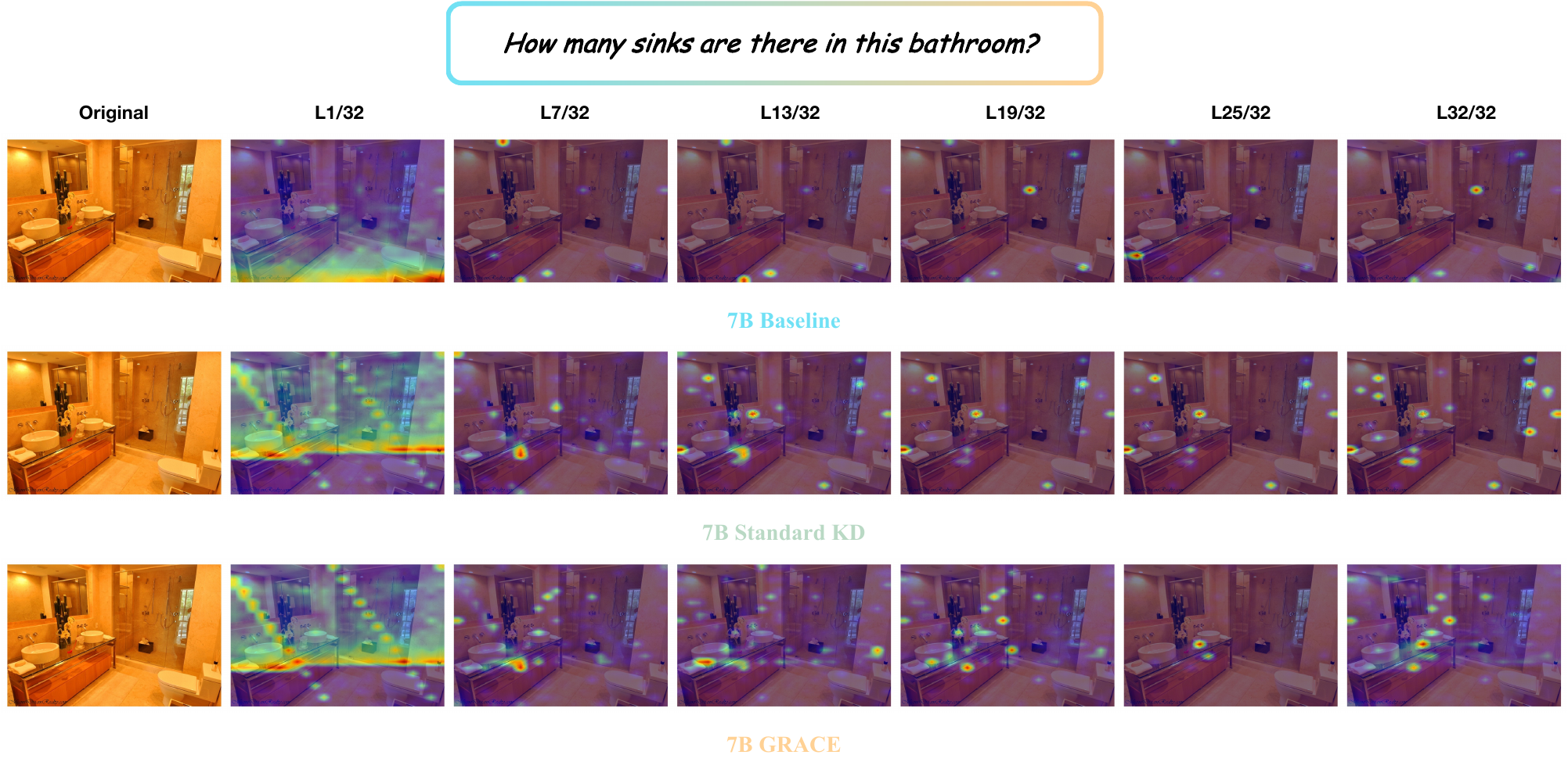}
    \vspace{-0.8em}
    \caption{Multi-layer attention visualization for the question ``How many sinks are there in this bathroom?'' GRACE produces more localized attention on the sink area, whereas the baseline and standard KD remain more diffuse across layers.}
    \label{fig:rcka_sink}
    \vspace{-0.8em}
\end{figure*}

\begin{figure*}[!t]
    \centering
    \vspace{-0.4em}
    \includegraphics[width=0.98\textwidth]{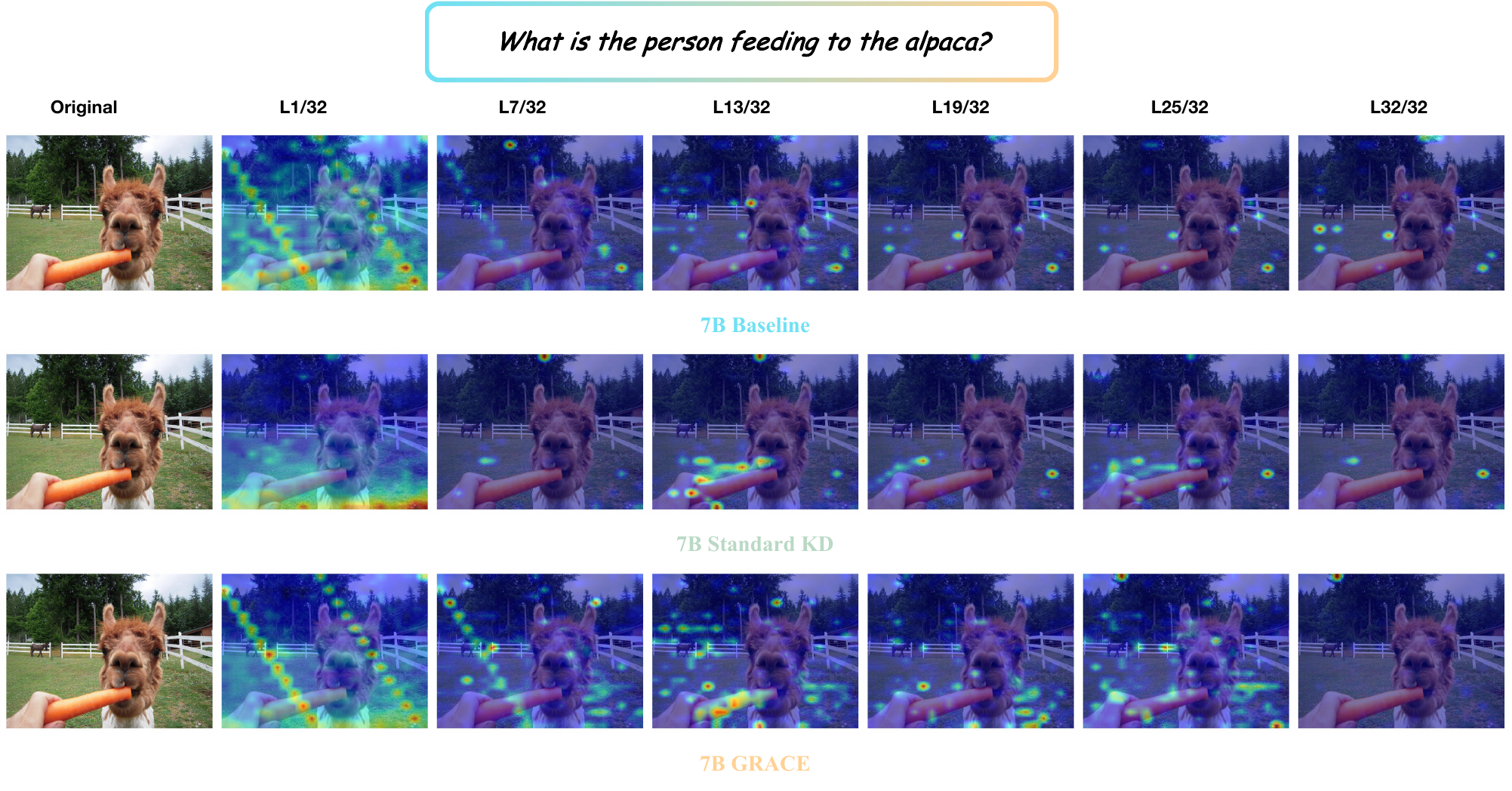}
    \vspace{-0.8em}
    \caption{Multi-layer attention visualization for the question ``What is the person feeding to the alpaca?'' GRACE better concentrates attention on the carrot and the feeding interaction, showing stronger visual grounding than standard KD.}
    \label{fig:rcka_carrot}
    \vspace{-0.8em}
\end{figure*}

Across all examples, a consistent pattern emerges. Standard KD produces attention maps that are largely similar to the 7B baseline: the activation remains scattered in early layers and does not consistently refine toward task-relevant regions in later layers. This suggests that logit-level distillation mainly transfers the teacher's output distribution, but does not sufficiently transfer the internal visual reasoning process. In contrast, GRACE yields more focused and progressively refined attention on the relevant visual evidence, including the skateboard in Figure~\ref{fig:rcka_skateboard}, the sink region in Figure~\ref{fig:rcka_sink}, and the carrot in Figure~\ref{fig:rcka_carrot}. These results support the role of RCKA: by aligning the relational structure among visual tokens rather than only matching logits or point-wise features, GRACE helps the student acquire the teacher's layer-wise visual grounding behavior.

\subsection{Comparison with Baseline}
We compare GRACE 7B (BF16 with distillation from 13B teacher) against LLaVA-1.5 7B baseline across diverse visual reasoning tasks. As shown in Figures~\ref{fig:qual_motor}--\ref{fig:qual_kite}, GRACE consistently produces more detailed, accurate, and contextually rich responses.

\begin{figure}[!htbp]
    \centering
    \includegraphics[width=\linewidth]{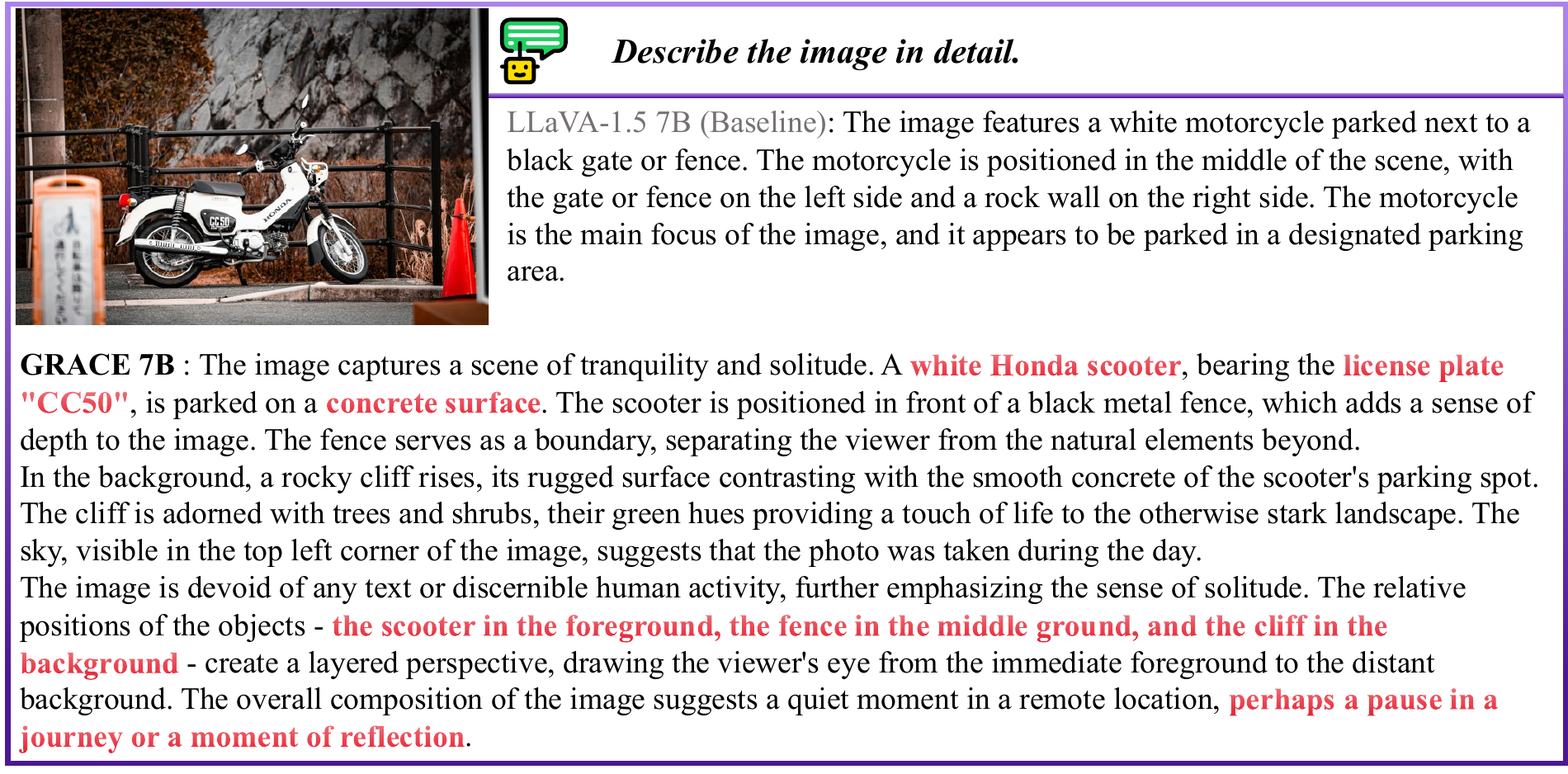}
    \caption{Detailed image description.}
    \label{fig:qual_motor}
\end{figure}

\textbf{Detailed Image Description.} The baseline provides a generic description identifying only ``a white motorcycle parked next to a black gate,'' while GRACE correctly identifies the vehicle as a \textbf{white Honda scooter}, recognizes the \textbf{license plate ``CC50''}, and accurately describes the \textbf{concrete surface}. This demonstrates that distillation successfully transfers precise visual attention patterns from the teacher.

\newpage
\textbf{Landmark Recognition.} The baseline describes only ``a tall building'' with fireworks, while GRACE correctly identifies the landmark as \textbf{Taipei 101 in Taiwan}, accurately describes the firework colors as \textbf{yellow, red, and purple}, and recognizes the \textbf{high vantage point perspective}. This shows that our distillation transfers not only visual recognition but also contextual knowledge.

\begin{figure}[h]
    \centering
    \includegraphics[width=\linewidth]{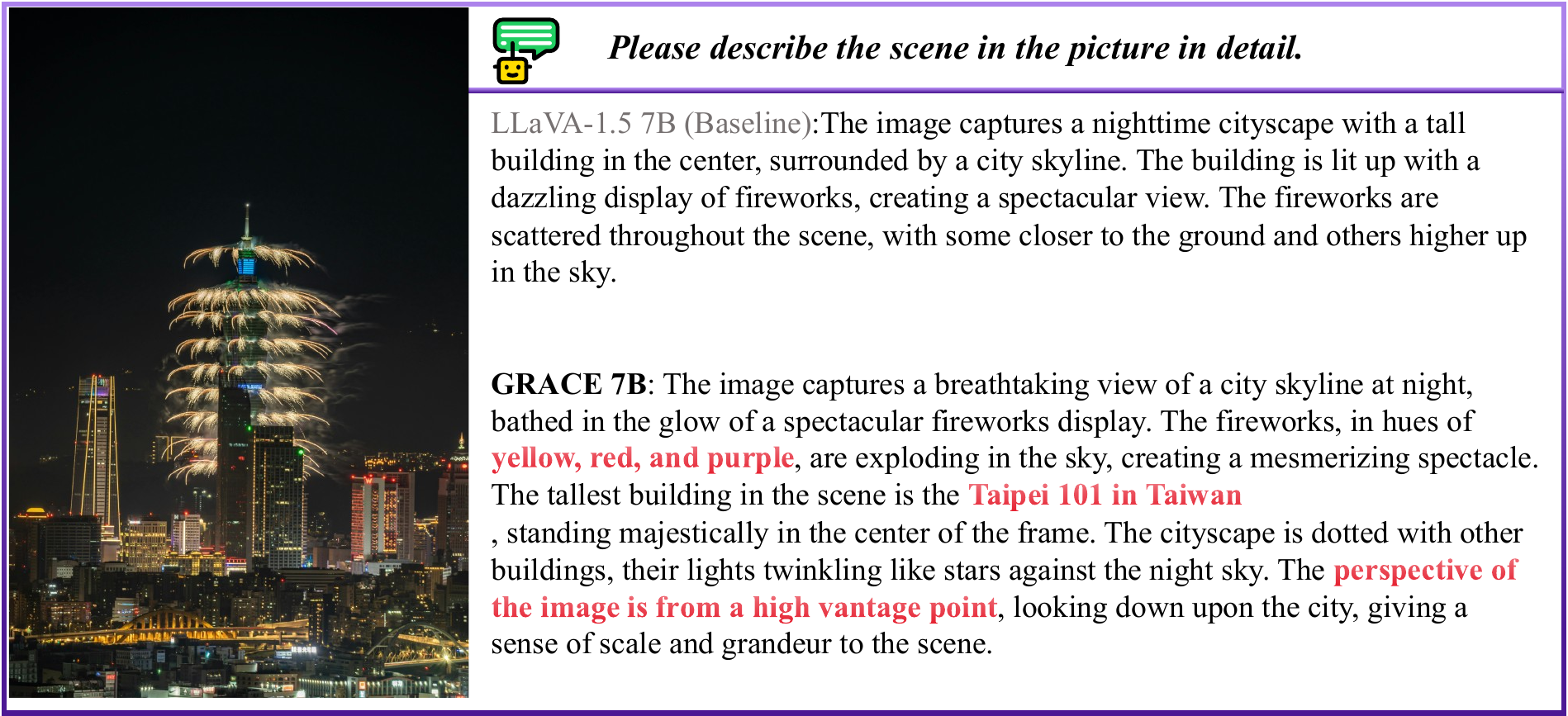}
    \caption{Landmark recognition.}
    \label{fig:qual_101}
\end{figure}

\textbf{Activity Recognition.} The baseline mischaracterizes the scene as ``flying a parachute-like kite,'' while GRACE correctly identifies \textbf{kiteboarding}, understands that the person is \textbf{harnessing wind power to glide across the beach}, and infers the \textbf{direction of movement} from visual cues. These results confirm that GRACE effectively preserves the teacher's visual-language understanding.
\begin{figure}[!htpb]
    \centering
    \includegraphics[width=\linewidth]{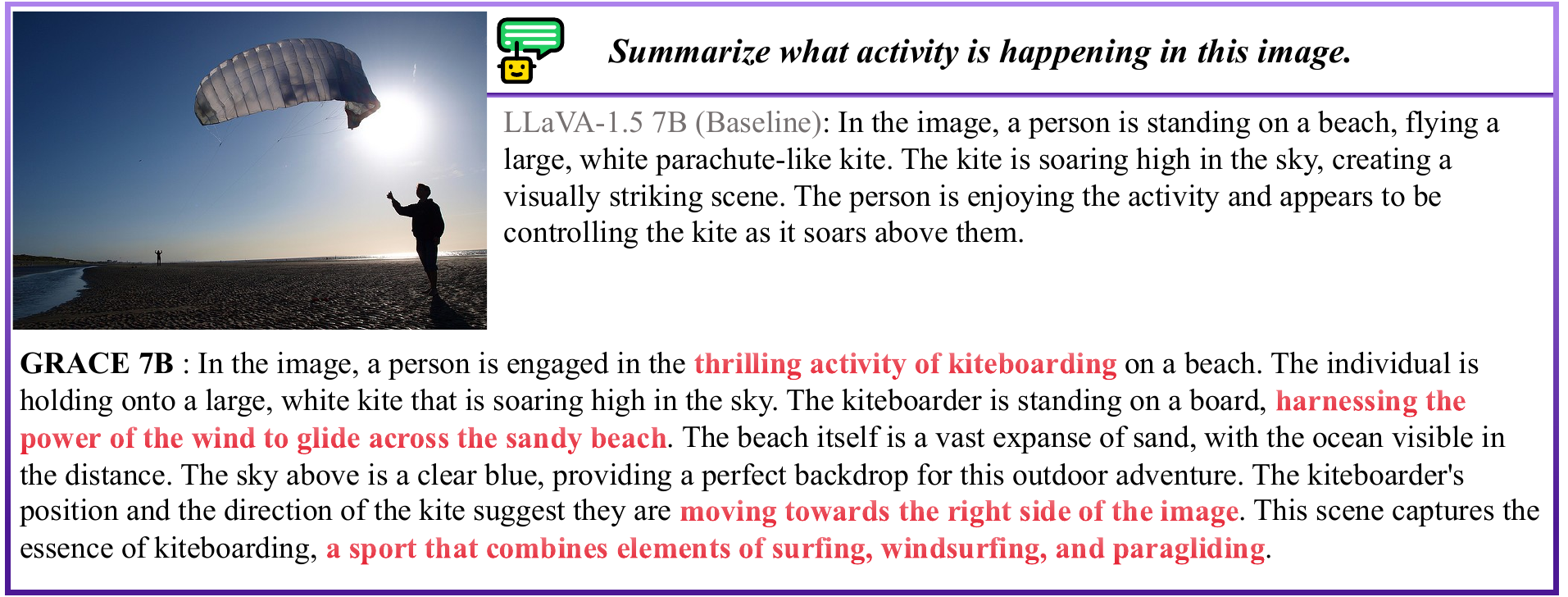}
    \caption{Activity recognition.}
    \label{fig:qual_kite}
\end{figure}

\newpage
\subsection{INT4 Model Capabilities}
We further evaluate the capabilities of our GRACE 7B INT4 quantized model across various visual understanding tasks, including OCR, object recognition, chart understanding, and visual grounding. As shown in Figures~\ref{fig:qual_ocr}--\ref{fig:qual_poster}, the INT4 model maintains strong performance despite aggressive quantization.
\begin{figure}[!htbp]
    \centering
    \includegraphics[width=\linewidth]{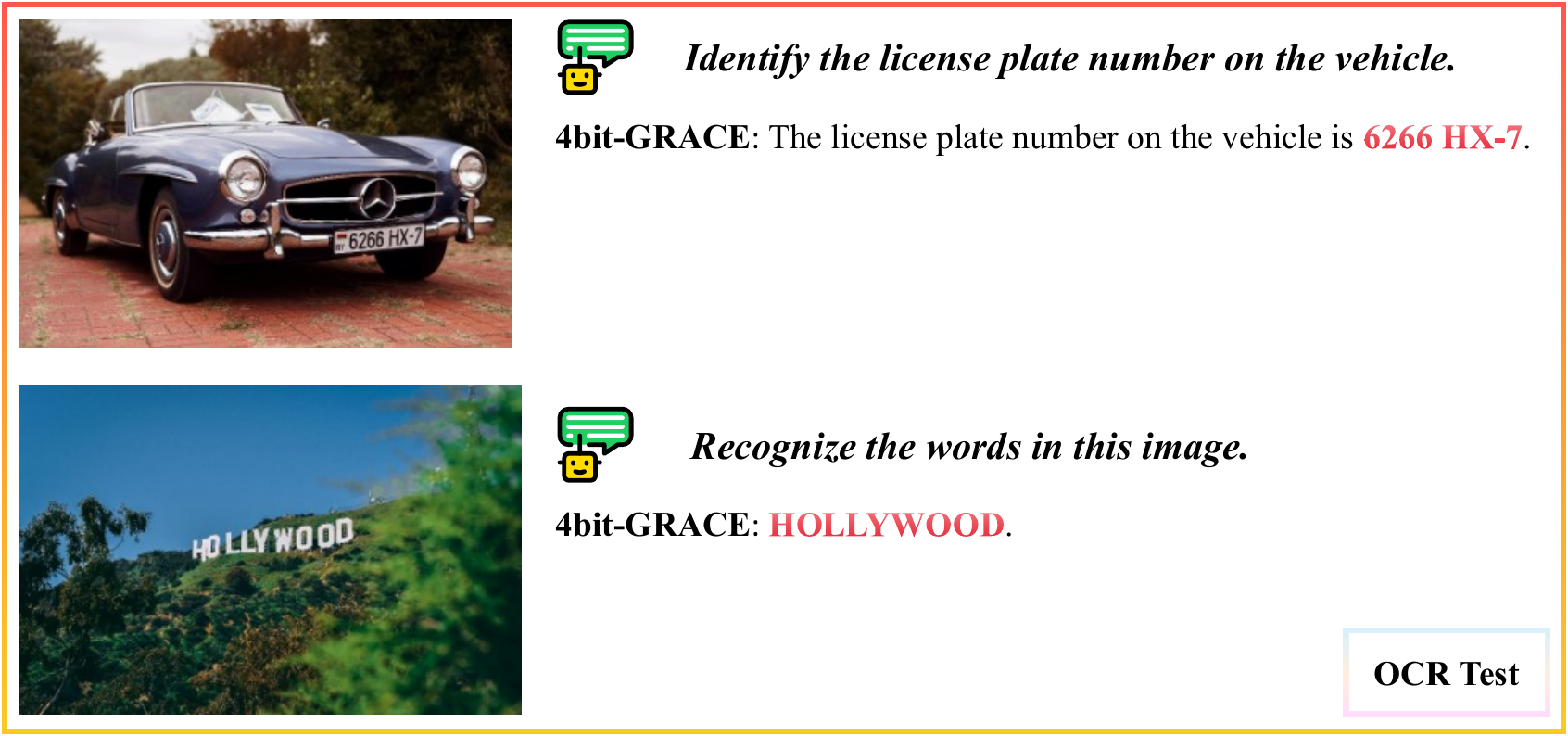}
    \caption{OCR test.}
    \label{fig:qual_ocr}
\end{figure}

\begin{figure}[!htbp]
    \centering
    \includegraphics[width=\linewidth]{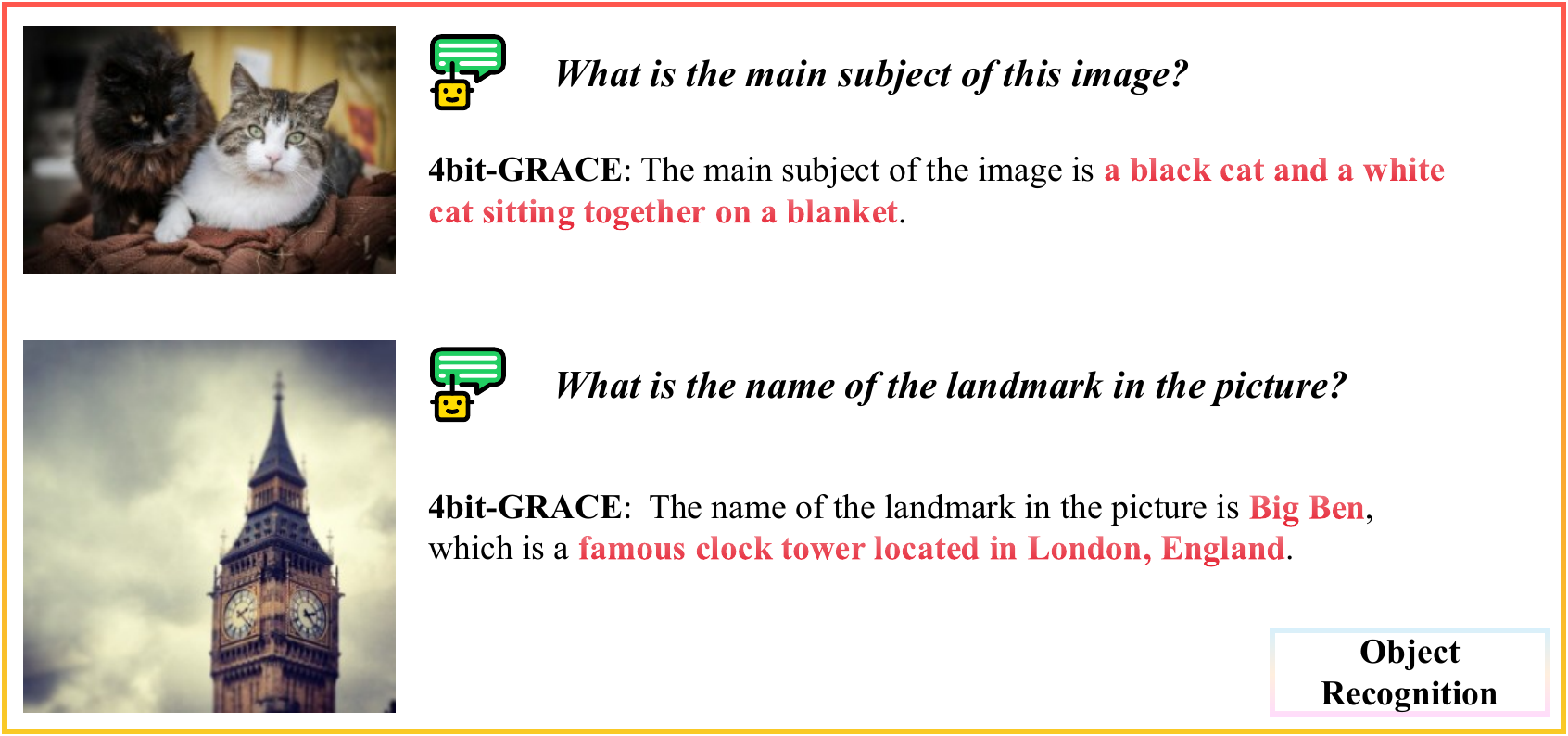}
    \caption{Object recognition.}
    \label{fig:qual_object}
\end{figure}

Figure~\ref{fig:qual_ocr} demonstrates the OCR capabilities of our INT4 model. The model accurately identifies the license plate number as \textbf{``6266 HX-7''} on a vintage Mercedes-Benz, and correctly recognizes the \textbf{``HOLLYWOOD''} sign in a landscape image. These results show that INT4 quantization preserves fine-grained text recognition ability.

Figure~\ref{fig:qual_object} showcases the object recognition capabilities. The model correctly identifies \textbf{a black cat and a white cat sitting together on a blanket}, and recognizes the landmark as \textbf{Big Ben}, providing additional context that it is a \textbf{famous clock tower located in London, England}. This demonstrates that the INT4 model retains both visual recognition and world knowledge.

Figure~\ref{fig:qual_chart} illustrates the chart understanding capability. The model accurately interprets the line graph, identifying the title as \textbf{``Chinese GDP Growth Dips Once More in 2022''}, the time range from \textbf{1980 to 2022}, and the y-axis as \textbf{percentage of GDP growth}. It also correctly identifies the \textbf{significant dip in 2022} and attributes the source to the \textbf{National Bureau of Statistics of China}. This shows that INT4 quantization preserves complex visual reasoning abilities.

\begin{figure}[!htbp]
    \centering
    \includegraphics[width=\linewidth]{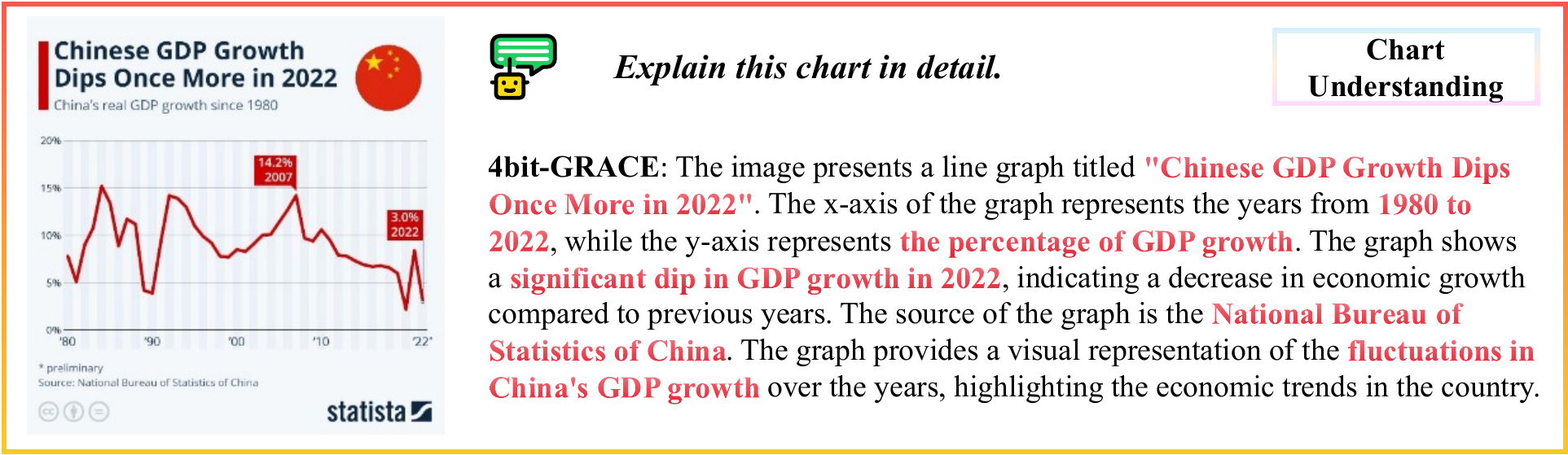}
    \caption{Chart understanding.}
    \label{fig:qual_chart}
\end{figure}

Figure~\ref{fig:qual_poster} demonstrates visual grounding with world knowledge integration. The model identifies the movie poster as \textbf{``Roman Holiday''} featuring \textbf{Audrey Hepburn}, describes the \textbf{vibrant yellow background}, the \textbf{passionate embrace} between the characters, and the \textbf{scooter} in the background. Furthermore, it provides accurate background information including the director \textbf{William Wyler} and the \textbf{Academy Award for Best Actress}. This confirms that GRACE INT4 maintains sophisticated visual-language understanding and knowledge grounding capabilities.

\begin{figure}[!htbp]
    \centering
    \includegraphics[width=\linewidth]{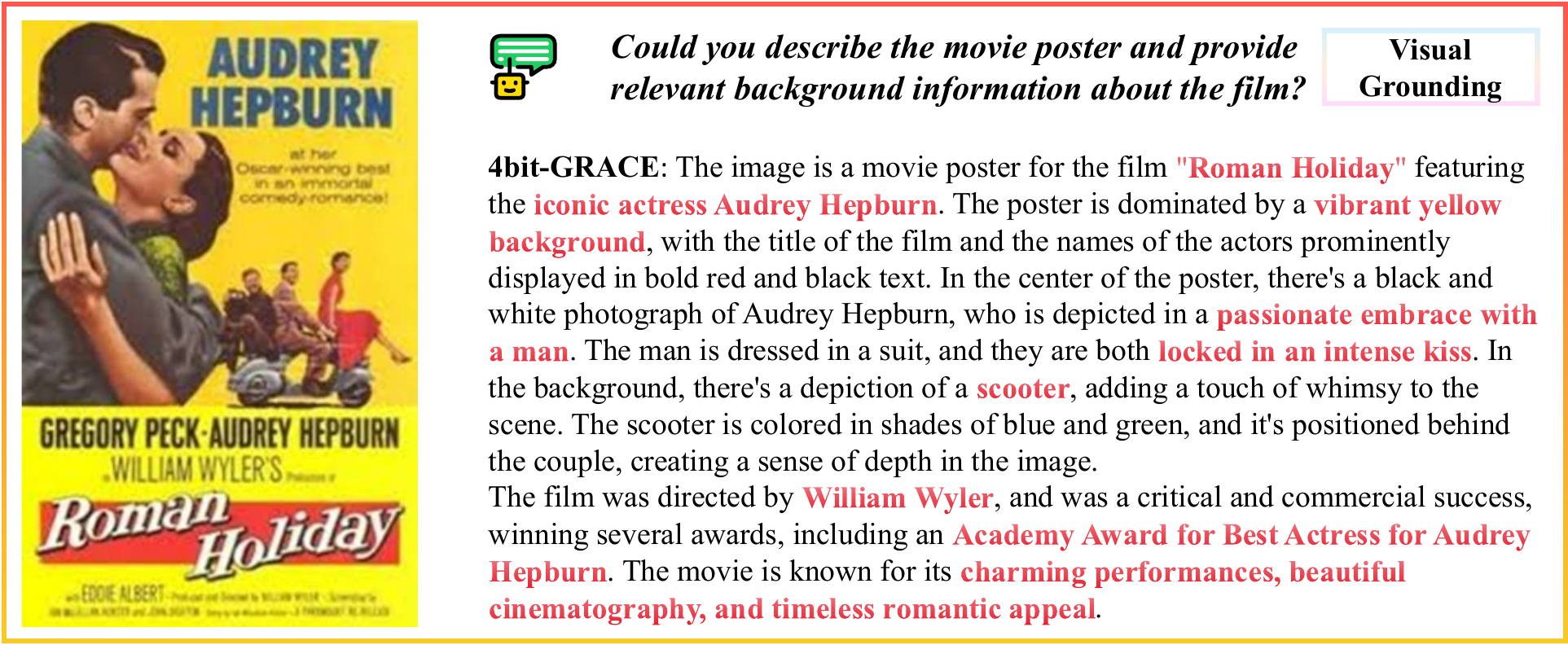}
    \caption{Visual grounding.}
    \label{fig:qual_poster}
\end{figure}

\newpage
\subsection{Weight Distribution Visualization}
\label{sec:weight_vis}

To provide insight into the effects of quantization on model weights, we visualize the weight distributions of the LLM backbone before and after INT4 quantization. Figure~\ref{fig:weight_vis} shows 3D surface plots of the absolute weight values for different linear layers across three representative depths: Layer 1 (early), Layer 16 (middle), and Layer 32 (final).

\begin{figure}[H]
    \centering
    \includegraphics[width=0.85\linewidth]{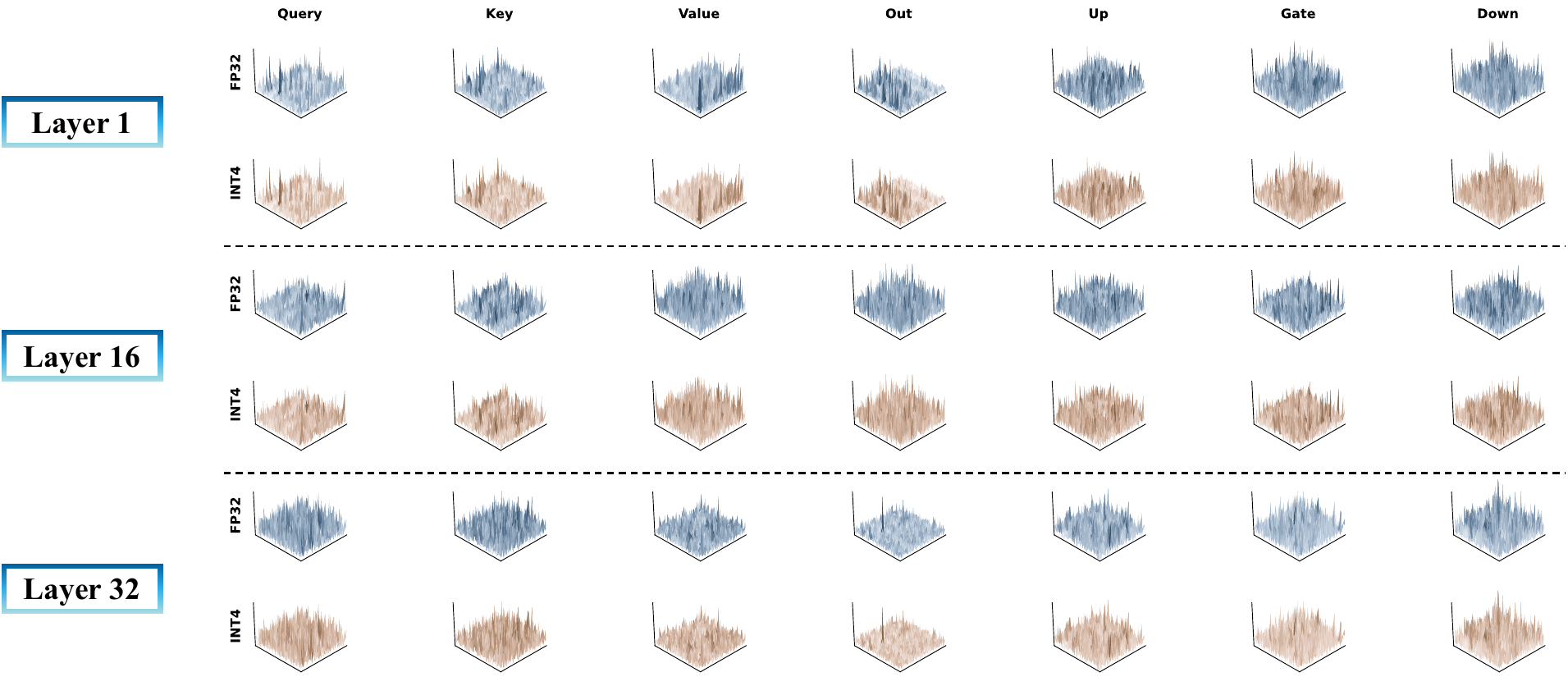}
    \caption{Weight distribution comparison between FP32 (blue) and INT4 quantized (orange) models across layers 1, 16, and 32. Each column represents a different weight matrix: Query, Key, Value, and Out projections from the self-attention module, and Up, Gate, and Down projections from the feed-forward network. The 3D surfaces show absolute weight magnitudes, where height indicates weight magnitude.}
    \label{fig:weight_vis}
\end{figure}

The visualization reveals several key observations. First, the overall shape and structure of weight distributions are largely preserved after INT4 quantization, indicating that our quantization-aware training successfully maintains the essential weight patterns learned during pre-training. Second, while the FP32 weights exhibit smooth, continuous surfaces, the INT4 weights show subtle discretization artifacts due to the reduced precision. However, these artifacts remain minor and do not significantly alter the dominant weight structures. Third, we observe that different weight matrices exhibit distinct distribution patterns: the attention projections (Query, Key, Value, Out) tend to have more uniform distributions, while the feed-forward projections (Up, Gate, Down) often show more pronounced channel-wise variations. Our group-wise quantization with learned step sizes adapts to these varying patterns, enabling effective compression across all layer types.

These visualizations demonstrate that GRACE's quantization-aware training preserves the structural properties of the original weights while achieving significant memory reduction through INT4 representation.

\section{Experimental Settings}
\label{sec:exp_settings}

\subsection{Training Hyperparameters}
Table~\ref{tab:hyperparams} summarizes the hyperparameters used for training GRACE models using the LLaVA framework. We train all models for 3 epochs on the ShareGPT4V dataset~\cite{chen2024sharegpt4v} containing 1.3M high-quality image-text pairs, using 8 NVIDIA GH200 Grace Hopper Superchips. The AdamW optimizer with cosine learning rate scheduling is employed throughout training.

For knowledge distillation, we employ Decoupled Knowledge Distillation (DKD) with confidence gating, where the gating function $g_i = \exp(-\tilde{h}_i)$ modulates the distillation signal based on normalized teacher entropy $\tilde{h}_i = H_i / \log|V|$. This gate down-weights high-entropy teacher predictions, ensuring that only medium-to-high confidence supervision contributes significantly to the distillation objective and filtering out potentially noisy signals from uncertain samples. The adaptive Information Bottleneck (IB) controller then automatically balances distillation against the task loss via projected dual gradient ascent on the distillation weight $\beta$: starting from an initial $\beta$ of 0.5, it raises $\beta$ whenever the (EMA-smoothed) gated distillation loss exceeds a target budget $\tau$ of 3.0 and decays it otherwise, with $\beta$ clipped to $[0.1, 1.0]$ so that distillation can lead but never dominate the cross-entropy signal. For RCKA, we extract visual token representations from the penultimate layer of the LLM backbone and use a linear kernel for Gram matrix computation.

\begin{table}[H]
\centering
\small
\caption{Training hyperparameters for GRACE.}
\label{tab:hyperparams}
\begin{tabular}{@{}l|c@{}}
\toprule
\textbf{Hyperparameter} & \textbf{Value} \\
\midrule
\multicolumn{2}{c}{\textit{Training Configuration}} \\
\midrule
Epochs & 3 \\
Optimizer & AdamW \\
Learning rate & 2e-5 \\
LR scheduler & Cosine \\
Warmup ratio & 0.03 \\
Weight decay & 1e-3 \\
Batch size (per GPU) & 4 \\
Gradient accumulation steps & 4 \\
Max sequence length & 2048 \\
\midrule
\multicolumn{2}{c}{\textit{Decoupled Knowledge Distillation}} \\
\midrule
Temperature ($T$) & 2.0 \\
TCKD weight ($\alpha$) & 1.0 \\
NCKD weight ($\beta_{\text{DKD}}$) & 4.0 \\
\midrule
\multicolumn{2}{c}{\textit{Adaptive IB Controller}} \\
\midrule
Target constraint ($\tau$) & 0.35 \\
Dual step size ($\eta$) & 0.0015 \\
Initial $\beta$ & 1.0 \\
$\beta$ range & [0.1, 5.0] \\
EMA decay & 0.99 \\
\midrule
\multicolumn{2}{c}{\textit{RCKA}} \\
\midrule
Feature layer & Penultimate (-2) \\
Kernel type & Linear \\
RCKA weight ($\omega$) & 1.0 \\
\midrule
\multicolumn{2}{c}{\textit{Quantization (INT4/INT8)}} \\
\midrule
Quantization method & Group-wise LSQ \\
Group size & 128 \\
Bit width & 4 / 8 \\
\bottomrule
\end{tabular}%
\end{table}

\paragraph{Computational Overhead.}
We analyze the additional training cost introduced by GRACE components in Table~\ref{tab:training_cost}. Compared to standard fine-tuning, the primary overhead stems from the teacher model's forward pass required for knowledge distillation. The RCKA module introduces modest additional cost: computing Gram matrices for visual tokens (typically 576 tokens for 336$\times$336 images) requires $\mathcal{O}(n^2d)$ operations, adding approximately 15\% to the per-iteration time. The adaptive IB controller's overhead is negligible ($<$2\%), as it only involves scalar entropy computation and dual variable updates. Peak GPU memory increases by approximately 2.3 GB per device due to Gram matrix storage and intermediate activations for RCKA.

\begin{table}[H]
\centering
\caption{Training cost analysis for LLaVA-1.5-7B GRACE on 8$\times$GH200 GPUs (3 epochs).}
\label{tab:training_cost}
\begin{tabular}{@{}l|ccc@{}}
\toprule
\textbf{Configuration} & \textbf{Time (h)} & \textbf{Mem./GPU (GB)} & \textbf{Overhead} \\
\midrule
Baseline (FT only) & 6.5 & 43.2 & -- \\
+ KD (teacher fwd) & 10.1 & 67.8 & +55\% \\
+ RCKA & 11.8 & 70.1 & +17\% \\
+ IB Controller & 12.1 & 70.3 & +2\% \\
\midrule
\textbf{GRACE (Full)} & \textbf{12.1} & \textbf{70.3} & \textbf{+85\%} \\
\bottomrule
\end{tabular}
\end{table}

\subsection{Adaptive IB Controller Dynamics}
\label{sec:controller_dynamics}

To validate our hyperparameter choices for the adaptive IB controller and provide intuition for its behavior, we present a simulation study of the controller dynamics throughout training. Figure~\ref{fig:ib_controller} illustrates how the dual variable $\beta$ and the gated distillation loss $\mathcal{L}_{\text{GDKD}}$ evolve during the 30,000 training steps (3 epochs on 1.3M samples with effective batch size 128).

\begin{figure}[t]
    \centering
    \includegraphics[width=\linewidth]{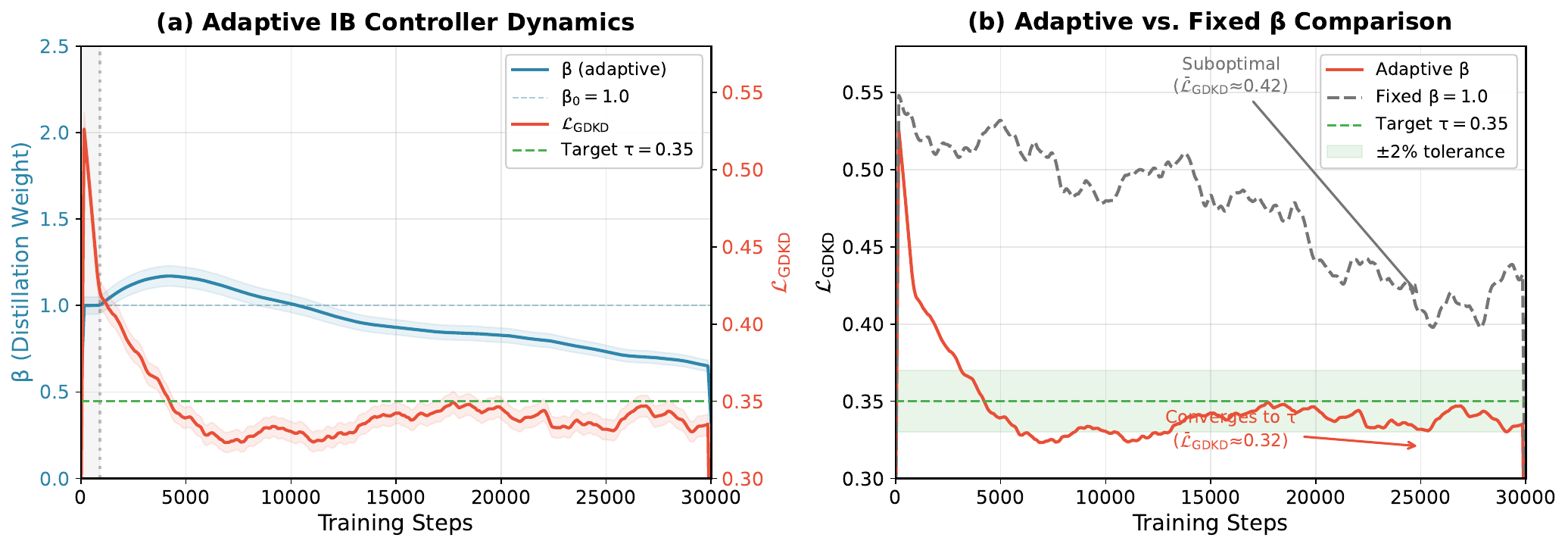}
    \caption{Simulation of adaptive IB controller dynamics. (a) The dual variable $\beta$ adjusts automatically via projected gradient ascent to drive $\mathcal{L}_{\text{GDKD}}$ toward the target constraint $\tau=0.35$. After an initial transient phase, $\beta$ stabilizes around 0.3--0.7, indicating the controller finds an appropriate balance between distillation strength and task learning. (b) Comparison between adaptive and fixed $\beta$: the adaptive controller achieves precise convergence to $\tau$ (deviation $\approx$0.03), while fixed $\beta=1.0$ settles at a suboptimal equilibrium (deviation $\approx$0.07).}
    \label{fig:ib_controller}
\end{figure}

\paragraph{Controller Mechanism.}
As described in Section~\ref{sec:ib_controller}, the adaptive IB controller implements dual gradient ascent on the Lagrangian relaxation of the constrained optimization problem (Eq.~\ref{eq:constrained}):
\begin{equation}
    \beta_{t+1} = \Pi_{[\beta_{\min}, \beta_{\max}]}\left( \beta_t + \eta \cdot (\widehat{\mathcal{L}}_{\text{GDKD}} - \tau) \right)
\end{equation}
where $\Pi_{[a,b]}$ denotes projection onto the interval $[a,b]$ and $\widehat{\mathcal{L}}_{\text{GDKD}}$ is an exponential moving average for stability. When $\widehat{\mathcal{L}}_{\text{GDKD}} > \tau$, the student struggles to match the teacher, so $\beta$ increases to strengthen distillation; when $\widehat{\mathcal{L}}_{\text{GDKD}} < \tau$, sufficient knowledge has been captured, so $\beta$ decreases to shift focus toward task performance.

\paragraph{Hyperparameter Justification.}
Our simulation validates the following design choices:

\begin{itemize}[leftmargin=*,itemsep=2pt]
    \item \textbf{Target constraint $\tau = 0.35$}: Recall from Section~\ref{sec:cgdkd} that the gated distillation loss is defined as $\mathcal{L}_{\text{GDKD}} = \sum_{i} g_i \cdot \mathcal{L}_{\text{DKD}}^{(i)} / \sum_{i} g_i$ (Eq.~\ref{eq:gdkd}), where the confidence weight $g_i = \exp(-\tilde{h}_i)$ (Eq.~\ref{eq:confidence_gate}) assigns higher weights to low-entropy (high-confidence) teacher predictions. The target $\tau$ controls the \emph{information preservation budget}---the minimum teacher-student KL divergence we enforce. Setting $\tau = 0.35$ represents a moderately tight constraint: it ensures sufficient knowledge transfer from the teacher while leaving headroom for the student to optimize task performance via $\mathcal{L}_{\text{CE}}$. A higher $\tau$ (e.g., 0.5) would permit looser teacher matching, potentially under-utilizing teacher knowledge; a lower $\tau$ (e.g., 0.2) would enforce stricter matching but may cause the student to over-fit teacher behavior at the expense of task accuracy.

    \item \textbf{Dual step size $\eta = 0.0015$}: The small learning rate ensures smooth $\beta$ dynamics without oscillation. As shown in Figure~\ref{fig:ib_controller}(a), $\beta$ exhibits gradual adjustment from the initial value, avoiding the instability that larger step sizes would cause. This is critical for maintaining stable training dynamics over 30K steps.
    
    \item \textbf{$\beta$ range $[0.1, 5.0]$}: The projection bounds prevent extreme values. The lower bound $\beta_{\min} = 0.1$ ensures distillation never completely vanishes, while $\beta_{\max} = 5.0$ prevents the distillation loss from overwhelming the task loss. Our simulation shows that under normal training conditions, $\beta$ stays well within this range (typically 0.3--1.2), indicating the bounds serve as safety constraints rather than active limitations.
    
    \item \textbf{Initial $\beta_0 = 1.0$}: Starting at unity provides a neutral initialization where distillation and task losses are equally weighted. The controller then adjusts $\beta$ based on the actual $\mathcal{L}_{\text{GDKD}}$ trajectory, as seen in the early transient phase of Figure~\ref{fig:ib_controller}(a).
\end{itemize}

\paragraph{Comparison with Fixed $\beta$.}
Figure~\ref{fig:ib_controller}(b) demonstrates the advantage of adaptive control over fixed weighting. With fixed $\beta = 1.0$, the gated loss $\mathcal{L}_{\text{GDKD}}$ converges to its natural equilibrium ($\approx$0.42), which deviates significantly from the optimal target $\tau = 0.35$. This mismatch indicates suboptimal knowledge transfer: the student either receives insufficient distillation pressure (when the natural equilibrium exceeds $\tau$) or excessive pressure that interferes with task learning. In contrast, the adaptive controller achieves precise convergence to $\tau$, reducing the deviation by approximately 57\% (from 0.068 to 0.029). This precision ensures that the student captures exactly the intended amount of teacher knowledge as specified by the information preservation budget $\tau$.

\subsection{Training Data}

We use the ShareGPT4V dataset~\cite{chen2024sharegpt4v} for training, which provides high-quality image-text pairs with detailed captions. Unlike typical human annotations, ShareGPT4V captions include rich semantic descriptions covering world knowledge, object attributes, spatial relationships, and aesthetic evaluations. Table~\ref{tab:training_data} summarizes the composition of our training data.

\begin{table}[!htbp]
\centering
\caption{Training data composition based on ShareGPT4V.}
\label{tab:training_data}
\scalebox{0.95}{%
\begin{tabular}{@{}l|p{6.5cm}|r@{}}
\toprule
\textbf{Component} & \textbf{Description} & \textbf{Size} \\
\midrule
\multicolumn{3}{c}{\textit{Caption Datasets}} \\
\midrule
ShareGPT4V & High-quality captions generated directly by GPT-4 Vision with rich descriptions & 100K \\
ShareGPT4V-PT & Expanded captions generated by Share-Captioner (trained on ShareGPT4V) & 1.2M \\
\midrule
\multicolumn{3}{c}{\textit{Image Sources}} \\
\midrule
COCO & Common objects in context & 82K \\
SAM & Segment Anything Model dataset & 11K \\
LAION/CC/SBU & Web-crawled image-text pairs & 14K \\
WikiArt & Artistic images from WikiArt & 2K \\
Web-Landmark & Landmark images from web & 6K \\
Web-Celebrity & Celebrity images from web & 3K \\
\midrule
\multicolumn{3}{c}{\textit{Caption Characteristics}} \\
\midrule
\multicolumn{3}{p{11cm}}{Captions are significantly richer than typical human annotations, incorporating detailed semantic descriptions, world knowledge, spatial relations, object attributes, aesthetics, and factual content.} \\
\bottomrule
\end{tabular}%
}
\end{table}

\subsection{Evaluation Benchmarks}
We conduct comprehensive evaluation across 14 benchmarks spanning diverse capabilities: (1) \textbf{comprehensive multimodal evaluation}: MMBench~\cite{liu2024mmbench}, MMStar~\cite{chen2024we}, and MME~\cite{fu2025mme}; (2) \textbf{visual reasoning}: ScienceQA~\cite{lu2022learn} and MMMU~\cite{yue2024mmmu}; (3) \textbf{text recognition}: AI2D~\cite{kembhavi2016diagram} and OCRBench~\cite{liu2024ocrbench}; (4) \textbf{visual question answering}: VQAv2~\cite{goyal2017making}, GQA~\cite{hudson2019gqa}, TextVQA~\cite{singh2019towards}, and VizWiz~\cite{bigham2010vizwiz}; (5) \textbf{visual perception}: SEED-Bench~\cite{li2023seed}; and (6) \textbf{hallucination evaluation}: HallusionBench~\cite{guan2024hallusionbench} and POPE~\cite{li2023evaluating}.

For evaluation, we follow the official protocols of each model family. For LLaVA-1.5, we use \texttt{vicuna\_v1} conversation template with greedy decoding (\texttt{temperature=0}) and single-prediction prompting for multiple-choice questions. For Qwen2-VL, we adopt the default \texttt{qwen2\_vl} chat template with greedy decoding.

\newpage
\section{Related Work}

\subsection{Quantization}
Quantization techniques for efficient model deployment can be broadly categorized into post-training quantization (PTQ) and quantization-aware training (QAT).

\textbf{Post-Training Quantization.} PTQ methods compress pretrained models without retraining, offering fast deployment but often suffering accuracy degradation at low bit-widths. Representative approaches include GPTQ~\cite{frantar2022gptq} and AWQ~\cite{lin2024awq}, which calibrate quantization parameters using small calibration sets. While effective for moderate compression (e.g., 8-bit), these methods struggle to maintain accuracy under aggressive quantization (e.g., 4-bit or below). More recent work such as BitNet~\cite{wang2023bitnet} explores radical architectures with native low-bit representations. Despite its simplicity, PTQ's limitations in preserving performance under extreme compression motivate QAT approaches.

\textbf{Quantization-Aware Training.} QAT incorporates quantization during training, enabling models to learn representations robust to quantization noise. Learned Step-Size Quantization (LSQ)~\cite{esser2019learned} introduced learnable quantization step sizes via straight-through estimators and has become a foundational QAT technique. For large language models, LLM-QAT~\cite{liu2024llm} proposed a data-free framework using synthetic sequences from pretrained teachers. ParetoQ~\cite{liuparetoq} further unified ultra-low bit-width optimization with improved scaling behavior across model sizes. BitDistiller~\cite{du2024bitdistiller} combines QAT with self-distillation, employing confidence-aware KL divergence to enhance sub-4-bit LLM performance. Recent work by~\citet{rehman2025punching} proposes a learnable regularizer that dynamically balances task and distillation losses during QAT, reducing conflicts between supervision signals.

Despite these advances, jointly optimized quantization with knowledge distillation for vision-language models remains underexplored. The cross-modal interactions and heterogeneous feature distributions in VLMs pose unique challenges that existing LLM-focused techniques do not address.

\subsection{Knowledge Distillation}
Knowledge distillation (KD) transfers knowledge from a large teacher model to a smaller student by minimizing the divergence between their output distributions~\cite{hinton2015distilling}. Although classical KD focuses primarily on aligning logits or soft labels, recent work highlights the importance of aligning intermediate representations and modality‑specific features in multimodal models. Decoupled Knowledge Distillation (DKD)~\cite{zhao2022decoupled} reformulates the classical KD loss by separating it into target class and non‑target class components, revealing that the non‑target distribution carries substantial dark knowledge often suppressed by standard formulations.

Recent work in knowledge distillation for vision‑language models has begun to explore how to effectively transfer complementary expertise from multiple source models into a unified, efficient student. MoVE‑KD~\cite{cao2025move} distills complementary strengths from multiple teacher vision encoders into a single efficient student using a mixture‑of‑experts structure with adaptive attention‑based weighting. Building upon this direction, HAWAII~\cite{wang2025hawaii} proposes a hierarchical distillation framework tailored for complex multimodal settings, particularly focusing on the visual encoder component of VLMs. Unlike MoVE‑KD that employs fixed LoRA adapters across all teachers, HAWAII introduces teacher‑specific Low‑Rank Adaptation modules that mitigate conflicts among heterogeneous teachers by aligning each adapter with its corresponding expert model. These modules are complemented by a hierarchical knowledge distillation mechanism operating at both fine‑grained and coarse‑grained levels: fine‑grained distillation uses token importance scoring to emphasize the most informative tokens from each teacher, while coarse‑grained distillation aggregates a consensus representation from all teachers via a shared projection before transferring it to the student. This enables the student vision encoder to inherit complementary strengths from multiple pretrained visual experts (e.g., SAM, ConvNeXt, EVA, Pix2Struct) with minimal computational overhead.

Beyond modality‑specific representation alignment, recent work has explored the integration of knowledge distillation with quantization‑aware training. Self‑Supervised Quantization‑Aware Knowledge Distillation (SQAKD) proposes a unified framework that combines QAT and KD into a single co‑optimization problem, where the low‑bit student simultaneously minimizes the KL divergence with the full‑precision teacher and the quantization discretization error in a self‑supervised manner, eliminating the need for labeled training data and extensive hyperparameter tuning to balance loss terms~\cite{zhao2024self}.

Other recent studies investigate refined KD strategies including the alignment of cross‑modal attention maps, contrastive representation matching, and hierarchical feature distillation~\cite{lee2025masking}, which have shown benefits in compressing VLMs while preserving multimodal reasoning abilities. However, these approaches typically operate on full‑precision models or treat quantization and distillation separately. They do not fully address the unique challenges posed by jointly optimizing quantization with knowledge transfer under strict capacity constraints, where limited representational capacity can fundamentally impede knowledge flow. In contrast, our work jointly optimizes distillation and quantization, using confidence‑based gating to selectively transfer knowledge suitable to low‑bit representations.

\subsection{Information Bottleneck}
The Information Bottleneck (IB) principle provides a robust information‐theoretic framework for learning compressed representations that retain task‐relevant information while discarding irrelevant noise and redundancy~\cite{tishby2000information,tishby2015deep}. Originally grounded in rate–distortion theory, the IB principle has been applied to representation learning in deep networks, showing how mutual information between inputs and learned features can be regulated to balance compression and predictive fidelity~\cite{kawaguchi2023does}. Variational Information Bottleneck (VIB) methods explicitly regularize mutual information via variational bounds, making them especially useful for learning compact representations and mitigating overfitting; such methods have been linked to compression and improved generalization in deep models~\cite{alemi2016deep}.

IB‐based approaches have also been considered in the context of quantized neural networks and model compression. For example, analyses of quantized networks using IB reveal how mutual information between layers and inputs/outputs changes when activations and weights are discretized~\cite{lorenzen2021information}. In addition, Bitwise Information Bottleneck methods have been proposed for activation quantization, where the most significant bits are selected under a limited code rate constraint by minimizing rate–distortion objectives, effectively treating quantization as an IB problem that balances compression with information preservation~\cite{zhou2020neural}. These perspectives align with quantization‐aware training (QAT) objectives, which aim to allocate limited bit budgets in a way that preserves task‐critical information while maintaining performance.

In our work, we extend this framework to enable quantized knowledge distillation by introducing an adaptive IB controller that dynamically adjusts the trade‐off between task‐specific losses and information compression. This approach allows robust quantized distillation across multimodal domains, where visual, linguistic, and relational knowledge must be preserved under stringent bit‐width constraints. Our method enables more efficient deployment of vision–language models (VLMs) while ensuring high performance on downstream tasks.

Furthermore, recent developments in IB analysis and representation learning have highlighted how mutual information measures can be used to understand compression behavior in deep networks and how lossy compression can be integrated with learning objectives to achieve both compact models and high task fidelity~\cite{butakov2023information}. By incorporating these insights into our quantization framework, we provide a principled way to retain relevant information for downstream tasks while minimizing memory and computational overhead.

\subsection{Multimodal Reasoning and Structured Visual Generation}

Recent multimodal research has extended beyond image-text understanding toward more challenging reasoning, editing, and structured generation tasks. For reasoning, MMErroR~\cite{shi2026mmerror} evaluates whether VLMs can detect and categorize erroneous reasoning processes, highlighting that current models still struggle with process-level multimodal understanding. In visual generation and editing, ChordEdit~\cite{lu2026chordedit} studies efficient one-step text-guided image editing through a low-energy transport formulation, while Render-in-the-Loop~\cite{liang2026render} introduces visual self-feedback for SVG generation by rendering intermediate outputs during generation. VAnim~\cite{liang2026vanim} further explores rendering-aware sparse state modeling for structure-preserving vector animation. These works show that modern multimodal systems increasingly rely on structured intermediate representations, visual feedback, and reliable reasoning processes.

Efficiency has been widely studied in visual models and multimodal systems. CABM~\cite{tian2023cabm} proposes content-aware bit mapping for large-input image super-resolution, showing that adaptive bit allocation can reduce computation while preserving reconstruction quality. Although these studies address different tasks from VLM compression, they motivate a broader need for efficient and reliable multimodal models. In contrast, our work focuses on low-bit VLM compression. GRACE aims to preserve task-relevant multimodal information under quantization constraints by combining confidence-gated distillation, relational visual-token alignment, and adaptive information-bottleneck control.

%%%%%%%%%%%%%%%%%%%%%%%%%%%%%%%%%%%%%%%%%%%%%%%%%%%%%%%%%%%%%%%%%%%%%%%%%%%%%%%
%%%%%%%%%%%%%%%%%%%%%%%%%%%%%%%%%%%%%%%%%%%%%%%%%%%%%%%%%%%%%%%%%%%%%%%%%%%%%%%

\end{document}